\newif\ifonlineapp
\newcommand{\red}{\color{red}}
 \renewcommand{\aa}{\mathbf{a}}
  \providecommand{\md}{\mathrm{d}}
  \providecommand{\vv}{\mathbf{v}}
  \providecommand{\ww}{\mathbf{w}}
  \providecommand{\xx}{\mathbf{x}}
  \providecommand{\yy}{\mathbf{y}}
  \providecommand{\zz}{\mathbf{z}}
  \providecommand{\e}{\mathrm{e}}
  \providecommand{\mE}{\mathbf{E}}
  \providecommand{\mS}{\mathbf{S}}
  \providecommand{\mV}{\mathbf{V}}
  \providecommand{\cE}{\mathcal{E}}
  \providecommand{\cG}{\mathcal{G}}
  \providecommand{\cN}{\mathcal{N}}
  \providecommand{\cO}{\mathcal{O}}
\providecommand{\bone}{\mathbbm{1}}
\def\leref#1{Lemma~\ref{#1}}
\def\figref#1{Fig.~\ref{#1}}
    \newtheorem{lemma}{Lemma}
    \newtheorem{theorem}{Theorem}
    \newtheorem{corollary}{Corollary}
    \newtheorem{definition}{Definition}
    \newtheorem{proof}{Proof}
\newcommand{\comment}[1]{}
\def\secref#1{Section~\ref{#1}}
\def\leref#1{Lemma~\ref{#1}}
\def\thref#1{Theorem~\ref{#1}}
\def\coref#1{Corollary~\ref{#1}}
\def\figref#1{Figure~\ref{#1}}
\def\appref#1{Appendix~\ref{#1}}
\def\asref#1{A\ref{#1}}
\def\pref#1{P\ref{#1}}
\newcommand{\R}{\mathbb{R}}
\newcommand{\abs}[1]{\left\lvert #1\right\rvert}
\newcommand{\norm}[1]{\left\lVert #1\right\rVert}
\DeclareMathOperator{\E}{\mathbb{E}}
\DeclareMathOperator{\argmin}{arg\,min}
\newcommand{\lin}[1]{\ensuremath \left\langle #1 \right\rangle}
\def\remark{\addtocounter{remark}{1}\def\@currentlabel{\theremark}%
\emph{Remark~\theremark}. } \makeatother
\newcounter{remark}
\newtheorem{assumption}{A}
\newtheorem{property}{P}
\title{Understanding A Class of Decentralized and Federated Optimization Algorithms: A Multi-Rate Feedback Control Perspective}
 \author{
   Xinwei Zhang, Mingyi Hong, Nicola Elia \thanks{ECE Department, University of Minnesota, \{zhan6234, mhong, nelia\}@umn.edu}.
}
\begin{document}
\maketitle

\begin{abstract}
    Distributed algorithms have been playing an increasingly important role in many applications such as machine learning, signal processing, and control. Significant research efforts have been devoted to developing and analyzing new algorithms for various applications. In this work, we provide a fresh perspective to understand, analyze, and design distributed optimization algorithms. Through the lens of multi-rate feedback control, we show that a wide class of distributed algorithms, including popular decentralized/federated schemes, can be viewed as discretizing a certain continuous-time feedback control system,  possibly with multiple sampling rates, such as decentralized gradient descent, gradient tracking, and federated averaging. This key observation not only allows us to develop a generic framework to analyze the convergence of the entire algorithm class. More importantly, it also leads to an interesting way of designing new distributed algorithms. We develop the theory behind our framework and provide examples to highlight how the framework can be used in practice.
\end{abstract}

\setlength{\abovedisplayskip}{2pt}
\setlength{\belowdisplayskip}{2pt}
\setlength{\abovedisplayshortskip}{2pt}
\setlength{\belowdisplayshortskip}{2pt}

\section{Introduction}\label{sec:intro}

Distributed computation has played an important role in popular applications such as machine learning, signal processing, and wireless communications, partly due to the dramatically increased size of the models and the datasets. In this paper, we consider a distributed system with $N$ agents connected by a graph $\cG =(\mV, \mE)$, each optimizing a smooth and possibility non-convex local function $f_i(x)$. The global optimization problem is formulated as \cite{wang2011control}
\begin{equation}\label{eq:problem}
    \begin{aligned}
        \min_{\xx\in \R^{Nd_x}} & \quad f(\xx) := \frac{1}{N}\sum^N_{i=1} f_i(x_i),\quad 
        \mbox{s.t.} & x_i = x_j ,\; \forall\; (i,j) \in \mE,
    \end{aligned}
\end{equation}
where $\xx\in\mathbb{R}^{N\times d_x}$ stacks $N$ local variables $\xx := [x_1;\dots;x_N]$;  $x_i\in \R^{d_x},~\forall~i\in[N]$.

 This problem has received much attention in recent years, see \cite{chang2020distributed,li2020federated} for a few recent surveys. {Heterogeneous computational and communication resources in the distributed system create a number of different scenarios in distributed learning. In specific, based on the application scenarios, we can roughly classify distributed optimization algorithms into  those that solve Decentralized Optimization (DO) problems, that solve Federated Learning (FL) problems, and those that can achieve optimal resource utilization (OPT). Some of the related works are discussed below.}

a) When solving the DO problems, the agents are typically modeled as nodes on a communication graph, and the communication and computation resources are equally important. So the algorithms alternatingly perform communication and communication steps. For instance, the Decentralized Gradient Descent (DGD) algorithm~\cite{nedic2009distributed,yuan2016convergence} extends gradient descent (GD) to the decentralized setting, where each agent performs one step of local gradient descent and local model average in each round. Other related algorithms such as the DLM~\cite{ling2015dlm}, the Decentralized Gradient Tracking (DGT)~\cite{yuan2020can} and the NEXT~\cite{di2016next} all utilize this kind of alternating updates.

b) The FL problems  typically consider the setting that the clients are directly connected to a parameter-server, and that the communication at the server is the bottleneck of the system. The FL algorithms, such as the well-known FedAvg \cite{bonawitz2019towards}, perform multiple local updates  before one communication step.
However, when the data  is {\it heterogeneous} among the agents, it is difficult for these algorithms to achieve  convergence~\cite{khaled2019first,li2019convergence}.
Recent algorithms such as the FedProx~\cite{li2018federated}, SCAFFOLD~\cite{karimireddy2020scaffold} and FedPD~\cite{zhang2020fedpd} have developed new techniques to improve upon FedAvg. 

c) There have been a number of recent algorithms which are designed to utilize the {\it minimum} computation and/or communication resources, while computing high-quality solutions. They typically perform multiple communication steps before one local update. For examples, in \cite{scaman2017optimal} a multi-step gossip protocol is used to achieve the optimal convergence rate in decentralized convex optimization; the xFilter~\cite{sun2019distributed}
is designed for decentralized non-convex problems, and it implements the Chebyshev filter on the communication graph, which requires multi-step communication, and achieves the optimal dependency on the graph spectrum.

Despite the proliferation of distributed algorithms, there are a few concerns and challenges.  First, for some  hot applications, there are simply {\it too many  algorithms} available, so much so that it becomes difficult to track all the technical details. 
Is it possible to establish some general guidelines to understand the relations between,  and the fundamental principles of, those algorithms that provide similar functionalities? 
Second, much of the recent research on this topic appears to be {\it increasingly focused} on a specific setting (e.g., those mentioned in the previous paragraph). However, an algorithm developed for FL may have already been rigorously developed, analyzed, and tested for the DO setting; and vice versa. Since developing algorithms and performing analyses take significant time and effort,  it is desirable to have some mechanisms in place to reduce the possibility of reinventing the wheel.

\subsection{Contribution of This Work} \label{sub:contribution} We argue that there is a strong demand for a framework of distributed optimization, which can help researchers  and practitioners  {\it understand} algorithm behaviors, {\it predict} algorithm performance, and {\it streamline} algorithm design.  This paper intends to provide such a framework, for a substantial sub-class of distributed algorithms, using tools from multi-rate feedback control systems.
We will first show that a customized continuous-time feedback control system is well-suited to model some key components (such as local computation, inter-agent communication) of distributed algorithms. We then show that when such a continuous-time system is discretized properly (i.e., different parts of the system adopt different sampling rates), it recovers a wide range of distributed optimization algorithms. Finally, we provide a generic convergence result that covers different feedback schemes and discretization patterns. The major benefits of our proposed framework are listed below:

\noindent {\bf 1)} One can easily establish connections between a few sub-classes of distributed algorithms that are developed for different settings. In some sense, they can be viewed as applying different discretization schemes to certain continuous-time control system.

\noindent  {\bf 2)} It helps predict the algorithm performance. On the one hand, once the continuous-time control system and the desired discretization pattern are identified, and some sufficient conditions set forth by our framework are satisfied, one can readily obtain various system parameters  as well as the convergence guarantees. On the other hand, if we found that an existing distributed algorithm performs poorly, it is likely because it does not fall into our framework (an example  is provided to show such a case).

\noindent {\bf 3)} It facilitates new algorithm design. Once the problem setting and the associated requirement are determined, one can start with selecting the desired controllers and feedback schemes for the continuous-time system, followed by finding the appropriate discretization patterns.  The performance of the new algorithm can be again readily obtained from our framework (as discussed in the previous point).

 Note that there are many existing works which analyze optimization algorithms using control theory, but they mainly focus on some very special class of algorithms. For examples, \cite{rossi2006gradient} studies continuous-time gradient flow for convex problems; \cite{wang2011control,sundararajan2021analysis} study continuous-time first-order convex optimization algorithms; \cite{lessard2016analysis,hu2017control,muehlebach2019dynamical} investigate the acceleration approaches including Nesterov and Heavy-ball momentum methods for centralized problems in discrete time and interpret them as discrete-time controllers; \cite{wang2011control,muehlebach2019dynamical} focus on the continuous-time system and ignore the impact of the discretization; \cite{swenson2021distributed,francca2018dynamical,swenson2019distributed} investigate the connection between continuous-time system and discretized gradient descent algorithm, but their approaches and analyses do not generalize to other federated/decentralized algorithms. Further, to our knowledge, none of the above referred works provide insights about relationship between  sub-classes of distributed algorithms (e.g., between DO and FL).

\noindent{\bf Notations, Assumptions.} We introduce some useful assumptions and notations.

First, let $\otimes$ denote the Kronecker product. the incidence matrix $A$ of a graph $\cG$ is defined as: if edge $e(i,j) \in \mE$ connects vertex $i$ and $j$ with $i>j$, then $A_{ei} = 1$, $A_{ej} = -1$ and $A_{ek} = 0,~ \forall k \neq i,j$. Let us use $\cN_i\subset [N]$ to denote the neighbors for agent $i$. For a symmetric matrix $X$, let us use $\lambda(X)$ to denote its eigenvalues. 
Then we can write the constraint of \eqref{eq:problem} in a more compact form:
\begin{align*}
        \min_{\xx\in \R^{Nd_x}} & \quad f(\xx) := \frac{1}{N}\sum^N_{i=1} f_i(x_i),\quad 
        \mbox{s.t.} \quad  (A\otimes I)\cdot \xx = 0.
\end{align*}
For simplicity of notation, the Kronecker products are ignored in the subsequent discussion, e.g., we use $A\xx$ in place of $(A\otimes I)\cdot \xx$. Define the averaging matrix $R := \frac{\bone\bone^T}{N}$ and the average of $x_i$'s as $\bar{\xx} := \frac{\bone^T}{N}\xx = \frac{1}{N}\sum^N_{i=1}x_i.$ Note, we have $R^2 =R$. The consensus error can be written as $[x_1-\bar{\xx}, \dots, x_N-\bar{\xx}] = (I - R)\xx,$ and we have $\nabla f(\bar{\xx}) = \frac{1}{N}\sum^N_{i=1}\nabla f_i(\bar{\xx}).$  The stationary solution of \eqref{eq:problem} is defined as follows:

\begin{definition}[First-order Stationary Point]\label{def:stationary} 
We define the first-order stationary solution and the $\epsilon$-stationary solution respectively, as: 
{\small
\begin{subequations}
\begin{align}
    &\sum^N_{i=1}\nabla f_i\bigg(\frac{1}{N}\sum^N_{i=1}x_i\bigg) = 0, \quad \xx - \frac{\bone\bone^T}{N}\xx = 0, \label{eq:stationarity}\\
    &\norm{\frac{1}{N}\sum^N_{i=1}\nabla f_i\bigg(\frac{1}{N}\sum^N_{i=1}x_i\bigg)}^2+\norm{\xx - \frac{\bone\bone^T}{N}\xx}^2 \leq \epsilon. \label{eq:gap}
\end{align}
\end{subequations}}%
We refer to the left hand side (LHS) of \eqref{eq:gap} as the stationarity gap of \eqref{eq:problem}. 
\end{definition}

We will make the following   assumptions on problem \eqref{eq:problem} throughout the paper:
\begin{assumption}[Graph Connectivity]\label{as:connect}
    The graph is fixed, and strongly connected  at all time $t \in [0,\infty)$, i.e.
    $0$ is a simple eigenvalue of $A^TA$, with corresponding eigenvector $\frac{\bone}{\sqrt{N}}$.
\end{assumption}

{This assumption can be extended to time-varying graphs (denoted as $A(t)$'s), as they can be treated as sub-sampling on a strongly-connected graph $A = \bigcup_t A(t)$. However, to stay focused on the main point of the paper (e.g., build connection of different algorithms from the control perspective) and to reduce notation, we choose to consider the simple static graph $A(t) = A,\;\forall\;t \in [0,\infty)$ in this work.}

Since the agents are connected by a fixed communication graph, we can further define the averaging matrix of the communication graph as $W := I - A^T\rm{diag}(\ww)A$, where $\ww$ is a vector each of whose entries $\ww[e(i,j)]$ is positive, and it corresponds to the weight of edge $e(i,j)$. It is easy to check that $W$ has the following properties: 
\begin{align}\label{eq:weight:property}
  W = W^T, \; \bone^T W = \bone^T, \; W_{ij} \geq 0,\quad\forall e(i,j)\in \mE.  
\end{align}

\begin{assumption}[Lipschitz gradient]\label{as:smooth}
    The $f_i$'s have Lipschitz gradient with constant $L_f$:
    \[\norm{\nabla f_i(x) - \nabla f_i(y)} \leq L_f\norm{x-y}, \quad\forall~x,y\in\R^{d_x},\forall~i\in[N].\]
\end{assumption}

\begin{assumption}[Lower bounded functions]\label{as:lower_bounded}
    Each $f_i$ is lower bounded as:
    \[f_i(x)\geq \underline{f}_i>-\infty, \quad\forall x\in\R^{d_x}, \quad \forall i \in [N].\]
\end{assumption}
\begin{assumption}[Coercive functions]\label{as:coercive}
    Each $f_i$ approaches infinity as $\norm{x}$ approaches infinity:
    \[f_i(x)\rightarrow \infty, \; \text{\rm as}\; \norm{x}\rightarrow \infty,\quad \forall i \in [N].\]
\end{assumption}
\asref{as:lower_bounded} and \asref{as:coercive} imply that there exists at least one globally optimal solution $\xx^{\star}$ for problem \eqref{eq:problem}. Let us denote the corresponding optimal objective as $f^\star:=f(\xx^\star)$.
\section{Continuous-time System}\label{SEC:CONTINUOUS}

We present a continuous-time feedback control system. {We will provide a number of key properties of the controllers and the entire system, to ensure that 
{the system converges to the set of first-order stationary points} with guaranteed speed.
These properties will be instrumental when we subsequently analyze discretized version of the system (hence, various distributed algorithms).}

\begin{minipage}{0.95\linewidth}
      \begin{minipage}{0.49\linewidth}
          \begin{figure}[H]
              \includegraphics[width=\linewidth]{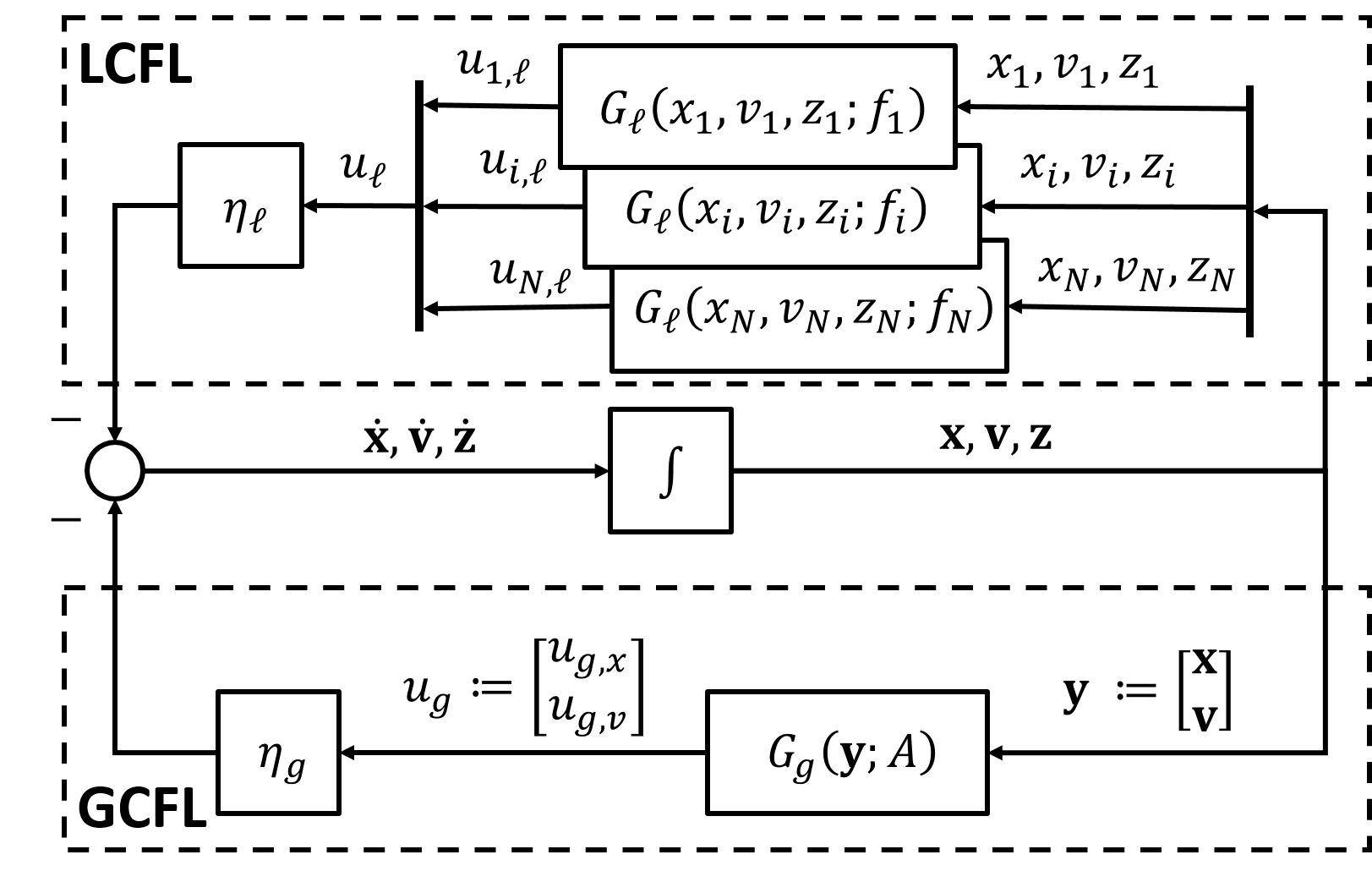}
              \caption{\small The proposed continuous-time double-feedback system for modeling the decentralized optimization problem \eqref{eq:problem}. The system dynamics are given in \eqref{eq:continuous-system}.}\label{fig:continuous-system}
          \end{figure}
      \end{minipage}
      \hfill
      \begin{minipage}{0.49\linewidth}
          \begin{figure}[H]
              \includegraphics[width=\linewidth]{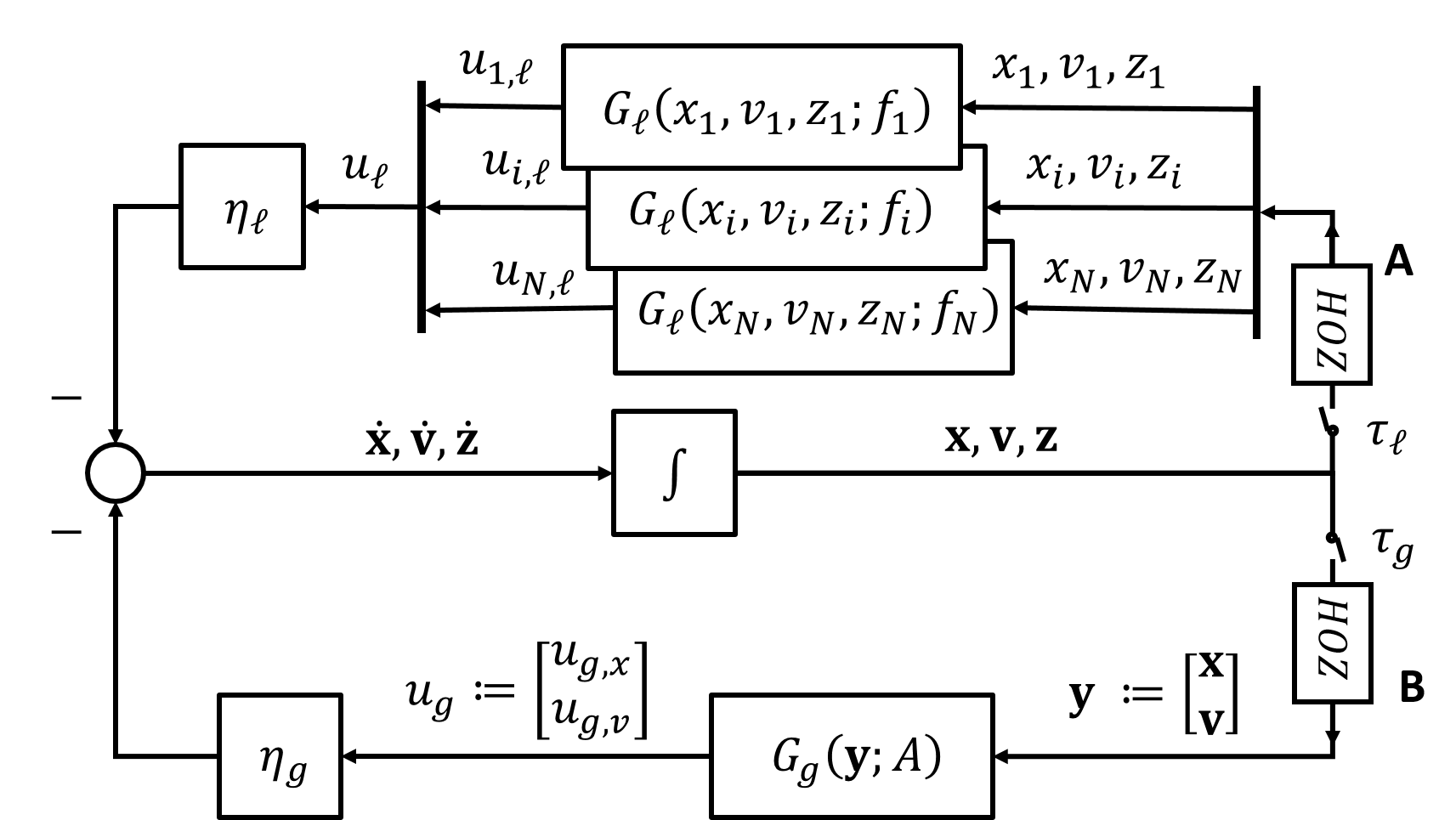}
              \caption{\small Discretized system using ZOH on both the GCFL and LCFL control loops with possibly different sampling times $\tau_g,\tau_\ell$. The system dynamics are given in \eqref{eq:discrete_case_1}-\eqref{eq:discrete_case_3}}\label{fig:CaseI}
          \end{figure}
      \end{minipage}
  \end{minipage}

\subsection{System Description}\label{sec:CTS}
To optimize problem \eqref{eq:problem}, our approach is to design a continuous-time feedback control system, such that the {state variables belong to the set of stationary points of the system if and only if they correspond} to a stationary solution of \eqref{eq:problem}.
Towards this end, define $\xx\in\mathbb{R}^{Nd_x}$ as the main state variable of the system; introduce the {\it global consensus feedback loop} (GCFL) and {\it local computation feedback loop} (LCFL), where the former incorporates the dynamics from multi-agent interactions {and pushes $\xx$ to consensus}, while the latter helps stabilize the system {and finds the stationary solution}.  Specifically, these loops are defined as below:

\noindent\textbullet~{\bf (The GCFL).} 
Define an auxiliary state variable $\vv := [v_1;\dots; v_N]\in\mathbb{R}^{N d_v}$, with $v_i\in\R^{d_v}, \; \forall~i$; define $\yy := [\xx; \vv]\in\mathbb{R}^{N(d_x+d_v)}$; define a feedback controller $G_g(\cdot; A):\mathbb{R}^{N(d_x+d_v)}\to \mathbb{R}^{N(d_x+d_v)}$. Then the GCFL uses $G_g(\cdot; A)$ to operate on $\yy$, to ensure the agents remain coordinated, and their local control variables remain close to consensus;

\noindent \textbullet~{\bf (The LCFL).}  Define an auxiliary state variable $\zz := [z_1;\dots; z_N]\in \mathbb{R}^{N d_z}$, with $z_i\in\R^{d_z}, \; \forall~i$; define a set of feedback controller $G_\ell(\cdot; f_i):\mathbb{R}^{d_x+d_v+d_z}\to \mathbb{R}^{d_x+d_v+d_z}$, one for each agent $i$. Then each agent will use LCFL to operate on its local state variables $x_i$, $z_i$ and $v_i$, to ensure that its local system can be stabilized.

The overall system is described in Fig. \ref{fig:continuous-system}. The detailed description of properties of different controllers, as well as the notations used, will be given in the next sections.

To have a rough idea of how these loops can be mapped to a distributed algorithm, let us consider the PI distributed optimization algorithm~\cite{droge2014continuous}, whose updates are:
\begin{align*}
\begin{split}
    \dot{\xx} &= -k_G \nabla f(\xx) - k_P\cdot(I-W)\cdot \xx -k_Pk_I \vv,\\
    \dot{\vv} &= k_Pk_I\cdot (I-W) \xx.
\end{split}
\end{align*}
The corresponding controllers are given by:{\small
\begin{equation*}
    G_{g}(\xx,\vv; A) := \left[\begin{array}{c}
         (I-W)\cdot\xx + k_I \vv\\
        -k_I\cdot (I-W)\cdot\xx
    \end{array}\right],
    \quad
    G_{\ell}(x_i,v_i,z_i; f_i) :=  \left[\begin{array}{c}
        \nabla f_i(x_i)\\
        0\\
        0\\
    \end{array}\right],
\end{equation*}}%
with $\eta_\ell = k_G$ and $\eta_g = k_P$. Note that auxiliary state variable $\zz$ has not been used in this algorithm.

Next, we describe in detail the properties of the two feedback loops.

\subsection{Global Consensus Feedback Loop}
The GCFL performs inter-agent communication based on the incidence matrix $A$, and it controls the consensus of the global variable $\yy := [\xx;\vv]$.
Specifically, at time $t$, {define the output of the controller as $u_g(t) = G_g(\yy(t);A)$, which can be further decomposed into two outputs $u_g(t) : = [u_{g,x}(t);u_{g,v}(t)],$ one to control the consensus of $\xx$ and the other for $\vv$.}
After multiplied by the control gain $\eta_g(t)>0$, the resulting signal will be combined with the output of the LCFL, and be fed back to local controllers. 

We require that the global controller  $G_g(\cdot; A)$ to have the following  properties:
\begin{property}[Control Signal Direction]\label{as:gg_decrease}
    The output of the controller $G_g$ aligns with the direction that reduces the consensus error, that is:
    \[\lin{(I-R)\cdot \yy, G_g(\yy;A)}\geq C_g \cdot \norm{(I-R)\cdot \yy}^2, \quad  \forall\; \yy,\]
    for some constant $C_g>0$.
    Further, the controller $G_g$ satisfies:
    \[\langle \bone, G_g(\yy;A)\rangle = 0, \quad \forall\; \yy, {~~\mbox{\rm which implies}~~ \langle \bone, u_g(t)\rangle =0,\quad \forall~t.}\]
\end{property}

\begin{property}[Linear Operator]\label{as:gg_linear}
    The controller $G_g$ is a linear operator of $\yy$, that is, we have $G_g(\yy;A) = W_A \yy$ for some matrix $W_A\in \R^{N(d_x+d_v)}$ parameterized by $A$, and its eigenvalues satisfy: $\abs{\lambda(W_A)} \in [0,1]$.
\end{property}
Combining \pref{as:gg_decrease} and \pref{as:gg_linear}, we have
$\lin{\bone, W_A} =0$, which indicates $R\cdot W_A = 0$ and the eigenvectors of $W_A$ are orthogonal to the ones of $R$. Further we have
\begin{align*}
    \norm{(I-R)\yy}^2 -\norm{G_g(\yy;A)}^2 &= \yy^T((I-R)^2 - W_A^2) \yy\\
    & = \yy^T\left(I-2R+R -W_A^2\right)\yy  = \yy^T(I - (R+W_A^2))\yy.
\end{align*}
Notice the eigenvectors of $R$ and $W_A$ are orthogonal and all eigenvalues are in $[0,1]$, so we have matrix $I-(R+W_A^2)\succeq 0$. Thus $\yy^T(I - (R+W_A^2))\yy \geq 0$ and $\norm{(I-R)\yy}^2 \geq \norm{G_g(\yy;A)}^2$.
Therefore, we have:
\begin{equation}\label{eq:gg_bound}
    C^2_g\norm{(I-R)\cdot \yy}^2\leq\norm{G_{g}(\yy;A)}^2 \leq \norm{(I-R)\cdot \yy}^2, \;\;\mbox{and}\;\;  R \cdot W_A =0.
\end{equation}%

It is easy to check that both \pref{as:gg_decrease} and \pref{as:gg_linear} hold in most of the existing consensus-based algorithms. For example, when the communication graph is strongly connected, we can choose
$G_g(\yy; A) = (I - W)\cdot \yy$.
It is easy to verify that,  $C_g= 1 - \lambda_2(W)$ where $\lambda_2(\cdot)$ denotes the eigenvalue withe the second largest magnitude~\cite{yuan2016convergence,chang2020distributed}. As another example, consider the accelerated averaging algorithms~\cite{ghadimi2011accelerated}, where we have 
\begin{align}
    &G_g(\yy, A) = \left[\begin{array}{cc}
    I - (c+1)\cdot W & c\cdot I\\
    -I & I
\end{array}\right] \left[\begin{array}{c}
    \xx\\
    \vv
\end{array}\right], \;
\mbox{with} \; c := \frac{1-\sqrt{1-\lambda_2(W)}}{1+\sqrt{1-\lambda_2(W)^2}}\nonumber.
\end{align}
In this case, one can verify that $C_g = 1-\frac{\lambda_2(W)}{1+\sqrt{1-\lambda_2(W)^2}} \geq 1-\lambda_2(W).$

By using \pref{as:gg_decrease}, we can follow the general analysis of averaging systems~\cite{olshevsky2009convergence}, and show that the GCFL will behave {\it as expected},  that is, if the system {\it only} performs GCFL and shuts off the LCFL, then the consensus can be achieved. More precisely, assuming that $\eta_\ell(t) = 0, \eta_g(t) = 1$, then under \pref{as:gg_decrease}, the local state  $\yy$ converges to the average of the initial states linearly:
    \begin{equation}\label{eq:gg_convergence}
        \norm{(I-R)\cdot \yy(t)}^2 \leq \e^{-2C_g t}\norm{(I-R)\cdot \yy(0)}^2. 
    \end{equation}
For completeness we include the derivation in the supplementary Sec. \ref{app:proof:gg}.

\subsection{The Local Computation Feedback Loop}

The LCFL optimizes the local function $f_i(\cdot)$'s for each agent. At time $t$, the $i$th local controller takes the local variables $x_i(t), v_i(t), z_i(t)$ as inputs and produces a local control signal. To describe the system, let us denote the output of the local controllers as $u_{i,\ell}(t) = G_{\ell}(x_i(t), v_i(t), z_i(t);f_i), \;\forall~i \in [N]$; further decompose it into three parts:
\[u_{i,\ell}(t) := [u_{i,\ell,x}(t); u_{i,\ell,v}(t); u_{i,\ell,z}(t)].\]
Denote the concatenated local controller outputs as:
$ u_{\ell,x}(t) := [u_{1,\ell,x}(t);\dots;u_{N,\ell,x}(t)]$,
and define $u_{\ell,v}(t),  u_{\ell,z}(t)$ similarly. Note that we have assumed that all the agents use the same local controller $G_{\ell}(\cdot; \cdot)$, but they are parameterized by different $f_i$'s. 
After multiplied by the control gain $\eta_\ell(t)>0$, the resulting signal will be combined with the output of GCFL, and be fed back to the local controllers. 

The local controllers are designed to have the following properties: 
\begin{property}[Lipschitz Smoothness]\label{as:gl_smooth} The controller is Lipschitz continuous, that is:
    \begin{align*}
        &\norm{G_{\ell}(x_i,v_i,z_i;f_i)- G_{\ell}(x'_i,v'_i,z'_i;f_i)}\leq L \norm{[x_i;v_i;z_i]-[x'_i;v'_i;z'_i]}, \\
        &\forall~i\in[N],\; x_i,x'_i\in\R^{d_x},\; v_i,v'_i\in\R^{d_v},\; z_i,z'_i\in\R^{d_z}.
    \end{align*}
\end{property}
\begin{property}[Control Signal Direction and Size]\label{as:gl_decrease}
    {The local controllers are designed such that there exist initial values $x_i(t_0)$, $v_i(t_0)$ and $z_i(t_0)$ ensuring} 
    that the following holds:
    \[\lin{\nabla f_i(x_i(t)), u_{i,\ell,x}(t)} \geq \alpha(t)\cdot \norm{\nabla f_i(x_i(t))}^2, \quad \forall\; t\geq t_0,\]
    where $\alpha(t)>0$ satisfies $\lim_{t\rightarrow \infty}\int^t_{t_0}\alpha(\tau)\md \tau \rightarrow \infty$.

    Further, for any given $x_i$, $v_i$, $z_i$, the sizes of the control signals are upper bounded by those of the local gradients. That is, for some positive constants $C_x$, $C_v$ and $C_z$:
    \[\norm{u_{i,\ell,x}}\leq C_x\norm{\nabla f_i(x_i)},\;\; \norm{u_{i,\ell,v}}\leq C_v\norm{\nabla f_i(x_i)},\;\; \norm{u_{i,\ell,z}}\leq C_z\norm{\nabla f_i(x_i)}.\]
\end{property}
Let us comment on these properties. \pref{as:gl_smooth} is easy to verify for a given realizations of the local controllers; \pref{as:gl_decrease} abstracts the convergence property of the local optimizer. This property implies that the update direction $-u_{i,\ell,x}(t)$ points to a direction that decreases the local objective. {Note that it is postulated that $x_i, v_i$ and $z_i$ are initialized properly, because in some of the cases, improper initial values lead to non-convergence of the local controllers (or equivalently, the local algorithm). For example, for accelerated gradient descent method~\cite{bubeck2015geometric,ye2020multi}, $z_i(t_0)$ should be initialized as $\nabla f_i(x_i(t_0))$.}

By using \pref{as:gl_decrease}, we can follow the general analysis of the gradient flow algorithms (e.g., {~\cite{orvieto2019continuous}}), and show that the LCFL will behave {\it as expected}, in the sense that the  agents can properly optimize their local problems. More precisely, assume that $\eta_g(t) =0, \eta_\ell(t) = 1$, that is, the system shuts off the GCFL.
   {Assume that $G_{\ell}(\cdot;\cdot)$ satisfies \pref{as:gl_decrease}}, then each local system  produces $x_i(t)$'s that satisfy:
    \begin{equation}\label{eq:gl_convergence}
        \min_\tau\norm{\nabla f_i(x_i(t+\tau))}^2\leq \gamma(\tau)\cdot(f_i(x_i(t)) - \underline{f}_i),
    \end{equation}
where $\{\gamma(\tau)\}$ is a sequence of positive constants satisfying: 
    \begin{align}
        \gamma(\tau) = \frac{1}{\int^t_{0}\alpha(\tau)\md \tau}\rightarrow 0, \quad \mbox{as } \tau \rightarrow \infty. 
    \end{align}
 We include the proof of the above result in the supplementary Sec. \ref{app:proof:gl}.

To close this subsection, we note that the continuous-time system we have presented so far (cf.~\figref{fig:continuous-system}) can be described using the following dynamics:
\begin{align}
    \dot{\vv}(t) &= -\eta_g(t)\cdot u_{g,v}(t) - \eta_\ell(t) \cdot u_{\ell,v}(t)\nonumber\\
    \dot{\xx}(t) &= -\eta_g(t)\cdot u_{g,x}(t) - \eta_\ell(t) \cdot u_{\ell,x}(t), \quad \dot{\zz}(t) = -\eta_\ell(t) \cdot u_{\ell,z}(t).\label{eq:continuous-system}
\end{align}%
Additionally, throughout the paper, we will use  $u_g$ and $G_g$, $u_\ell$ and $G_{\ell}$ interchangeably. 

\subsection{Convergence Properties}
We proceed to analyze the convergence of the continuous-time system. Towards this end, we define an energy-like function: 
\begin{align}\label{eq:energy}
\cE(t) := f(\bar{\xx}(t)) - f^\star + \frac{1}{2}\norm{(I-R)\cdot \yy(t)}^2.
\end{align}
Note that $\cE(t)\geq 0$ for all $t\ge 0$.
It follows that its derivative can be expressed as: {\small
     \begin{align}\label{eq:E:derivative}
     \dot{\cE}(t) = -\lin{\nabla f(\bar{\xx}(t), \eta_{\ell}(t)\cdot\frac{\bone^T}{N}u_{\ell,x}(t)} + \lin{(I-R)\cdot\yy(t), \eta_g(t)u_{g}(t)+ \eta_\ell(t) u_{\ell,y}(t)}. 
   \end{align}}%
In the following, we study the convergence of ${\cal E}(t)$ and characterize the set of stationary points that the states satisfy $\dot{{\cal E}}(t)=0$. We do not attempt to analyze the stronger property of {\it stability}, not only because such kind of analysis can be challenging due to the non-convexity of the local functions $f_i(\cdot)$'s, but more importantly, analyzing the convergence of ${\cal E}(t)$ is already sufficient for us to understand the convergence of the state variable $\xx$ to the set of stationary solutions of problem \eqref{eq:problem}, as we will show shortly.

To proceed, we require that the system satisfies the following property:
\begin{property}[Energy Function Reduction]\label{as:energy}
The derivative of the energy function, $\dot{\cE}(\cdot)$ as expressed in \eqref{eq:E:derivative}, satisfies the following: {\small
     \begin{align}\label{eq:E:dynamic}
     &{-\int^t_{0}\left(\lin{\nabla f(\bar{\xx}(\tau), \eta_{\ell}(\tau)\cdot\frac{\bone^T}{N}u_{\ell,x}(\tau)} + \lin{(I-R)\cdot\yy(\tau), \eta_g(\tau)u_{g}(\tau)+ \eta_\ell(\tau) u_{\ell,y}(\tau)}\right)\md \tau\nonumber}\\
      &\leq -\int_{0}^{t}\left(\gamma_1(\tau)\cdot  \bigg\|\nabla f(\bar{\xx}(\tau))\bigg\|^2 + \gamma_2(\tau) \cdot \norm{(I - R)\cdot \yy(\tau)}^2\right) d\tau,
   \end{align}
}%
where $\gamma_1(\tau), \gamma_2(\tau) >0$ are some time-dependent coefficients.

\end{property}
\pref{as:energy} is a property about the entire continuous-time system. {Although one could show that by using \pref{as:gg_decrease} - \pref{as:gl_decrease}, and by selecting $\eta_g(t)$ and $\eta_\ell(t)$ appropriately, this property can be satisfied with some {\it specific} $\gamma_1(\tau)$ and $\gamma_2(\tau)$ (cf. Corollary \ref{cor:continuous}.), here we still list it as an independent property, because at this point we want to keep the choice of $\gamma_1(\tau)$, $\gamma_2(\tau)$ general; please see Sec. \ref{sub:summary} for more detailed discussion.} 

Next, we will show that under \pref{as:energy}, the continuous-time system will converge to {the set of stationary points}, and that $\xx$ will converge to the set of stationary solutions of problem \eqref{eq:problem}.
\begin{theorem}\label{theorm:energy:continuous}
    Suppose \pref{as:energy} holds true. Then we have the following results:
    
   \noindent {\bf 1)} {Further, suppose that \pref{as:gg_decrease}, \pref{as:gg_linear} and \pref{as:gl_decrease} hold, then $\dot{\cE} = 0$ implies that the corresponding state variable $\xx_s$ is bounded, and the following holds:
    \begin{align}
        \dot{\xx}_s = 0, \quad \dot{\vv}_s = 0,\quad \dot{\zz}_s = 0, \quad u_g = 0,\quad u_\ell = 0.
    \end{align}
    Additionally, let us define the set $\mS$ as below:
     \[ \mS:=\left\{\vv, \zz \left| \begin{array}{l}
            \eta_\ell u_{\ell,v}+ \eta_gu_{g,v} = 0, \;u_{\ell,z} = 0,\; \eta_\ell u_{\ell,x}+\eta_g u_{g,x} = 0\\
        \end{array}\right.\right\}.
    \]
    If we assume that $\mS$ is compact for any state variable $\xx$ that satisfies the stationarity condition \eqref{eq:stationarity},
    then the auxiliary state variables $\{\vv(t)\}$ and $\{\zz(t)\}$ are also bounded. 
    
    \noindent {\bf 2)} The control system asymptotically {converges to the set of stationary points}, in that $\xx(t)$ is bounded $\forall t\in[0,\infty)$, and {$\dot{\cE}\to 0$}. Further, the stationary gap  \eqref{eq:gap} can be upper bounded by the following:} {\small\begin{align}\label{eq:ct_decrease}
         &\min_t\bigg\{\norm{\nabla f(\bar{\xx}(t))}^2 +\norm{(I - R)\cdot \yy(t)}^2\bigg\}= \cO\left(\max\left\{\frac{1}{\int^T_{0}\gamma_1(\tau)\md \tau}, \frac{1}{\int^T_{0}\gamma_2(\tau)\md \tau}\right\}\right).
    \end{align}}%
\end{theorem}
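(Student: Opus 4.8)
The backbone of the argument is the energy function $\cE(t)$ of \eqref{eq:energy} together with \pref{as:energy}; I would organize everything around it, which means handling the two items roughly in reverse order. Since $\cE(t)=\cE(0)+\int_0^t\dot\cE(\tau)\md \tau$ and $\dot\cE$ is the quantity appearing in \eqref{eq:E:derivative}, \pref{as:energy} gives, for every $T\ge 0$,
\begin{equation*}
  0\le \cE(T)\le \cE(0)-\int_0^T\!\Big(\gamma_1(\tau)\,\|\nabla f(\bar{\xx}(\tau))\|^2+\gamma_2(\tau)\,\|(I-R)\cdot\yy(\tau)\|^2\Big)\md \tau .
\end{equation*}
This single display yields the two facts item~2 needs: (i) $\cE(T)\le\cE(0)$ for all $T$, hence $f(\bar{\xx}(t))\le f^\star+\cE(0)$ and $\|(I-R)\cdot\yy(t)\|^2\le 2\cE(0)$ uniformly in $t$; and (ii) $\int_0^\infty(\gamma_1\|\nabla f(\bar{\xx})\|^2+\gamma_2\|(I-R)\cdot\yy\|^2)\md \tau\le\cE(0)<\infty$.

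From (i), coercivity and lower-boundedness of the $f_i$'s (\asref{as:coercive}, \asref{as:lower_bounded}) make $f$ coercive, so the uniform bound on $f(\bar{\xx}(t))$ confines $\bar{\xx}(t)$ to a compact set; adding the bounded consensus error $(I-R)\xx(t)$ shows $\xx(t)$ is bounded on $[0,\infty)$, and then \asref{as:smooth} makes each $\nabla f_i(x_i(t))$ bounded. For the rate, write $a(t):=\|\nabla f(\bar{\xx}(t))\|^2$, $b(t):=\|(I-R)\cdot\yy(t)\|^2$ and bound the minimum over $[0,T]$ by a weighted average with weight $\min\{\gamma_1,\gamma_2\}$:
\begin{equation*}
  \min_{t\le T}\{a(t)+b(t)\}\;\le\;\frac{\int_0^T\min\{\gamma_1(\tau),\gamma_2(\tau)\}\big(a(\tau)+b(\tau)\big)\md \tau}{\int_0^T\min\{\gamma_1(\tau),\gamma_2(\tau)\}\md \tau}\;\le\;\frac{\cE(0)}{\int_0^T\min\{\gamma_1(\tau),\gamma_2(\tau)\}\md \tau},
\end{equation*}
using $\min\{\gamma_1,\gamma_2\}(a+b)\le\gamma_1 a+\gamma_2 b$ and (ii). Since $1/\min\{x,y\}=\max\{1/x,1/y\}$ and, in the regimes of interest (in particular for the constant $\gamma_1,\gamma_2$ produced by Corollary~\ref{cor:continuous}), $\int_0^T\min\{\gamma_1,\gamma_2\}$ is of the same order as $\min\{\int_0^T\gamma_1,\int_0^T\gamma_2\}$, the right-hand side is exactly \eqref{eq:ct_decrease}. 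Finally, $\dot\cE\to 0$ follows from (ii): if $\gamma_1,\gamma_2$ are bounded below then $\int_0^\infty(a+b)<\infty$; along the trajectory $\dot{\bar{\xx}}$ and $\tfrac{d}{dt}(I-R)\yy$ are bounded (the right-hand sides of \eqref{eq:continuous-system} are bounded, since the gradients are bounded and $\|u_g\|\le\|(I-R)\cdot\yy\|$ by \eqref{eq:gg_bound}), so $a+b$ is uniformly continuous and Barbalat's lemma gives $a(t)+b(t)\to 0$; substituting back into \eqref{eq:E:derivative} and using the uniform bound on the $\nabla f_i(x_i)$ gives $\dot\cE(t)\to 0$.

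For item~1 I would first upgrade \pref{as:energy} to a pointwise inequality under the extra \pref{as:gg_decrease}--\pref{as:gl_decrease}: decomposing $\dot\cE$ via \eqref{eq:E:derivative}, the GCFL term is $\le -\eta_g C_g\|(I-R)\cdot\yy\|^2$ by \pref{as:gg_decrease}, while the LCFL terms are rewritten using $\|u_{i,\ell,\cdot}\|\le C_\cdot\|\nabla f_i(x_i)\|$ and $\|\nabla f_i(x_i)-\nabla f_i(\bar{\xx})\|\le L_f\|x_i-\bar{\xx}\|$, so the consensus-error cross terms trade against $\|(I-R)\cdot\yy\|^2$ and $\|\nabla f(\bar{\xx})\|^2$ — absorbed by taking $\eta_\ell$ small relative to $\eta_g$, which is precisely Corollary~\ref{cor:continuous} — yielding $\dot\cE(t)\le -c_1\|\nabla f(\bar{\xx}(t))\|^2-c_2\|(I-R)\cdot\yy(t)\|^2$. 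Then $\dot\cE=0$ forces $\nabla f(\bar{\xx}_s)=0$ and $(I-R)\yy_s=0$, so $\xx_s$ and $\vv_s$ are consensual (and bounded, by the coercivity argument above); since $W_A$ maps the consensus subspace to $0$ (by \pref{as:gg_linear}, as $R W_A=0$), $u_g=W_A\yy_s=0$; substituting $(I-R)\yy_s=0$ and $u_g=0$ into \eqref{eq:continuous-system} on the largest invariant subset of $\{\dot\cE=0\}$ (a LaSalle-type argument, legitimate because the trajectory is bounded) forces $u_\ell=0$ and $\dot\xx_s=\dot\vv_s=\dot\zz_s=0$; and boundedness of $\{\vv(t)\},\{\zz(t)\}$ follows because the trajectory approaches this limit set, contained in $\mS$, assumed compact over the compact set of stationary $\xx$.

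The hard part is item~1, not the rate. Two points need care. First, the consensus-error cross terms in $\dot\cE$ must be tracked precisely enough to land the pointwise dissipation inequality, which is where the interplay among $\eta_g,\eta_\ell,C_g,L_f,C_x,C_v$ enters. Second, and more essentially, $\cE$ controls only $\bar{\xx}$ and the consensus errors and says nothing about the auxiliary states $\vv,\zz$; their boundedness is exactly what the compactness hypothesis on $\mS$ buys, and converting the finite-energy statement (ii) into convergence of the \emph{trajectory} to $\{\dot\cE=0\}$ (not merely convergence of $\cE$) rests on the Barbalat/LaSalle argument, which in turn rests on the uniform continuity established from the boundedness of $\xx$ and of the controller outputs.
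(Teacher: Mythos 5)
Your proposal is correct and follows the same skeleton as the paper's proof: integrate \pref{as:energy} against the nonnegative energy function \eqref{eq:energy} to get both the telescoping bound $\int_0^T(\gamma_1\|\nabla f(\bar\xx)\|^2+\gamma_2\|(I-R)\yy\|^2)\md\tau\le\cE(0)$ and the rate \eqref{eq:ct_decrease}, characterize the set $\{\dot\cE=0\}$ via \pref{as:gg_decrease}, \pref{as:gg_linear}, \pref{as:gl_decrease}, and invoke coercivity and the compactness of $\mS$ for boundedness. The differences are in the sub-arguments, and in two places you are actually more careful than the paper. For $\dot\cE\to 0$ the paper argues only that $\int_0^t\dot\cE\le 0$ and $\cE\ge 0$, which by itself does not give pointwise decay of the derivative; your Barbalat route (finite energy integral plus uniform continuity of $\|\nabla f(\bar\xx)\|^2+\|(I-R)\yy\|^2$ from boundedness of the trajectory and of the controller outputs) is the rigorous version of that claim. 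Similarly, you flag honestly that $\int_0^T\min\{\gamma_1,\gamma_2\}$ and $\min\{\int_0^T\gamma_1,\int_0^T\gamma_2\}$ agree only up to constants in the regimes of interest, whereas the paper simply divides by one integral or the other. Conversely, for part 1 you take a longer route than necessary: the paper reads \pref{as:energy} pointwise at a state with $\dot\cE=0$ to get $\nabla f(\bar\xx_s)=0$ and $(I-R)\yy_s=0$ directly, then kills $u_\ell$ immediately via the norm bounds $\|u_{i,\ell,\cdot}\|\le C_\cdot\|\nabla f_i(x_i)\|$ of \pref{as:gl_decrease} and $u_g$ via $R W_A=0$, with no LaSalle argument and no need to re-derive a pointwise dissipation inequality from \pref{as:gg_decrease}--\pref{as:gl_decrease} (your re-derivation also quietly imports the gain choices of Corollary~\ref{cor:continuous}, which the theorem does not assume). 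Both readings reflect a genuine ambiguity in how \pref{as:energy} is stated (integrated) versus how it is used (pointwise), so this is a presentational rather than a substantive divergence.
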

\begin{proof} To show part (1), consider a set of states $\xx_s, \vv_s, \zz_s$ in which $\dot{\cE}(\xx_s,\vv_s) = 0$. \pref{as:energy} implies that $\nabla f(\bar{\xx}_s) = 0$, and \pref{as:gl_decrease} implies $\norm{u_\ell} \leq (C_x+C_v+C_z)\norm{\nabla f(\bar{\xx}_s)} = 0$. Similarly, with \pref{as:gg_decrease} and \pref{as:gg_linear} we have that $\lin{u_{g}, (I-R)\yy_s} = 0$ and $\bone^Tu_g = 0$ so $u_g = 0$. Therefore $\dot{\xx}_s = 0, \dot{\vv}_s=0, \dot{\zz}_s = 0$. Combining $\nabla f(\bar{\xx}_s) = 0$ and \asref{as:coercive} implies that $\xx_s$ is bounded.
Note that the value of $\vv(t), \zz(t)$ may not be bounded, even if the system converges to a stationary solution. Using the compactness assumption on the set $\mathbf{S}$, it is easy to show that $\vv(t), \zz(t)$ are also bounded.

To show part (2), we can integrate $\dot{\cE}(t)$ from $t=0$ to $T$ to obtain:
\[\int^T_{0}\gamma_2(t)\norm{(I-R)\cdot \yy(t)}^2\md t + \int^T_{0} \gamma_1(t)\norm{\nabla f(\bar{\xx}(t))}^2\md t\leq \cE(0) - \cE(T),\]
divide both sides by $\int^T_{0}\gamma_1(t)\md t$ or $\int^T_{0}\gamma_2(t)\md t$, we obtain\eqref{eq:ct_decrease}. By \pref{as:energy} we know $\int^t_{0}\dot{\cE}(\tau)\md\tau \leq 0,\; \forall t$, but since $\cE(t)\geq 0$, it follows that $\lim_{t\rightarrow \infty}\dot{\cE}(t) = 0$.
\end{proof}

Note that without the  compactness assumption,  $\vv$ and $\zz$ can be unbounded. As an example, FedYogi uses AdaGrad for LCFL~\cite{reddi2020adaptive} where $\vv(t)$ accumulates the norm of the gradients and does not satisfy the compactness assumption, so $\lim_{t\to\infty}\vv(t)\to\infty$. Although such unboundedness does not affect the convergence of the main state variable in part (2), from the control perspective it is still desirable to have a sufficient condition to guarantee the boundedness of all state variables.

{Part (2) of the above result indicates that if \pref{as:energy} is satisfied, not only will the system asymptotically {converge to the set of stationary points}, but more importantly, we can use $\{\gamma_1(t),\gamma_2(t)\}$ to characterize the rate in which the stationary gap of problem \eqref{eq:problem} shrinks. This result, although rather simple, will serve as the basis for our subsequent system discretization analysis.}

\subsection{Summary}\label{sub:summary} So far, we have completed the setup of the continuous-time feedback control system, by specifying the state variables, the feedback loops, and by introducing a few desired properties of the local controllers and the entire system. In particular, we show that property \pref{as:energy} is instrumental in ensuring {that the system converges to the set of stationary points}. However, there are {two} key questions remain to be answered:

\noindent {\bf (i)} How to ensure property \pref{as:energy} for a given continuous-time feedback control system?

\noindent {\bf (ii)} How to map the continuous-time system to a distributed optimization algorithm, and to transfer the convergence guarantees of the former to the latter?

There are two different ways to answer question {\bf (i)}. First, for a {\it generic} system that satisfies properties \pref{as:gg_decrease} -- \pref{as:gl_decrease}, we can show that when the control gains $\eta_g(t), \eta_\ell(t)$ are selected appropriately, then \pref{as:energy} will be satisfied; see Corollary \ref{cor:continuous} below. 
\begin{corollary}
\label{cor:continuous}
    Suppose that \pref{as:gg_decrease}, \pref{as:gl_smooth}, \pref{as:gl_decrease} are satisfied.
     {By choosing $\eta_g(t) = 1, \eta_\ell(t) = \cO(1/\sqrt{T}))$, \pref{as:energy} holds true with $\gamma_1(t) = \cO(\eta_\ell(t))$, $\gamma_2(t) = \cO(1)$} Further,
    \begin{align*}
        &\min_{t}\left\{\norm{\nabla f(\bar{\xx}(t))}^2 + \norm{(I-R)\cdot \yy(t)}^2\right\} = \cO\left(\frac{1}{\int^T_{0}\eta_\ell(\tau)\md \tau}\right) = \cO\left(\frac{1}{\sqrt{T}}\right).
    \end{align*}
\end{corollary}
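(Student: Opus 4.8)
The plan is to derive \pref{as:energy} directly from the expression \eqref{eq:E:derivative} for $\dot{\cE}(t)$, by controlling each of the two inner products using the structural properties of the controllers, and then to invoke Theorem~\ref{theorm:energy:continuous} part (2) to conclude the rate. First I would split $\dot{\cE}(t)$ into the ``gradient part'' $-\lin{\nabla f(\bar\xx(t)),\eta_\ell(t)\tfrac{\bone^T}{N}u_{\ell,x}(t)}$ and the ``consensus part'' $\lin{(I-R)\yy(t),\eta_g(t)u_g(t)+\eta_\ell(t)u_{\ell,y}(t)}$. For the gradient part, I would write $\tfrac{\bone^T}{N}u_{\ell,x}(t)=\tfrac1N\sum_i u_{i,\ell,x}(t)$ and use \pref{as:gl_decrease} (with initialization at $t_0$) to get a term of order $\alpha(t)\,\|\nabla f(\bar\xx(t))\|^2$ up to a cross term measuring the gap between $\nabla f_i(x_i)$ and $\nabla f_i(\bar\xx)$; that gap is bounded via \asref{as:smooth} by $L_f\|(I-R)\xx\|$, so this produces a $+\cO(\eta_\ell(t))\|(I-R)\yy(t)\|^2$ error that must be absorbed. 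For the consensus part, I would peel off the $\eta_g(t)u_g(t)$ piece and apply \pref{as:gg_decrease} to obtain $-\eta_g(t)C_g\|(I-R)\yy(t)\|^2$; the remaining $\eta_\ell(t)\lin{(I-R)\yy(t),u_{\ell,y}(t)}$ term is bounded via Cauchy--Schwarz and the size bounds in \pref{as:gl_decrease} (i.e. $\|u_{\ell,v}\|,\|u_{\ell,z}\|\le C\|\nabla f\|$), again giving something of order $\eta_\ell(t)(\|(I-R)\yy\|^2+\|\nabla f(\bar\xx)\|^2)$.

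Next I would collect terms. With $\eta_g(t)=1$ the consensus direction gives a fixed negative coefficient $-C_g\|(I-R)\yy\|^2$, while all the ``bad'' cross terms carry a factor $\eta_\ell(t)$; choosing $\eta_\ell(t)=\cO(1/\sqrt{T})$ makes those cross terms small enough that, after integrating from $0$ to $t$, the net coefficient of $\int\|(I-R)\yy\|^2$ is still bounded below by a positive constant, i.e. $\gamma_2(t)=\cO(1)$, and the net coefficient of $\int\|\nabla f(\bar\xx)\|^2$ is $\gamma_1(t)=\cO(\eta_\ell(t))$. This verifies \pref{as:energy}. I would have to be a little careful that the integral form of \pref{as:gl_decrease} (the $\int_{t_0}^t\alpha(\tau)d\tau\to\infty$ condition) is what feeds into the final rate, and that taking $\eta_\ell(t)$ constant equal to $\cO(1/\sqrt T)$ on $[0,T]$ gives $\int_0^T\eta_\ell(\tau)d\tau=\cO(\sqrt T)$.

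Finally, having established \pref{as:energy} with the stated $\gamma_1,\gamma_2$, I would simply plug into \eqref{eq:ct_decrease} of Theorem~\ref{theorm:energy:continuous}: the max of $1/\int_0^T\gamma_1$ and $1/\int_0^T\gamma_2$ is $\max\{1/\int_0^T\eta_\ell(\tau)d\tau,\,1/\cO(T)\}=\cO(1/\sqrt T)$, which is exactly the claimed bound. This last step is purely mechanical.

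The main obstacle I expect is the bookkeeping in the first paragraph: cleanly separating the genuinely ``descent'' contributions from the perturbation terms that couple the gradient and consensus errors, and confirming that \emph{every} such perturbation carries an $\eta_\ell(t)$ factor (so none of them can spoil the $\cO(1)$ lower bound on $\gamma_2$). In particular the term $\lin{(I-R)\yy,\eta_\ell u_{\ell,y}}$ mixing the consensus error with the $v$- and $z$-components of the local control signal, and the Lipschitz-gradient step relating $\sum_i\nabla f_i(x_i)$ to $N\nabla f(\bar\xx)$, are where the constants have to be tracked carefully; everything after \pref{as:energy} is immediate from the already-proved theorem.
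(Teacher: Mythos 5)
Your overall strategy coincides with the paper's: decompose $\dot{\cE}(t)$ into the gradient and consensus inner products, use \pref{as:gg_decrease} to extract $-C_g\eta_g(t)\|(I-R)\yy(t)\|^2$, use \pref{as:gl_decrease} plus Cauchy--Schwarz and \asref{as:smooth} to handle the cross terms, and then invoke Theorem~\ref{theorm:energy:continuous}. The final step is indeed mechanical. However, there is a genuine gap in the middle, exactly at the place you flag as ``bookkeeping.'' Your claim that every perturbation term is of order $\eta_\ell(t)\left(\|(I-R)\yy\|^2+\|\nabla f(\bar\xx)\|^2\right)$ is not correct. When you bound $\|u_{i,\ell,x}\|\le C_x\|\nabla f_i(x_i)\|$ and then relate $\nabla f_i(x_i)$ to $\nabla f_i(\bar\xx)$ via smoothness, you are left with a residual proportional to $\eta_\ell^2(t)\sum_{i}\|\nabla f_i(\bar\xx(t))\|^2$ (the paper's term $\eta_\ell^2(t)C_{df}\sum_i\|\nabla f_i(\bar\xx(t))\|^2$ in \eqref{eq:ct:2}). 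This quantity is \emph{not} controlled by $\|\nabla f(\bar\xx)\|^2=\|\tfrac1N\sum_i\nabla f_i(\bar\xx)\|^2$: under data heterogeneity the individual gradients can be $\Theta(1)$ while their average vanishes, so this term cannot be absorbed into the negative $-c\,\eta_\ell(t)\|\nabla f(\bar\xx)\|^2$ contribution no matter how small the constant factor is.

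The paper closes this hole in two ways that your proposal does not contain. Either the constant $C_{df}$ is nonpositive, which requires the alignment coefficient to satisfy $\alpha(t)>C_x$ (an assumption not available in general and yielding the faster $\cO(1/t)$ rate), or one must invoke the coercivity assumption \asref{as:coercive} to define $D(t):=\sup_{x\in\mS(t)}\sum_i\|\nabla f_i(x)\|^2$ over the level set $\mS(t)$ of $\cE$, show $D(\tau)\le D(0)$ by a monotonicity bootstrap, and then choose $\eta_\ell(\tau)=\Theta\big(\|(I-R)\yy(\tau)\|^2+\|\nabla f(\bar\xx(\tau))\|^2\big)$ \emph{adaptively} so that $\eta_\ell^2 C_{df}D(0)$ is dominated by a fraction of the negative terms; the $\cO(1/\sqrt{T})$ rate then follows from a dichotomy (either the gap is already $\cO(1/\sqrt{T})$, or it is $\Omega(1/\sqrt{T})$, which forces $\eta_\ell=\Theta(1/\sqrt{T})$ and $\gamma_1=\Theta(1/\sqrt{T})$). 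Without the coercivity/level-set argument and the adaptive stepsize, the constant choice $\eta_\ell(t)\equiv\cO(1/\sqrt{T})$ alone does not let you verify \pref{as:energy}, so your proof as written does not go through.
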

The proof of the above result follows the steps used in analyzing  distributed gradient flow algorithm~\cite{swenson2019distributed}; see the supplementary Sec.~\ref{app:proof:continuous}.

The second answer to question {\bf (i)} is that, one can also verify \pref{as:energy} in a case-by-case manner for individual systems. In this way, it is possible that one can obtain  larger gains $\eta_{\ell}(t), \eta_{g}(t)$, hence larger coefficients $\gamma_1(t)$ and $\gamma_2(t)$ to further improve the convergence rate estimate. In fact, verifying property \pref{as:energy}, and computing the corresponding coefficients is a key step in our proposed analysis framework for distributed algorithms. Shortly in Sec. \ref{sec:application}, we will provide an example to showcase how to verify that the continuous-time system which corresponds to the DGT algorithm satisfies \pref{as:energy} with $\gamma_1(t)=\cO(1)$ and $\gamma_2(t)=\cO(1)$, leading to a convergence rate of  $\cO(1/T)$. 

On the other hand, the answer to question {\bf (ii)} is more involved, so this question will be addressed in the main technical part of this work to be presented shortly. Generally speaking, one needs to discretize the continuous-time system properly to map the system to a particular distributed algorithm. Further, one needs to utilize all the properties \pref{as:gg_decrease} -- \pref{as:energy}, and carefully select the discretization intervals, to ensure that the resulting discretized systems perform appropriately.
\section{System Discretization}\label{SEC:DISCRETIZATION}
\begin{wrapfigure}{r}{0.3\textwidth}
\vspace{-0.8cm}
    \begin{center}        \includegraphics[width=0.28\textwidth]{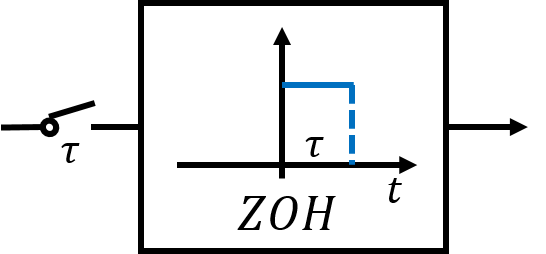}
        \caption{The discretization block that has a switch and a Zero-Order Hold.}\label{fig:ZOH}
    \end{center}
    \vspace{-0.8cm}
\end{wrapfigure}
In this section, we discuss how to use system discretization to map the continuous-time system introduced in the previous section to distributed algorithms.

\subsection{Modeling the Discretization}
Typically, a continuous-time system is discretized by using a switch that samples the input with sample time $\tau$, followed by a zeroth-order hold (ZOH) that keeps the signal constant between the consecutive sampling instances~\cite{kuo1980digital}; see \figref{fig:ZOH}. 

Now let us use ZOH to discretize the continuous-time system depicted in Fig. \ref{fig:continuous-system}. We will place the ZOH before the variables enter the controllers, i.e., at points A and B in Fig. \ref{fig:CaseI}.
Note that, the original continuous-time system can be discretized in many different ways, by customizing the sampling rates for the discretization blocks. Each of these discretization scheme will correspond to a {\it multi-rate} control system, in which different parts of the system run on different sampling rates. To describe such kinds of multi-rate system, let us define the {\it sampling intervals} for the GCFL and LCFL as $\tau_g$ and $\tau_\ell$, respectively. Then we can consider the following five cases:

\noindent $\bullet$ {\bf Case I.} $\tau_g > 0, \tau_\ell = 0$, the GCFL is discretized while the LCFL is not;

\noindent $\bullet$ {\bf Case II.} $\tau_g = 0, \tau_\ell > 0$, the GCFL remains continuous while the LCFL is not;
    
\noindent $\bullet$ {\bf Case III.} $\tau_g = \tau_\ell > 0$, the GCFL and LCFL are discretized with the same rate;

\noindent $\bullet$ {\bf Case IV.} $\tau_g > \tau_\ell > 0$, both the GCFL and LCFL are discretized, while the local computation loop is updated more frequently;
    
\noindent $\bullet$ {\bf Case V.} $\tau_\ell > \tau_g > 0$, both GCFL and LCFL are discretized, while the global communication loop is updated more frequently.

We note that the systems in cases I and II are {\it sampled data} systems which has both continuous-time part and discretized part, while systems in cases IV, V are {\it multi-rate discrete-time} systems. Further, the entire system in case III operates on the same sampling rate. For simplicity, we refer both sampled data system and fully discretized system as {\it discretized system} in the rest of the paper.

\subsection{Distributed Algorithms as Multi-Rate Discretized Systems}
In this section, we make the connection between {\it sub-classes} of distributed algorithms and different discretization patterns. For convenience, let $t_k$ denote the times at which the inputs of the ZOHs get sampled by {\it both} the global and local controllers.

\noindent{\bf Case I} ($\tau_g > 0, \tau_\ell = 0$):
The system can be described as:
\begin{align}\label{eq:discrete_case_1}
\begin{split}
    \dot{\vv}(t) &= -\eta_g(t)\cdot u_{g,v}(t_k) - \eta_\ell(t) \cdot u_{\ell,v}(t)\\
    \dot{\xx}(t) &= -\eta_g(t)\cdot u_{g,x}(t_k) - \eta_\ell(t) \cdot u_{\ell,x}(t), \quad \dot{\zz}(t) = -\eta_\ell(t) \cdot u_{\ell,z}(t).
\end{split}
\end{align}
Due to the use of ZOH, during an interval $[t_k,t_k+\tau_g)$, the control signals $u_{g,v}$ and $u_{g,x}$ are fixed. By \pref{as:gl_decrease}, it follows that the dynamic system finds a stationary point of the local problem satisfying $\dot{x}_i = 0,\; \forall~i$, that is $\eta_\ell(t)\cdot u_{\ell,x}(t) + \eta_g(t)\cdot u_{g,x}(t_k) = 0$. This is the stationary solution of the following perturbed problem for each agent:
 \begin{align}\label{eq:local:problem}
    \min_{x_i}{\; \widetilde{f}_i(x_i)} := f_i(x_i) + \frac{\eta_g(t)}{\eta_l(t)}\lin{u_{i,g,x}(t_k),x_i}. 
 \end{align}
Using \eqref{eq:gl_convergence}, it follows that the above problem is optimized to satisfy: 
$$\min_{t\in[t_k, t_k+\tau_g]}\norm{\nabla \widetilde{f}_i (x_i(t))}^2 \leq \gamma(\tau_g)\cdot \left( \widetilde{f}_i(x_i(t_k)) - \widetilde{f}_i(x_i(t_k+\tau_g)\right),$$
with $\gamma(\tau_g) = \frac{1}{\int^{\tau_g}_0 \alpha(t)\md t}$. That is, we obtain a $\gamma(\tau_g)$-stationary solution for the local problem \eqref{eq:local:problem}.
This system has the same form as the distributed algorithms that require to solve some local problems to a given accuracy, before any local communication steps take place; see for examples FedProx~\cite{li2018federated}, FedPD~\cite{zhang2020fedpd} and NEXT~\cite{di2016next}.

\noindent{\bf Case II} ($\tau_g = 0, \tau_\ell > 0$): 
The system can be described as:
\begin{align}\label{eq:discrete_case_2}
\begin{split}
    \dot{\vv}(t) &= -\eta_g(t)\cdot u_{g,v}(t) - \eta_\ell(t) \cdot u_{\ell,v}(t_k)\\
    \dot{\xx}(t) &= -\eta_g(t)\cdot u_{g,x}(t) - \eta_\ell(t) \cdot u_{\ell,x}(t_k), \quad \dot{\zz}(t) = -\eta_\ell(t) \cdot u_{\ell,z}(t_k).
\end{split}
\end{align}
During $[t_k,t_k+\tau_\ell)$ the control signals $u_{\ell,x}(t), u_{\ell,v}(t), u_{\ell,z}(t)$ are fixed. By \pref{as:gg_decrease}, the system finds a solution $\dot{\yy} = 0$, which implies  that $-\eta_g(t)\cdot u_{g,x}(t) - \eta_\ell(t) \cdot u_{\ell, x}(t_k) = 0$. By \eqref{eq:gg_convergence}, in $[t_k,t_k+\tau_\ell)$, the system optimizes the following network problem:
\[\min_{\yy}g(\yy) :=  \norm{(I-R)\cdot \yy + ({\eta_\ell(t)}/{\eta_g(t)})\cdot u_{\ell,y}(t_k)}^2,\]
and obtain a solution that satisfies:
$\norm{\nabla g(\yy(t_k+\tau_\ell))}^2  \le \e^{-2C_g\tau_\ell}g(\yy(t_k)).$
This system is related to those algorithms that achieve the optimal communication complexity~\cite{scaman2017optimal,sun2019distributed}. In these algorithms, it is often the case that some networked problems are solved (to sufficient accuracies) between two local optimization steps. 

\noindent{\bf Case III} ($\tau_g = \tau_\ell > 0$): The system is discretized with a single sampling interval. {Once sampled at $t_k$, the controllers' inputs remain to be $\xx(t_k),\vv(t_k),\zz(t_k)$ during the sampling interval, the output of the controllers are also kept constant $u_g(t) = u_g(t_k), u_\ell(t) = u_\ell(t_k), \forall t\in[t_k,t_k+\tau_g)$.} So the system update can be written as: 
\begingroup
\allowdisplaybreaks
\begin{align}\label{eq:discrete_case_3}
    \xx(t_{k+1}) &= \xx(t_k) - \eta'_\ell(t_k)\cdot  u_{\ell,x}(t_k) - \eta'_g(t_k)\cdot  u_{g,x}(t_k),\nonumber\\
    \vv(t_{k+1}) &= \vv(t_k) - \eta'_\ell(t_k)\cdot u_{\ell,v}(t_k) - \eta'_g(t_k) \cdot u_{g,v}(t_k),\\
    \zz(t_{k+1}) &= \zz(t_k) - \eta'_g(t_k) \cdot u_{\ell,z}(t_k),\nonumber
\end{align}
\endgroup%
where $\eta'_\ell(t_k) = \int^{t_k+\tau_g}_{t_k}\eta_\ell(t)\md t$, $\eta'_g(t_k) = \int^{t_k+\tau_g}_{t_k}\eta_g(t)\md t$.
The above updates are equivalent to many existing decentralized optimization algorithms, such as DGD, 
DLM, which perform one step local update, followed by one step of communication. 

\noindent{\bf Case IV} ($\tau_g > \tau_\ell > 0$): We assume that $\tau_g = Q\cdot \tau_\ell$, which means that each agent performs $Q$ steps of local computation between every two communication steps. This update strategy is related to the class of (horizontal) federated learning algorithms~\cite{bonawitz2019towards}.

\noindent{\bf Case V} ($\tau_\ell > \tau_g > 0$): We assume that $\tau_\ell = K\cdot \tau_g$, that the agents perform $K$ steps of communication between two local computation steps. {Although $K$ can be arbitrary, in practice it is typically chosen large enough so that certain network problem is solved approximately; therefore in practice this case is closely related to Case II.} 

We summarize the above discussion in Table~\ref{tab:discretization}, and provide some example algorithms for each case.
In Sec. \ref{sec:application}, we will specify the controllers for these algorithms so that we can precisely map them to a discretization setting.  
It is important to note that the connection identified here is useful in helping predict algorithm performance, as well as facilitates new algorithm design; see the related discussions in Sec.\ref{sub:contribution}, points 2) and 3).
However, these benefits can be realized only if there is a systematic way of transferring the theoretical results from the continuous-time system to different discretization settings. This will be discussed in detail in the next subsection.

\begin{table*}
\centering\small
    \begin{tabular}{c|c|cc|c}
        \hline
        {\bf Case}& ${\bf \tau_\ell, \tau_g}$  & {\bf Comm.}     & {\bf Comp.}  & {\bf Related Algorithm}     \\
        \hline
        I   &$\tau_g > 0, \tau_\ell = 0$& Slow & Continuous &  NEXT~\cite{di2016next}, FedProx~\cite{li2018federated}, NIDS~\cite{li2019decentralized}\\\
        II  &$\tau_g = 0, \tau_\ell > 0$& Continuous & Slow &  MSDA~\cite{scaman2017optimal}, xFilter~\cite{sun2019distributed},AGD~\cite{ye2020multi}\\
        III &$\tau_g = \tau_\ell > 0$ &\multicolumn{2}{c|}{Same rate} & DGD~\cite{yuan2016convergence}, DGT~\cite{yuan2020can}\\
        IV  &$\tau_g > \tau_\ell > 0$ & Slow & Fast &  Local GD~\cite{khaled2019first}, Scaffold~\cite{karimireddy2020scaffold}\\
        V   & $\tau_\ell > \tau_g > 0 $& Fast & Slow  & Same as Case II\\
        \hline
    \end{tabular}
    \caption{\small Summary of discretization settings, and the corresponding distributed  algorithms.}\label{tab:discretization}
\end{table*}

\subsection{Convergence of Discretized Systems}
Next, we leverage the convergence results of the continuous-time system to  analyze distributed algorithms. The key challenge is to properly deal with the potential instability introduced by discretization. The proof of this subsection is relegated to \appref{app:proof:case1} -- \ref{app:proof:case345}.

\noindent{\bf Discretized Communication ($\tau_g>0, \tau_{\ell}=0$, Case I).} Recall that the system dynamics are given in \eqref{eq:discrete_case_1}. Let us first show how the sampling error affects $\dot{\cE}$.

\begin{lemma}[$\dot{\cE}$ in Case I]\label{le:case1}
    Suppose the GCFL and LCFL satisfy  \pref{as:gg_decrease}-\pref{as:energy}, and consider the discretized system with  $\tau_\ell = 0, \tau_g>0$. Then we have the following:
    {\small
    \begin{equation}
        \begin{aligned}
        \int^t_{0}\dot{\cE}(\tau)\md \tau & \leq  \int^t_{0}-\underbrace{\left(\gamma_1(\tau)- C_{11}\right)}_{\rm := \hat{\gamma}_1(\tau) }\cdot\norm{\nabla f(\bar{\xx}(\tau))}^2 \md \tau\\
        & \qquad + \int^t_0- \underbrace{\left(\frac{\gamma_2(\tau)}{2} - C_{11}\right)}_{\rm :=\hat{\gamma}_2(\tau)} \cdot\norm{(I - R)\cdot \yy(\tau)}^2\md \tau,
        \end{aligned}
    \end{equation}}%
    where $C_{11}:= \frac{q^2_{\max}}{2\gamma_2(\tau)}$ and $q_{\max} := \exp\left\{\sqrt{2}\tau_g\cdot\left(\sqrt{C_x^2+C_v^2}\eta_\ell(t) \cdot\left(1+\frac{L_f}{N}\right)^2\right)\right\}-1$.
\end{lemma}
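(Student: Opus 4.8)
The plan is to compare the discretized dynamics \eqref{eq:discrete_case_1} against the continuous-time dynamics \eqref{eq:continuous-system}, treat the difference $u_{g,x}(t_k)-u_{g,x}(t)$ (and similarly for $v$) as a perturbation, bound its size using the ZOH hold length $\tau_g$, and then feed this bound into the expression \eqref{eq:E:derivative} for $\dot{\cE}$. Concretely, I would start from the exact identity for $\dot{\cE}(t)$ and write the discretized version by replacing $u_g(t)$ with $u_g(t_k)$ wherever it appears. Adding and subtracting $u_g(t)$ produces the ``ideal'' term (which is controlled by \pref{as:energy} and yields $-\gamma_1\|\nabla f(\bar\xx)\|^2-\gamma_2\|(I-R)\yy\|^2$ after integration) plus an ``error'' term of the form $\eta_g(\tau)\langle (I-R)\yy(\tau),\, u_g(\tau)-u_g(t_k)\rangle$. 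Note that because the LCFL is not discretized here, the only sampling error is in the global control signal.

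The key quantitative step is to bound $\|u_g(t)-u_g(t_k)\|$ for $t\in[t_k,t_k+\tau_g)$. Since $u_g = W_A\yy$ is a linear operator with operator norm at most $1$ by \pref{as:gg_linear}, it suffices to bound $\|\yy(t)-\yy(t_k)\|$ on the hold interval. Over this interval, $\dot\yy = -\eta_g(t)W_A\yy(t_k)\text{(const)} - \eta_\ell(t)u_{\ell,y}(t)$; using \pref{as:gl_decrease} to get $\|u_{\ell,y}\|\le\sqrt{C_x^2+C_v^2}\,\|\nabla f(x_i)\|$ and \pref{as:gl_smooth}/\asref{as:smooth} to control how $\|\nabla f\|$ can grow (the factor $(1+L_f/N)$ comes from relating $\nabla f_i(x_i)$ to $\nabla f(\bar\xx)$ plus consensus error, i.e. the Lipschitz bound on the $\bar\xx$-average), I expect a Grönwall-type estimate of the form $\|\yy(t)-\yy(t_k)\|\le (q_{\max})\cdot(\text{relevant norm at }t_k)$ with $q_{\max}$ exactly as stated: an exponential in $\sqrt 2\,\tau_g\sqrt{C_x^2+C_v^2}\,\eta_\ell(t)(1+L_f/N)^2$ minus one, the ``minus one'' reflecting that we measure the \emph{increment} rather than the value. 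This is the step I expect to be the main obstacle — getting the constants to match requires carefully tracking the Kronecker/stacking norms, the split of $\nabla f_i(x_i)$ into $\nabla f(\bar\xx)$ and a consensus-error piece, and the integrating-factor bound on the hold interval, all while keeping the dependence on $\eta_\ell$ and $\tau_g$ in the claimed form.

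Once the increment bound is in hand, I would insert it into the error term, apply Cauchy--Schwarz to $\eta_g\langle (I-R)\yy,\, u_g(\tau)-u_g(t_k)\rangle$, and then use Young's inequality ($ab\le \tfrac{a^2}{2\gamma_2}+\tfrac{\gamma_2 b^2}{2}$ with the appropriate weight) to split the cross term into a piece absorbed into the $\gamma_2\|(I-R)\yy\|^2$ budget and a piece of the form $C_{11}\|\nabla f(\bar\xx)\|^2$ plus $C_{11}\|(I-R)\yy\|^2$, where $C_{11}=q_{\max}^2/(2\gamma_2(\tau))$. Collecting terms then gives the stated inequality with $\hat\gamma_1(\tau)=\gamma_1(\tau)-C_{11}$ and $\hat\gamma_2(\tau)=\gamma_2(\tau)/2-C_{11}$ (the factor $1/2$ on $\gamma_2$ accounting for the half of the consensus budget spent dominating the quadratic-in-$\yy$ part of the error). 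Finally I would remark that choosing $\tau_g$ small enough makes $q_{\max}$, hence $C_{11}$, small enough that $\hat\gamma_1,\hat\gamma_2>0$, so the discretized system inherits the convergence rate of Theorem~\ref{theorm:energy:continuous} with the degraded coefficients.
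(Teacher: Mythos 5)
Your overall strategy is the same as the paper's: isolate the sampling error $\eta_g(\tau)\lin{(I-R)\yy(\tau),\,u_g(\tau)-u_g(t_k)}$ (correctly noting that only the GCFL contributes error since $\tau_\ell=0$), use linearity of $G_g$ to reduce it to the increment $G_g(\yy(\tau)-\yy(t_k);A)$, run a Gr\"onwall estimate over the hold interval, and split with Young's inequality against the weight $\gamma_2$, arriving at $\hat\gamma_1=\gamma_1-C_{11}$ and $\hat\gamma_2=\gamma_2/2-C_{11}$. You even anticipate the ``$-1$'' in $q_{\max}$ correctly.

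The one genuine gap is in the normalization of your Gr\"onwall estimate. You propose to show $\norm{\yy(t)-\yy(t_k)}\le q_{\max}\cdot(\text{relevant norm at }t_k)$, i.e.\ a bound relative to the state at the \emph{sampling instant}. That is not enough to close the argument: after Young's inequality the error contributes $\frac{1}{2\gamma_2(\tau)}\norm{u_g(\tau)-u_g(t_k)}^2$ inside the time integral, and to absorb this into $\int\gamma_1(\tau)\norm{\nabla f(\bar\xx(\tau))}^2\md\tau$ and $\int\gamma_2(\tau)\norm{(I-R)\yy(\tau)}^2\md\tau$ you need the increment controlled by the norms evaluated at the \emph{current} time $\tau$, not at $t_k$; otherwise you are left with a sum of $\tau_g\norm{\ww(t_k)}^2$ over sampling instants and need an extra comparison between $\norm{\ww(t_k)}$ and $\norm{\ww(\tau)}$ on the interval. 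The paper avoids this by applying Gr\"onwall not to $\norm{\yy(t)-\yy(t_k)}$ but to the ratio $q(t):=\norm{u_g(t)-\hat u_g(t)}/\norm{\ww(t)}$ with $\ww(t)=[(I-R)\yy(t);\nabla f(\bar\xx(t))]$, deriving $\dot q\le(1+q)\cdot\norm{\dot\ww}/\norm{\ww}$ and bounding $\norm{\dot\ww(t)}$ by a multiple of $\norm{\ww(t)}$ (this is also where both factors of $(1+L_f/N)$ arise, one from the sub-Hessian bound on $\frac{\md}{\md t}\nabla f(\bar\xx)$ and one from converting $\norm{\nabla f(\xx)}$ into $\norm{\nabla f(\bar\xx)}$ plus consensus error). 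With $q(t_k)=0$ this yields $q(t)\le q_{\max}$ and hence $\norm{u_g(\tau)-u_g(t_k)}^2\le q_{\max}^2\norm{\ww(\tau)}^2$, which plugs directly into the integral. Your proof as sketched would need either this ratio device or an additional lemma relating the state norms across the hold interval.
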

The lemma shows that discretizing the communication with sufficiently small $\tau_g$ leads to a small $q_{\max}$, which preserves the desired descent property. 

\noindent{\bf Discretized Computation ($\tau_\ell>0, \tau_g=0$, Case II).}
Recall that the system dynamics can be expressed in \eqref{eq:discrete_case_2}. We have the following result:
\begin{lemma}[$\dot{\cE}$ in Case II]\label{le:case2}
    Suppose the GCFL and LCFL satisfy  \pref{as:gg_decrease}-\pref{as:energy}, and consider the discretized system with $\tau_g = 0, \tau_\ell>0$. Then we have the following:
    {\small
    \begin{equation}
        \begin{aligned}
            \int^t_{0}\dot{\cE}(\tau)\md \tau &\leq \int^t_{0}-\underbrace{\left(\frac{\gamma_1(\tau)}{2} - C_{21}\right)}_{\rm := \hat{\gamma}_1(\tau)}\cdot\norm{\nabla f(\bar{\xx}(\tau))}^2 \md \tau\\
            &\qquad + \int^t_0- \underbrace{\left(\frac{\gamma_2(\tau)}{2} - C_{22}\right)}_{\rm := \hat{\gamma}_2(\tau)} \cdot\norm{(I - R)\cdot\yy(\tau)}^2\md \tau,
        \end{aligned}
    \end{equation}
    }%
    where we have defined: {\small \begin{align}
        C_{21} &:= \frac{4L^2C_fC^2_\ell\eta_\ell^2(\tau)}{2(1-2L^2C_\ell^2)\cdot\min\{N\gamma_1(\tau), \gamma_2(\tau)\}}, \;C_{22} := \frac{L^2\eta_\ell^2(\tau)\cdot\left(\left(\frac{1-C_y}{C_y^2}\right)+ 4L_f^2C_fC^2_\ell\right)}{2(1-2L^2C_\ell^2)\cdot\min\{N\gamma_1(\tau), \gamma_2(\tau)\}},\nonumber\\
        C_f & := C_x^2+C_v^2+C_z^2, \; C_y = \e^{-C_g\tau_\ell \eta_g(\tau)}, \quad \; C_\ell := \frac{\tau_\ell \eta_\ell(\tau)}{\min\{2C_g\eta_g(\tau),1\}}\nonumber.
    \end{align}
    }%
\end{lemma}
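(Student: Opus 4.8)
The plan is to track how replacing the continuous local control signal $u_{\ell,\cdot}(t)$ by its zero-order-hold value $u_{\ell,\cdot}(t_k)$ perturbs the exact identity \eqref{eq:E:derivative}. Writing $e_{\ell}(t):=u_{\ell,y}(t_k)-u_{\ell,y}(t)$ (and similarly for the $x$-component $e_{\ell,x}$), I would start from the energy derivative, add and subtract the continuous signals, and split $\dot{\cE}(t)$ into the ``ideal'' part that is controlled by \pref{as:energy} via $\gamma_1(\tau)\|\nabla f(\bar\xx)\|^2+\gamma_2(\tau)\|(I-R)\yy\|^2$, plus two error inner products $\lin{\nabla f(\bar\xx(\tau)),\eta_\ell(\tau)\tfrac{\bone^T}{N}e_{\ell,x}(\tau)}$ and $\lin{(I-R)\yy(\tau),\eta_\ell(\tau)e_\ell(\tau)}$. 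Each of these I would bound by Young's inequality, parceling out half of $\gamma_1$ and half of $\gamma_2$ to absorb the cross terms, which is exactly the source of the factors $\tfrac{\gamma_1}{2},\tfrac{\gamma_2}{2}$ and the corrections $C_{21},C_{22}$ in the statement.

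The main work is to bound the hold error $\|e_\ell(\tau)\|$ over a sampling window. On $[t_k,t_k+\tau_\ell)$ the local signals are frozen, so the GCFL alone evolves $\yy$; by \pref{as:gg_decrease} and the contraction estimate \eqref{eq:gg_convergence} (applied to the shifted/perturbed problem $\min_\yy \|(I-R)\yy+(\eta_\ell/\eta_g)u_{\ell,y}(t_k)\|^2$ introduced in the Case~II discussion), the deviation of $\yy$ from $\yy(t_k)$ over the window is geometrically controlled, giving $\|[x_i;v_i;z_i](t)-[x_i;v_i;z_i](t_k)\|=\cO(C_\ell)$ with $C_\ell=\tau_\ell\eta_\ell(\tau)/\min\{2C_g\eta_g(\tau),1\}$ and $C_y=\e^{-C_g\tau_\ell\eta_g(\tau)}$ as defined. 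Feeding this into \pref{as:gl_smooth} (Lipschitz continuity of $G_\ell$ with constant $L$) bounds $\|e_\ell(\tau)\|\le L\cdot\cO(C_\ell)\cdot(\text{signal sizes})$, and \pref{as:gl_decrease} lets me replace the signal sizes by $\|\nabla f_i(x_i)\|$, hence by $\|\nabla f(\bar\xx)\|$ plus a consensus-error term via \asref{as:smooth} (the $L_f/N$ factor). Collecting constants $C_f=C_x^2+C_v^2+C_z^2$ and the denominator $1-2L^2C_\ell^2$ (which appears because the error bound is self-referential — $e_\ell$ depends on gradients at perturbed points, producing a term proportional to $\|e_\ell\|$ itself that must be moved to the left side, requiring $2L^2C_\ell^2<1$) yields precisely $C_{21}$ and $C_{22}$.

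The last step is to integrate from $0$ to $t$: the ``ideal'' part integrates to the $-\int(\gamma_1-\cdots)\|\nabla f\|^2-\int(\gamma_2-\cdots)$ expression, and the error integrals, after the Young splitting and the per-window bounds summed over $k$, contribute exactly the $-C_{21}$ and $-C_{22}$ offsets. I would be careful that $\min\{N\gamma_1(\tau),\gamma_2(\tau)\}$ in the denominators of $C_{21},C_{22}$ arises from how much of the quadratic budget is available to dominate the mixed $\|\nabla f\|\cdot\|(I-R)\yy\|$ terms after the averaging operator $\tfrac{\bone^T}{N}$ is applied.

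\textbf{Main obstacle.} The delicate point is the self-referential hold-error estimate: $e_\ell(\tau)$ is governed by the GCFL dynamics on a window where $\yy$ itself moves, and the Lipschitz bound on $G_\ell$ reintroduces gradients evaluated at moved points, so closing the estimate needs the smallness condition $2L^2C_\ell^2<1$ and a careful bookkeeping of which fraction of $\gamma_1,\gamma_2$ is spent where. Getting the constants to land exactly as $C_{21},C_{22}$ — rather than merely $\cO(\eta_\ell^2\tau_\ell^2)$ — is the part that requires real care; everything else is Young's inequality and integration.
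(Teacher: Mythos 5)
Your plan follows essentially the same route as the paper's proof: the decomposition of $\dot{\cE}$ into the ideal term plus the inner products with the hold error $u_{\ell}(t)-\hat{u}_{\ell}(t)$, Young's inequality to spend half of $\gamma_1,\gamma_2$ absorbing them, the Lipschitz bound \pref{as:gl_smooth} converting the controller error into $\norm{\yy(t)-\yy(t_\ell)}^2+\norm{\zz(t)-\zz(t_\ell)}^2$, the GCFL contraction over the sampling window producing $C_y$ and $C_\ell$, and the self-referential closure requiring $2L^2C_\ell^2<1$ before integrating. The key steps and the origin of every constant ($C_f$, $C_y$, $C_\ell$, the $\min\{N\gamma_1,\gamma_2\}$ denominator) are correctly identified, so this is the paper's argument in outline form.
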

Note that the requirements on $\hat{\gamma}_1(\tau)>0, \hat{\gamma}_2(\tau)>0$ result in the constraint on $\tau_\ell$, which will be discussed at the end of this section.

\noindent{\bf Two-sided Discretization ($\tau_{\ell}>0, \tau_{g}>0$, Case III-V).} We then analyze the more challenging cases where {\it both} the communication and the computation are discretized. Note that Case III with $\tau_\ell = \tau_g >0$ can be merged into Case IV, with $Q=1$.
\begin{lemma}[$\dot{\cE}$ in Case III-IV]\label{le:case4}
     Suppose the GCFL and LCFL satisfy properties \pref{as:gg_decrease}-\pref{as:energy}, and consider the discretized system with $\tau_g = Q\cdot \tau_\ell$. Then we have:
     {\small
    \begin{equation}
        \begin{aligned}
            \int^t_{0}\dot{\cE}(\tau)\md \tau &\leq \int^t_{0}-\underbrace{\left(\frac{\gamma_1(\tau)}{2}-C_{41}(\tau)\right)}_{\rm := \hat{\gamma}_1(\tau)}\cdot\norm{\nabla f(\bar{\xx}(\tau))}^2 \md \tau\\
            & \qquad + \int^t_0- \underbrace{\left(\frac{\gamma_2(\tau)}{2} - C_{42}(\tau)\right)}_{\rm := \hat{\gamma}_2(\tau)}\cdot \norm{(I - R)\cdot \yy(\tau)}^2\md\tau,
        \end{aligned}
    \end{equation}}%
     where the constants $C_{41}(\tau)$ and $C_{42}(\tau)$ are defined as: {\small
    \begin{align*}
        C_{41} &:= \frac{L^2\eta_\ell^2(\tau)\cdot \left(C_{45}\cdot(1+L_f^2C_{47}+C_{45})+C_{46}L_f^2\right)}{2\min\{N\gamma_1(\tau),\gamma_2(\tau)\}} + \frac{C_g\eta_g^2(\tau)\cdot(C_{43} + L_f^2C_{47})}{2\gamma_2(\tau)},\\
        C_{42} &:= \frac{L^2\eta_\ell^2(\tau)\cdot \left(C_{46} + C_{45}C_{47}\right)}{2\min\{N\gamma_1(\tau),\gamma_2(\tau)\}} + \frac{C_g\eta_g^2(\tau)C_{47}}{2\gamma_2(\tau)}, \quad C_{47} := Q^2C_{44}^2\cdot(C_x^2+C_v^2),\\
        C_{43} & := \frac{4\tau_g^2\eta_g^2(t)}{1-4\tau_g^2\eta_g^2(\tau)}, \; C_{44} := \frac{2\tau_\ell^2\tau_\ell^2(\tau)}{1-4\tau_g^2\eta_g^2(\tau)},\\
        C_{45} &:= \frac{4\tau_\ell^2\eta_g^2(\tau)}{1- 4L^2\tau_\ell^2\eta_\ell^2(\tau)},\;C_{46} := \frac{8L^2C_f\tau_\ell^2\eta_\ell^2(\tau)}{1-4L^2\tau_\ell^2\eta_\ell^2(\tau)}.
    \end{align*}}
\end{lemma}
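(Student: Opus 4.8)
The plan is to follow the same template as in Lemmas~\ref{le:case1} and~\ref{le:case2}, but now both controllers see stale inputs, at two different rates. Write $t_k$ for the GCFL sampling instants and, inside each block $[t_k,t_k+\tau_g)$, write $t_k=t_{k,0}<t_{k,1}<\dots<t_{k,Q}=t_{k+1}$ for the $Q$ LCFL sub-instants; for a given $t$, let $\hat t_g$ and $\hat t_\ell$ denote the most recent GCFL and LCFL samples preceding $t$. Along the discretized trajectory \eqref{eq:discrete_case_3} the energy derivative differs from the continuous expression \eqref{eq:E:derivative} only in that $u_{\ell,x}(t),u_{\ell,y}(t)$ are replaced by $u_{\ell,x}(\hat t_\ell),u_{\ell,y}(\hat t_\ell)$ and $u_g(t)$ by $u_g(\hat t_g)$. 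First I would add and subtract these ``ideal'' current-time controls to split $\dot{\cE}(t)=\dot{\cE}_{\mathrm{cont}}(t)+\Delta(t)$, where $\dot{\cE}_{\mathrm{cont}}$ is exactly \eqref{eq:E:derivative} and $\Delta(t)$ gathers the inner products of $\nabla f(\bar{\xx}(t))$ or $(I-R)\yy(t)$ against the control mismatches $u_{\ell,x}(\hat t_\ell)-u_{\ell,x}(t)$, $u_{\ell,y}(\hat t_\ell)-u_{\ell,y}(t)$, $u_g(\hat t_g)-u_g(t)$. After integrating, \pref{as:energy} handles $\int_0^t\dot{\cE}_{\mathrm{cont}}(\tau)\md\tau$, so the whole task reduces to bounding $\int_0^t\Delta(\tau)\md\tau$ by a small fraction of $\int_0^t\big(\gamma_1(\tau)\norm{\nabla f(\bar{\xx}(\tau))}^2+\gamma_2(\tau)\norm{(I-R)\yy(\tau)}^2\big)\md\tau$.

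The heart of the argument is a bound on how far the state moves inside a sampling interval. From \eqref{eq:discrete_case_3}, for $t$ in an LCFL sub-interval, $\norm{[\xx;\vv;\zz](t)-[\xx;\vv;\zz](\hat t_\ell)}\le\tau_\ell\big(\eta_g(\tau)\norm{u_g(\hat t_g)}+\eta_\ell(\tau)\norm{u_{\ell,y}(\hat t_\ell)}+\eta_\ell(\tau)\norm{u_{\ell,z}(\hat t_\ell)}\big)$; here \eqref{eq:gg_bound} gives $\norm{u_g}\le\norm{(I-R)\yy}$ and \pref{as:gl_decrease} gives $\norm{u_{\ell,\cdot}(\hat t_\ell)}\le\sqrt{C_f}\,\norm{\nabla f(\xx(\hat t_\ell))}$. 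Since each right-hand factor is itself at a stale time, I would close these as \emph{self-referential} inequalities --- bounding $\norm{(I-R)\yy(\hat t_g)}$ and $\norm{\nabla f(\xx(\hat t_\ell))}$ by their current-time values plus the drift, then solving for the drift --- which is exactly why the denominators $1-4\tau_g^2\eta_g^2(\tau)$ and $1-4L^2\tau_\ell^2\eta_\ell^2(\tau)$ (the small-step conditions) surface in $C_{43}$--$C_{46}$. The multi-rate coupling enters here: the $\xx$-drift relevant to $u_g(\hat t_g)-u_g(t)$ is accumulated over all $Q$ LCFL sub-intervals of a block, so it carries a factor $QC_{44}$, and squaring produces $C_{47}=Q^2C_{44}^2(C_x^2+C_v^2)$ (the $C_z$ part is absent because $\zz$ does not enter the $\yy$-dynamics or $G_g$). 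Finally \pref{as:gl_smooth} converts the control mismatches into state drifts, and \asref{as:smooth} converts the stale gradients $\norm{\nabla f(\xx(\cdot))}$ into $\norm{\nabla f(\bar{\xx})}$ plus the consensus error $\norm{(I-R)\xx}\le\norm{(I-R)\yy}$, which is where the $L_f^2$ terms in $C_{41},C_{42}$ originate.

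With the drift estimates in hand I would substitute them into $\Delta(\tau)$, apply Cauchy--Schwarz and then Young's inequality to each inner product --- peeling off a $\tfrac14\gamma_1(\tau)$ or $\tfrac14\gamma_2(\tau)$ share onto $\norm{\nabla f(\bar{\xx})}^2$ or $\norm{(I-R)\yy}^2$, and leaving the squared mismatch weighted by $\eta_\ell^2(\tau)/\gamma_i(\tau)$ or $\eta_g^2(\tau)/\gamma_i(\tau)$ --- then plug in the drift bounds and integrate; collecting the coefficients of $\norm{\nabla f(\bar{\xx})}^2$ and $\norm{(I-R)\yy}^2$ yields precisely $\hat\gamma_1=\gamma_1/2-C_{41}$ and $\hat\gamma_2=\gamma_2/2-C_{42}$, the factors $\min\{N\gamma_1(\tau),\gamma_2(\tau)\}$ in the denominators being the worse of the two Young shares carried by a given drift term. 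Case~III is recovered by setting $Q=1$. I expect the main obstacle to be the step where the two-rate drift inequalities must close simultaneously: the $[\xx;\vv]$-drift feeds the LCFL mismatch, the $\xx$-drift accumulated over the $Q$ sub-steps feeds the GCFL mismatch, and both re-enter the state equation, so one has to treat the drift magnitudes as unknowns of a small linear system and invert it --- which is possible only under step-size restrictions that also force $C_{41}(\tau)<\gamma_1(\tau)/2$ and $C_{42}(\tau)<\gamma_2(\tau)/2$, i.e., the constraint on $\tau_\ell,\tau_g$ flagged right after the lemma.
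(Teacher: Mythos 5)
Your proposal follows essentially the same route as the paper's proof: the same decomposition of $\dot{\cE}$ into the continuous-time term (handled by \pref{as:energy}) plus a sampling-error term, the same Young-inequality peeling with denominators $\min\{N\gamma_1,\gamma_2\}$ and $\gamma_2$, and the same self-referential drift bounds (moving $\norm{\yy(t)-\yy(t_g)}^2$ and $\norm{\yy(t)-\yy(t_\ell)}^2+\norm{\zz(t)-\zz(t_\ell)}^2$ to the left) that produce the denominators $1-4\tau_g^2\eta_g^2$ and $1-4L^2\tau_\ell^2\eta_\ell^2$ and, via the $Q$-fold accumulation, the factor $C_{47}=Q^2C_{44}^2(C_x^2+C_v^2)$. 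You also correctly identify the coupled two-rate drift system as the main technical point, which the paper resolves by the same substitution scheme in its Steps 1--3.
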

Furthermore, we can check that when $\tau_g=0$ and $\tau_\ell=0$, then $C_{41}(\tau)$, $C_{42}(\tau)$ are both zero.  Additionally, $\hat{\gamma}_1(\tau)>0, \hat{\gamma}_2(\tau)>0$ determine the upper bounds for $\tau_g, \tau_\ell$, as well as the choice of the stepsizes of the discretized algorithms. 

Finally, we note that for Case V, a similar result with different $\hat{\gamma}_1(\tau), \hat{\gamma}_2(\tau)$ can be proved using the same technique as \leref{le:case2} and \leref{le:case4}. Since the utility of Case V can be covered mostly by that of Case II (cf. Table \ref{tab:discretization}), and due to the space limitation, we will not discuss this case in detail here.

By using the above results, {it is easy to obtain} the following convergence characterization. The proof is straightforward and follows that of { \thref{theorm:energy:continuous}}.
\begin{theorem}[Convergence of the discretized systems]\label{thm:discretized_system}
   Suppose the GCFL and LCFL satisfy properties \pref{as:gg_decrease}-\pref{as:energy}, and consider the discretized system with $\tau_\ell\geq 0, \tau_g\geq 0$. Then the convergence of the discretized system can be characterized as:
    {\small
    \begin{align*}
        \min_t\Bigg\{\norm{\nabla f(\bar{\xx}(t))}^2 &+ \norm{(I - R)\cdot \yy(t)}^2\Bigg\}= \cO\left(\max\left\{\frac{1}{\int^T_{0}\hat{\gamma}_1(\tau)\md \tau}, \frac{1}{\int^T_{0}\hat{\gamma}_2(\tau)\md \tau}\right\}\right),
    \end{align*}
    }
    where $\hat{\gamma}_1(\tau)>0$ and $\hat{\gamma}_2(\tau)>0$ depend on $\gamma_1(\tau), \gamma_2(\tau),N,C_g,L$ and $\eta_\ell,\eta_g,\tau_\ell,\tau_g,K,Q$, and their choices are specified in Lemmas \ref{le:case1} -- \ref{le:case4}.
\end{theorem}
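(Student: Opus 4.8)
The plan is to mimic the proof of \thref{theorm:energy:continuous} almost verbatim, with the continuous-time descent inequality replaced by the corresponding discretized inequalities from Lemmas \ref{le:case1}--\ref{le:case4}. First I would observe that in each of Cases I--V, the relevant lemma gives a bound of exactly the same shape as \pref{as:energy}, namely
\begin{equation*}
\int_0^t \dot{\cE}(\tau)\,\md\tau \;\leq\; -\int_0^t \Big(\hat{\gamma}_1(\tau)\,\|\nabla f(\bar{\xx}(\tau))\|^2 + \hat{\gamma}_2(\tau)\,\|(I-R)\cdot\yy(\tau)\|^2\Big)\,\md\tau,
\end{equation*}
where $\hat{\gamma}_1(\tau)$ and $\hat{\gamma}_2(\tau)$ are the effective coefficients after accounting for the sampling error: they equal $\gamma_1(\tau),\gamma_2(\tau)$ minus the correction terms $C_{11},C_{21},C_{22},C_{41}(\tau),C_{42}(\tau)$ spelled out in the lemmas. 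So the first step is just to invoke the appropriate lemma for the value of $(\tau_g,\tau_\ell)$ under consideration (and to note Case III is Case IV with $Q=1$, while Case V follows by the same argument as Cases II and IV).

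The second step is the telescoping/integration argument. Since $\cE(t)$ is defined in \eqref{eq:energy} and satisfies $\cE(t)\geq 0$ for all $t\geq 0$, integrating the displayed inequality from $0$ to $T$ and rearranging gives
\begin{equation*}
\int_0^T \hat{\gamma}_1(\tau)\,\|\nabla f(\bar{\xx}(\tau))\|^2\,\md\tau + \int_0^T \hat{\gamma}_2(\tau)\,\|(I-R)\cdot\yy(\tau)\|^2\,\md\tau \;\leq\; \cE(0) - \cE(T) \;\leq\; \cE(0).
\end{equation*}
Lower-bounding each integral on the left by $\big(\int_0^T\hat{\gamma}_i(\tau)\,\md\tau\big)\cdot\min_t\|\cdot\|^2$ and dividing by $\int_0^T\hat{\gamma}_1(\tau)\,\md\tau$ (respectively $\int_0^T\hat{\gamma}_2(\tau)\,\md\tau$) yields
\begin{equation*}
\min_t \|\nabla f(\bar{\xx}(t))\|^2 = \cO\!\left(\frac{1}{\int_0^T\hat{\gamma}_1(\tau)\,\md\tau}\right), \qquad \min_t \|(I-R)\cdot\yy(t)\|^2 = \cO\!\left(\frac{1}{\int_0^T\hat{\gamma}_2(\tau)\,\md\tau}\right),
\end{equation*}
and combining these two (taking the max of the two rates) gives the claimed bound on $\min_t\{\|\nabla f(\bar{\xx}(t))\|^2 + \|(I-R)\cdot\yy(t)\|^2\}$. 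This is exactly the computation already carried out in part (2) of the proof of \thref{theorm:energy:continuous}, which is why the paper says the proof "is straightforward and follows that of \thref{theorm:energy:continuous}."

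The one genuine subtlety — and the step I expect to be the main obstacle — is ensuring that $\hat{\gamma}_1(\tau)>0$ and $\hat{\gamma}_2(\tau)>0$ hold, since otherwise dividing by $\int_0^T\hat{\gamma}_i(\tau)\,\md\tau$ is meaningless and the "descent" inequality does not control the quantities of interest. This positivity is not automatic: it imposes upper bounds on the sampling intervals $\tau_g,\tau_\ell$ (equivalently on $Q$, $K$, and the effective stepsizes $\eta_g,\eta_\ell$), because the correction constants $C_{11}$, $C_{22}$, $C_{41}(\tau)$, etc., grow with $\tau_g,\tau_\ell$ — e.g. $q_{\max}\to 0$ as $\tau_g\to 0$, and $C_{45},C_{46}$ blow up when $4L^2\tau_\ell^2\eta_\ell^2(\tau)\to 1$. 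So the proof must record the admissibility conditions: choose $\tau_g,\tau_\ell$ (and the gains) small enough that $\gamma_1(\tau)$ dominates $C_{11}$ or $\tfrac{\gamma_1(\tau)}{2}$ dominates $C_{21},C_{41}(\tau)$, and similarly for the $\gamma_2$ terms, uniformly in $\tau$. Under those conditions $\hat{\gamma}_1,\hat{\gamma}_2$ are bounded below by positive multiples of $\gamma_1,\gamma_2$, the integrals diverge as in the continuous case (preserving the $\cO(1/\sqrt T)$ or $\cO(1/T)$ rates of the underlying system), and the telescoping argument goes through verbatim. Finally, as in \thref{theorm:energy:continuous}, since $\int_0^t\dot{\cE}(\tau)\,\md\tau$ is bounded above and $\cE\geq 0$, one also concludes $\dot{\cE}(t)\to 0$, i.e. asymptotic convergence to the stationary set.
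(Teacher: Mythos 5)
Your proposal is correct and follows essentially the same route as the paper: invoke Lemmas~\ref{le:case1}--\ref{le:case4} to obtain the descent inequality with the effective coefficients $\hat{\gamma}_1,\hat{\gamma}_2$, then repeat the integration-and-division argument from part (2) of \thref{theorm:energy:continuous}, using $\cE(t)\geq 0$. Your additional remarks on the positivity of $\hat{\gamma}_1,\hat{\gamma}_2$ and the resulting restrictions on $\tau_g,\tau_\ell$ match the discussion the paper gives immediately after the theorem.
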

This result indicates that as long as $\hat{\gamma}_1(\tau)> 0$ and $\hat{\gamma}_2(\tau) >0$, the discretized system preserves the convergence rate of the continuous-time system, but it slows down by a factor $\max\left\{\gamma_1(\tau)\left/\hat{\gamma}_1(\tau)\right.,\gamma_2(\tau)\left/\hat{\gamma}_2(\tau)\right.\right\}.$ Further, the condition that $\hat{\gamma}_1(\tau)>0,\hat{\gamma}_2(\tau)>0$ give a way to decide the maximum sampling intervals and the choice of the hyper-parameters (e.g., stepsize, the number of communication steps and local update steps $K$,$Q$) for different  algorithms, as we explain below. 

Let us consider Case I first.  By \leref{le:case1}, 
\[\min\{\gamma_2, 2\gamma_1\}\geq \frac{q^2_{\max}}{\gamma_2},\; \,\mbox{\rm with}\;\; q_{\max} = \e^{\sqrt{2}\tau_g\cdot\left(\sqrt{C_x^2+C_v^2}\eta_\ell(t) \cdot\left(1+\frac{L_f}{N}\right)^2\right)}-1.\]
It follows that $\tau_g \leq \frac{\ln(\min\{\gamma_2(t), \sqrt{2\gamma_1(t)\cdot\gamma_2(t)}\}+1)}{\sqrt{2}\sqrt{C_x + C_v}\eta_\ell(t)\cdot\left(\frac{L_f}{N} + 1\right)^2}.$
Note that all the variables on the right hand side (RHS) can be determined from the continuous-time system. This indicates that by having a convergent continuous-time system, the maximum sampling interval of the GCFL can be determined.  Similarly, for Case II, by  \leref{le:case2}, $\gamma_1(t)\geq 2C_{21}, \; \gamma_2(t) \geq 2C_{22},$ which implies:
{\small\[\tau_\ell \leq \min\left\{\frac{\tilde{\gamma}_1(t)}{\sqrt{2(\tilde{\gamma}_1^2(t)+4C_f)}L\eta_\ell^2(t)}, \frac{\log\left(\frac{\tilde{\gamma}_2(t)+2L\eta_\ell(t)}{2L\eta_\ell(t)}\right)}{C_g\eta_g(t)}\right\},\]}%
where ${\tilde{\gamma}^2_1(t)} := \min\{N\gamma_1^2(t), \gamma_1(t)\cdot\gamma_2(t)\}, \; \tilde{\gamma}_2^2(t) := \min\{\gamma_2^2(t), N\gamma_1(t)\cdot\gamma_2(t)\}$. All the variables on the RHS can be determined from the continuous-time system, so the maximum sampling interval of the LCFL can be determined. 

For Case III-IV, it requires $2C_{41} \leq \gamma_1(t), 2C_{42}\leq \gamma_2(t)$ and $\{C_{4i}\}_{i=3}^{6}$ to be positive. It may be difficult to obtain the exact bound for $\tau_g$, $\tau_\ell$ and $Q$, but we can derive an approximate bound on these parameters. For  $\{C_{4i}\}_{i=3}^{6}$ to be positive, it requires $\tau_\ell \leq \frac{1}{2L\eta_\ell(t)}$, $\tau_g \leq \frac{1}{2\eta_g(t)}$. Set $\tau_\ell = \frac{c}{2L\eta_\ell(t)}, \tau_g = \frac{c}{2\eta_g(t)}$ for some $c<1$. By choosing \begin{equation}\label{eq:stepsize_3}
    c^2 < \min\bigg\{\frac{1}{4}, \min\{\tilde{\gamma}^2_1(t), \tilde{\gamma}^2_2(t)\bigg\}\cdot \min\bigg\{\frac{1}{L^2\eta_\ell(t)^2\cdot(1+L_f^2)}, \frac{1}{C_g\eta_g^2(t)}\bigg\},
\end{equation} we have $C_{41} = \cO(\gamma_1(t)), C_{42} = \cO(\gamma_2(t))$. In addition, $Q = \tau_g/\tau_\ell \approx \frac{2L\eta_\ell(t)}{\eta_g(t)}.$
\section{Application of the Framework}\label{sec:discussion:existing_alg}
In this section, we discuss some applications of the proposed framework. We first show that by properly choosing the  controllers and the discretization scheme, the {multi-rate feedback control system} can be specialized to a number of popular distributed algorithms. Due to space limitations, we relegate the discussion some additional algorithms to appendix~\appref{sup:algorithms}.
Second, we show how the proposed framework can help identify the relationship between different algorithms. {Finally, we use DGT as an example to show how the framework can be used to streamline the convergence analysis of a series of algorithms, as well as to facilitate the development of new ones.}

\subsection{A New Interpretation of Distributed Algorithms}\label{sec:application}
In this part, we map some popular distributed algorithms to the discretized multi-rate systems, with specific GCFL and LCFL, and specific discretization setting. These mappings together provides a new perspective for understanding distributed algorithms.

Let us begin with mapping the decentralized optimization algorithms.

{\noindent\bf DGT~\cite{yuan2020can}:} The updates are given by:
\begin{align}\label{eq:GT}
    &\xx(k+1) = W\xx(k) - c\vv(k),\; \vv(k+1) = W\vv(k) + \nabla f(\xx(k+1)) - \nabla f(\xx(k)),
\end{align}
where $c>0$ is the stepsize.
It corresponds to the discretization Case III with the following continuous-time controllers:
\begin{align}\label{eq:GT:controller}
    u_{g,x} &= (I-W)\cdot\xx,\quad u_{g,v} = (I-W)\cdot\vv,\\
    u_{\ell,x} &= c\vv, \quad  u_{\ell,v} = -\nabla f(\xx)+ \nabla f(\zz), \quad  u_{\ell,z} = \zz - \xx.\nonumber
\end{align}

{\noindent\bf NEXT~\cite{di2016next}:} The updates of NEXT in discrete time are:
\begin{align*}
    \xx(k+{1}/{2}) & = \argmin_{\xx}\tilde{f}(\xx; \xx(k)) + \lin{N\vv(k) - \nabla f(\xx(k)), \xx-\xx(k)},\\
    \xx(k+1) & = W\left(\xx(k) + \alpha\cdot  (\xx(k+{1}/{2})-\xx(k))\right),\\
    \vv(k+1) & = W\vv(k) + \nabla f(\xx(k+1)) - \zz(k),\quad \zz(k+1) = \nabla f(\xx(k+1)),  
\end{align*}
where $\tilde{f}$ is some surrogate function; $k$ indicates the iteration index; $\alpha>0$ and $c>0$ are some stepsize parameters. By using the common choice that $\tilde{f}(\xx; \xx(k)) = \lin{\nabla f(\xx(k)), \xx - \xx(k)} + \frac{\eta}{2}\norm{\xx - \xx(k)}^2,$ (where $\eta>0$ are some constant) the algorithm can be simplify as:
\begin{align}\label{eq:next}
\begin{split}
      \xx(k+1) &= W\xx(k) - {N\alpha}/{\eta}\cdot \vv(k), \quad \zz(k+1) = \xx(k+1),\\
    \vv(k+1) &= W\vv(k) + \nabla f(\xx(k+1)) - \nabla f(\zz(k)).
\end{split}
  \end{align}
 Here, $\xx$ is the optimization variable, $\vv$ tracks the average of the gradients, $\zz$ records the {one-step-behind state of $\xx$}. It corresponds to Case III, with the continuous-time controllers given by:{\small
\begin{equation}\label{eq:next_ct}
    G_{g}(\xx,\vv;A) := \left[\begin{array}{c}
         (I-W)\cdot\xx\\
         (I-W)\cdot\vv
    \end{array}\right],
    \quad
    G_{\ell}(x_i,v_i,z_i; f_i) :=  \left[\begin{array}{c}
         v_i\\
         \nabla f_i(z_i) - \nabla f_i(x_i)\\
         z_i - x_i
    \end{array}\right].
\end{equation}}%

Next, we discuss two popular federated learning algorithms. {In this class of algorithms, the agents are connected with a central server which performs averaging. So the communication graph is a fully connected graph, with the weight matrix being the averaging matrix, i.e., $W = R, \; W_A = I-R$.}

{\noindent\bf FedAvg~\cite{bonawitz2019towards}:} The updates are given by (where GD is used for the local steps):
\begin{align*}
    \xx(k+1) = \begin{cases}
          R\xx(k) - \eta \nabla f(\xx(k)), \quad k \;\mbox{\rm mod}\;Q = 0,\\
        \xx(k) - \eta \nabla f(\xx(k)), \quad k \;\mbox{\rm mod}\;Q \neq 0.
    \end{cases}
\end{align*}
This algorithm has the following continuous-time controller: 
\begin{align}\label{eq:fedavg}
    u_{g,x} = 
        \sum_{k=0}^{\infty}\delta(t-k \tau_g)\cdot (I-R)\cdot\xx(t)
\end{align}
where $\delta(t)$ denotes the Dirac delta function.
It is interesting to note that FedAvg {\it cannot} be mapped to a continuous-time double-feedback system, as it does not have a {\it persistent} GCFL (it is only activated when $t=k\tau_g$; see \eqref{eq:fedavg}). This partially explains why FedAvg algorithm requires additional assumptions for convergence. 

{\noindent\bf Scaffold~\cite{karimireddy2020scaffold}:} The updates are given by (where $k_0:= k - (k\;\mbox{\rm mod}\; K)$): 
{\small
\begin{align*}
    \xx(k+1) &= \begin{cases}
           \xx(k) - \eta_1 \cdot(\nabla f(\xx(k)) - \zz(k) + \vv(k_0)) - \eta_2 \cdot (\xx(k) - \ww(k)), (k\;\mbox{\rm mod}\;Q) = 0,\\
        \xx(k) - \eta_1 \cdot(\nabla f(\xx(k)) - \zz(k) + \vv(k_0)), \; (k\;\mbox{\rm mod}\;Q) \neq 0.
    \end{cases}\\
    \vv(k+1) &= \begin{cases}
        \vv(k) - R\cdot(\vv(k) + \frac{1}{Q\eta_1}\cdot(\ww(k) - \xx(k))), &k\;\mbox{\rm mod}\; Q = 0\\
        \vv(k), &k\;\mbox{\rm mod}\; Q \neq 0,
    \end{cases}\\
        \ww(k+1) &= \begin{cases}
        R\xx(k) &k\;\mbox{\rm mod}\; Q = 0\\
        \ww(k), &k\;\mbox{\rm mod}\; Q \neq 0,
    \end{cases}\\
    \zz(k+1) &= \zz(k) - \frac{1}{Q}\vv(k) - \frac{1}{Q\eta_1}\cdot(\xx(k+1) - \xx(k)). 
\end{align*}
}%
 So it uses the discretization Case IV.  
 Observe that $\ww$ tracks $R\xx$, so in continuous-time we have: $\xx - \ww = (I-R)\cdot\xx + (R\xx - \ww) = (I-R)\cdot \xx + R\dot{\xx}$. Then we can replace $\ww$ by $R\cdot (\xx-\dot{\xx})$, and obtain the continuous-time controller as:
\begin{align}
\begin{split}\label{eq:dynamics:scaffold}
    u_{g,x} &= \eta_2\cdot (I-R)\cdot\xx + \eta_1\vv + \eta_2R\dot{\xx},\quad  u_{g,v} = -(I-R)\cdot(\vv +\dot{\xx}/\eta_1), \quad \\
    u_{\ell,x} &= \nabla f(\xx) - \zz, \quad u_{\ell,v} = \vv +\dot{\xx}/\eta_1, \quad u_{\ell,z} = \vv+\dot{\xx}/\eta_1.
\end{split}
\end{align}

Finally, we discuss one rate optimal algorithm:

{\noindent\bf xFilter~\cite{sun2019distributed}:} The updates are given by (where $k_0:= k - (k\;\mbox{\rm mod}\; K)$):
{\small
\begingroup
\allowdisplaybreaks
\begin{align*}
    \xx(k+1) &= \eta_1\cdot((1-\eta_2)I-\eta_2\cdot(I-W))\cdot\xx(k) + (1-\eta_1)\cdot\xx(k-1) + \eta_1\eta_2\vv(k_0)\\
        &= \xx(k) - \eta_1\eta_2 \cdot(2I-W)\xx(k) - (1-\eta_1)\cdot(\xx(k)-\xx(k-1)) + \eta_1\eta_2 \vv(k_0),\\
    \vv(k+1) &= \begin{cases}
        \vv(k) + (\ww_1(k) - \ww_2(k)) - (I-W)\cdot\xx(k), &k\;\mbox{\rm mod}\; K = 0\\
        \vv(k), &k\;\mbox{\rm mod}\; K \neq 0,
    \end{cases}\\
    \ww_1(k+1) &= \begin{cases}
        \xx(k) - \eta_3\nabla f(\xx(k)), &k\;\mbox{\rm mod}\; K = 0\\
        \ww_1(k), &k\mod K\neq 0,
    \end{cases}\\
    \ww_2(k+1) &= \begin{cases}
        \ww_1(k) , &k\;\mbox{\rm mod}\; K = 0\\
        \ww_2(k), &k\;\mbox{\rm mod}\; K \neq 0,
    \end{cases}
\end{align*}
\endgroup
}%
This algorithm uses the discretization Case V. We can see $\ww_2$ tracks $\ww_1$, and $\ww_1$ tracks $\xx - \eta_3\nabla f(\xx)$, therefore in continuous-time we have $\ww_1 - \ww_2 = \dot{\xx} - \eta_3\cdot \dot{\nabla} f(\xx)$, with the following continuous-time system:
\begin{align}
\begin{split}\label{eq:xfilter:1}
    \dot{\xx} &= -\eta_1\eta_2\cdot(2I-W)\cdot\xx +\eta_1\eta_2 \vv - (1-\eta_1)\cdot\dot{\xx},\\
    \dot{\vv} &= \dot{\xx} - \eta_3\dot{\nabla} f(\xx) - (I-W)\cdot\xx.
\end{split}
\end{align}
Integrating over time, and use the initialization that $\vv(0) = \xx(0) - \eta_3\nabla f(\xx(0))$, we have the following expression for $\vv(t)$: 
{\small\[\vv(t) = \int^t_{0}(\dot{\xx}(\tau) - \eta_3 \dot{\nabla} f(\xx(\tau))-(I-W)\cdot \xx(\tau)) \md \tau = \xx(t) - \eta_3\nabla f(\xx(t))-\int^t_0(I-W)\cdot \xx(\tau) \md \tau.\]}%
Define $\vv_1 = \frac{1}{2-\eta_1}\cdot(\xx -\vv)$,
$\zz = \frac{\eta_3}{2-\eta_1}\nabla f(\xx)$, then \eqref{eq:xfilter:1} can be equivalently written as:
\begin{align*}
    \dot{\xx} &=  -\eta_1\eta_2\cdot(I-W)\cdot\xx -\eta_1\eta_2\cdot(2-\eta_1)\cdot\vv_1- (1-\eta_1)\cdot\dot{\xx},\\
    &= -\eta_1\eta_2\cdot(I-W)\cdot\xx -\eta_1\eta_2\cdot(2-\eta_1)\cdot(\vv_1 - \zz) - (1-\eta_1)\cdot\dot{\xx} - \eta_1\eta_2\eta_3\nabla f(\xx)\\
    \dot{\vv}_1 &= \frac{1}{2-\eta_1}\cdot(I-W)\cdot\xx + \frac{\eta_3}{2-\eta_1}\dot{\nabla} f(\xx),\quad \dot{\zz} = \frac{\eta_3}{2-\eta_1}\dot{\nabla} f(\xx).
\end{align*}
The dynamic of $\dot{\xx}$ implies $\frac{1}{2-\eta_1}(I-R)\cdot(I-W)\cdot\xx = -(I-R)\cdot\left(\vv_1 +\frac{1}{\eta_1\eta_2}\dot{\xx}\right),$ where $(I-R)\cdot(I-W) = (I-W)$ by \pref{as:gg_decrease}. Substituting this into $\dot{\vv}_1$, defining $\eta_4:= \eta_1\eta_2, \eta_5 := (2-\eta_1), \eta_6 :=  \eta_1\eta_2\eta_3$, and rearranging the terms, we obtain the following equivalent controller:
\begin{align*}
    u_{g,x} &= \eta_4\cdot(I-W)\cdot\xx + \eta_4\eta_5\vv_1 + (\eta_5-1)\cdot\dot{\xx},\quad  u_{g,v} = -(I-R)\cdot(\vv_1 +\dot{\xx}/\eta_4), \quad \\
    u_{\ell,x} &= \eta_6\nabla f(\xx) - \eta_4\eta_5\zz, \quad u_{\ell,v} = \frac{\eta_3}{\eta_5}\dot{\nabla} f(\xx), \quad u_{\ell,z} = \frac{\eta_3}{\eta_5}\dot{\nabla} f(\xx).
\end{align*}
Interestingly, the above dynamics is  close to those of Scaffold in \eqref{eq:dynamics:scaffold}, except that Scaffold uses  $R$ instead of $W$, a different stepsize, and use $R \dot{\xx}$ in $u_{g,x}$ instead of $\dot{\xx}$.

\subsection{Algorithms Connections}
We summarize the discussion in the previous subsection in Table \ref{tab:alg_summary}. It is interesting to observe that, some seemingly unrelated algorithms, in fact are very closely related in continuous-time. For example, somewhat surprisingly, Scaffold and xFilter share very similar  continuous-time dynamics, although they are designed for very different purposes: the former is designed to improve FedAvg algorithm to better deal with data heterogeneity, while the latter is a primal-dual algorithm designed to achieve the optimal graph dependency. Similarly, each pair of algorithms FedPD and DLM, FedProx and DGD shares the same continuous-time dynamics (these algorithms are discussed in detail in appendix \ref{sup:algorithms}).
The latter two relations are relatively easier to identify. For example, FedPD and DLM are in fact designed from the same primal-dual perspective.  

\begin{table}[tb]
    \centering
    \begin{tabular}{cc|rrr}
        \hline
         GCFL           & LCFL                          & FL     & RO     & DO      \\
         \hline
         $(I-W)\cdot\xx$    & $\nabla f(\xx)$               & FedProx           & --            & DGD       \\
         $(I-W)\cdot\yy$    & $-\nabla f(\xx) + \nabla f(\zz)$  & --           & --            & DGT, NEXT  \\
         $c\cdot(I-W)\cdot\xx + \vv$ & $\nabla f(\xx)$           & FedPD           & --            & DLM        \\
         $(I-W)\cdot\xx + \eta\vv+ R\dot{\xx}$& $\nabla f(\xx) - \zz$& Scaffold   & --       & --\\
         $(I-W)\cdot\xx + \eta\vv+ \dot{\xx}$& $\nabla f(\xx) - \zz$& --   & xFilter       & --\\
         \hline
    \end{tabular}
    \caption{A summary of the controllers used in different algorithms. In GCFL and LCFL we abstract the most important steps of the controller.}
    \label{tab:alg_summary}
\end{table}

Additionally, from the table we can see that there are a few missing entries. Each of these entries represents a new algorithm. Also, we can combine different GCFLs and LCFLs, or design new controllers, to create new control systems (hence algorithms) that are not included in this table.
\subsection{Convergence Analysis and Algorithm Design: A Case Study}\label{sec:example}
In this subsection, we use the DGT algorithm as an example to illustrate how our proposed framework can be used in practice to analyze algorithm behavior, and to facilitate the development of new algorithms.

The iteration of the DGT is given in \eqref{eq:GT}. Under \asref{as:connect} -- \asref{as:lower_bounded}, this algorithm converges to the stationary point of the problem at a rate of $\cO(1/T)$~\cite{lu2019gnsd,sun2020improving}. To use our framework to analyze it, we will first construct a continuous-time double-feedback system, apply the discretization scheme III, and finally leverage Lemma \ref{le:case4} and Theorem \ref{thm:discretized_system} to obtain the convergence rate.

\subsubsection{Continuous-time Analysis}
 We begin by analyzing the continuous-time counterpart of the DGT, whose dynamics, according to \eqref{eq:GT:controller}, is given by:
\begingroup
\allowdisplaybreaks
\begin{equation}\label{eq:gt:dynamics}
    \begin{aligned}
        &\dot{\xx}(t) = - \eta_g(t)\cdot(I-W)\cdot\xx(t) - \eta_\ell(t)\cdot(c  \vv(t)), \quad         \dot{\zz}(t) = -\eta_\ell(t)\cdot(\zz(t) - \xx(t))\\
        &\dot{\vv}(t) = -  \eta_g(t)\cdot(I-W)\cdot\vv(t) + \eta_\ell(t)\cdot(\nabla f(\xx(t)) - \nabla f(\zz(t)))
    \end{aligned}
\end{equation}
\endgroup
where $\eta_g(t)=1, \eta_\ell(t)=1, \forall~t$. 

Let us verify properties \pref{as:gg_decrease}-\pref{as:energy}. First, it is easy to prove \pref{as:gg_linear} with the definition of $u_{g}$ given in \eqref{eq:GT:controller}. To show \pref{as:gg_decrease}, {recall that we have defined $W := I - A^T\rm{diag}(\ww)A$, so it is easy to verify that $\bone^T \cdot(I-W) = \bone^T\cdot A^T\rm{diag}(\ww)A = 0$ and $C_g = 1 - \lambda_2(W)$.}

To show \pref{as:gl_smooth}, we have the following bounds for different parts of the local controller:
\begingroup
\allowdisplaybreaks
{\small
\begin{align*}
    \norm{G_{\ell,x}(x_i,v_i,z_i;f_i) - G_{\ell,x}(x'_i,v'_i,z'_i;f_i)} &= \norm{c(v_i - v'_i)} = c \norm{v_i-v'_i}\\
    \norm{G_{\ell,v}(x_i,v_i,z_i;f_i) - G_{\ell,v}(x'_i,v'_i,z'_i;f_i)} &= \norm{\nabla f_i(x_i) - \nabla f_i(z_i) - \nabla f_i(x'_i) + \nabla f_i(z'_i)}\\
    &\leq \norm{\nabla f_i(x_i) - \nabla f_i(x'_i)} + \norm{\nabla f_i(z_i)  - \nabla f_i(z'_i)} \\
    &\leq L_f(\norm{x_i-x'_i}+\norm{z_i-z'_i})\\
    \norm{G_{\ell,z}(x_i,v_i,z_i;f_i) - G_{\ell,z}(x'_i,v'_i,z'_i;f_i)} &= \norm{x_i - z_i -x'_i + z'_i} \leq \norm{x_i-x'_i} + \norm{z_i-z'_i},
\end{align*}
}%
\endgroup
where $L_f$ is the constant of the Lipschitz gradient in \asref{as:smooth}. So the smoothness constant of the local controller $g_\ell$ can be expressed as $L = \max\{L_f, c, 1\}$.

To verify \pref{as:gl_decrease}, let us initialize $\vv(t) = \nabla f(\xx(t)), \zz(t) = \xx(t)$, and assume that $\eta_g(t)=0$ in \eqref{eq:gt:dynamics}, that is, the GCFL is inactive. Then we have: 
{\small
\begin{align}
\begin{split}
\label{eq:gt:state_v}
   \zz(t+\tau) & = \xx(t+\tau), \;\vv(t+\tau)  = \nabla f(\xx(t+\tau)),\\ \dot{\xx}(t+\tau)  & = -c \vv(t+\tau) = -c \nabla f(\xx(t+\tau)).
   \end{split}
\end{align}
Further, we can verify that the output of the LCFL can be bounded by {
\begin{align*}
    \norm{u_{i,\ell,x}(t)} &= \norm{c\cdot v_i(t)} = c\norm{\nabla f_i(x_i(t))}\\
    \norm{u_{i,\ell,v}(t)} &= \norm{\nabla f_i(x_i(t)) - \nabla f_i(z_i(t))} \leq 2\norm{\nabla f_i(x_i(t))}\\
    \norm{u_{i,\ell,z}(t)} &= \norm{z_i(t) - x_i(t)} = \norm{c\cdot v_i(t)} = c \norm{ \nabla f_i(x_i(t))}.
\end{align*}
}
The algorithm becomes the gradient flow algorithm that satisfies \pref{as:gl_decrease} with $\alpha(t) = c$, $C_x = c, C_v \leq 2, C_z = c$.} 
Finally, we verify \pref{as:energy}. 
\comment{Recall that in this case, the potential function is given by: {\red[can we remove this and cite? Isn't the potential function always of this form?]}
\begin{align}
    \cE(\xx(t), \vv(t)) := f(\bar{\xx}(t)) - f^\star + \frac{1}{2}\norm{(I-R)\cdot \yy(t)}^2.
\end{align}}
We can compute $\dot{\cE}(t)$ as follows: 
\small{
\begin{align}
    \dot{\cE}(t) & = -\lin{\nabla f(\bar{\xx}(t)), \frac{1}{N}\sum^N_{i=1}u_{\ell,x}(t)} -\lin{(I-R)\cdot\yy(t), u_{g,y}(t) + u_{\ell,y}(t)}\nonumber\\
    & \stackrel{\eqref{eq:GT:controller}}= -\lin{\nabla f(\bar{\xx}(t)), c\bar{\vv}(t)} -\lin{(I-R)\cdot\yy(t), (I-W)\cdot\yy(t)}\label{eq:E:dynamics}\\
    & \qquad- \lin{(I-R)\cdot\xx(t), c \vv(t)} + \lin{(I-R)\cdot\vv(t), \nabla f(\xx(t)) - \nabla f(\zz(t))}.\nonumber
\end{align}
}%
Then we bound each term on the RHS above separately, and finally integrate. The detailed derivation is relegated to supplementary Sec. \ref{sec:P5:DGT}. 
The final bound we can obtain is:
\begingroup
\allowdisplaybreaks
\small{
\begin{align*}
    \int^t_{0}\dot{\cE}
    &\leq -\frac{c}{2}\int^t_{0}\norm{\nabla f(\bar{\xx}(\tau))}^2\md \tau -\frac{c-8L_fc^2/\beta}{2}\int^t_{0}\norm{\bar{\vv}(\tau)}^2\md \tau\\
    &\quad- (C_g - \frac{c+2cL_f+\beta + 16cL_f/\beta}{2})\cdot\int^t_{0}\norm{(I-R)\cdot\yy(\tau)}^2\md \tau.\nonumber
\end{align*}
}%
\endgroup
By choosing $\beta < C_g/2$, $\frac{C_g^2}{64L_f}\leq c \leq \frac{C_g^2}{32L_f}$, we can verify that the dynamics of the continuous-time system \eqref{eq:gt:dynamics} satisfy \eqref{eq:E:dynamic}, with  $\gamma_1(t) \geq\frac{C_g^2}{128L_f}$ and $\gamma_2(t) \geq \frac{C_g}{4}$. Applying Theorem \ref{theorm:energy:continuous}, we know that continuous-time gradient tracking algorithm converges in $\cO(1/T)$.

\subsubsection{New Algorithm Design}
Now that we have verified properties \pref{as:gg_decrease}-\pref{as:energy}  for the  continuous-time system \eqref{eq:gt:dynamics}, we can derive a number of related algorithms by adjusting the discretization schemes, or by changing the GCFL.

Let us first consider changing the discretization scheme from Case III to Case IV, where $\tau_g = Q\tau_\ell > 0.$ In this case, there will be $Q$ local computation steps between every two communication steps. This kind of update scheme is closely related to algorithms in FL, and we refer to the resulting algorithm the Decentralized Federated Gradient Tracking (D-FedGT) algorithm. Its steps are listed below ({where}~$k_0 = k - (k~\mbox{mod}~ Q)$):
\begin{equation}
    \begin{aligned}
        &\xx(k+1) = \xx(k) - \tau_\ell \vv(k) - \tau_g (I-W)\xx(k_0),\\
        &\vv(k+1) = \vv(k) + \nabla f(\xx(k+1)) - \nabla f(\xx_k) - \tau_g (I-W)\vv(k_0).
    \end{aligned}
\end{equation}
By applying \leref{le:case4} and \thref{thm:discretized_system}, we can directly obtain that this new algorithm also converges with rate $\cO(\frac{1}{T})$ with properly chosen constant $\tau_\ell,\tau_g$ and $Q$ following \leref{le:case4} and \eqref{eq:stepsize_3}.

Second, we can replace the GCFL of the DGT with an {\it accelerated} consensus controller~\cite{ghadimi2011accelerated}. This leads to the a new Accelerated Gradient Tracking (AGT) algorithm:
\begin{equation}
    \begin{aligned}
        &\xx(k+1) = \xx(k) - \eta'_\ell \vv(k) - \eta'_g (1+c)\xx(k) + c \vv_{x}(k),\\
        &\vv(k+1) = \vv(k) + \nabla f(\xx(k+1)) - \nabla f(\xx(k)) - \eta_g (1+c)\vv(k) + c \vv_{v}(k),\\
        &\vv_{x}(k+1) = \xx(k),\quad \vv_{v}(k+1) = \vv(k), \; \text{ where } c := \frac{1-\sqrt{1-\lambda_2(W)}}{1+\sqrt{1-\lambda_2(W)^2}}.
    \end{aligned}
\end{equation}
Then by examining \pref{as:gg_decrease}, we know that the network dependency of the new algorithm improved from $C_g$ to $\hat{C}_g = C_g \cdot \frac{\sqrt{C_g}+ \sqrt{2-C_g}}{\sqrt{C_g}+ C_g\sqrt{2-C_g}} > C_g.$ And when $C_g$ is small, $\hat{C}_g$ scales with $\sqrt{C_g}$.
Then according to the derivation in the last subsection, we have $\gamma_2(t) \ge \frac{\hat{C}_g}{4}$. Finally, we can apply \thref{thm:discretized_system}, and asserts that the new algorithm improves the convergence speed from $\cO(\frac{1}{C_gT})$ to $\cO(\frac{1}{\hat{C}_gT})$.

\subsubsection{Numerical Results}
We provide numerical results for implementations of Continuous-time (CT) DGT, the D-FedGT and D-AGT algorithms discussed in the previous subsection. We first verify an observation from \thref{thm:discretized_system}, that discretization slows down the convergence speed of the system. Towards this end, we conduct numerical experiments with different discretization patterns and compare the convergence speed in terms of the stationarity gap. Then we compare the convergence speed of  CT-DGT and CT-AGT,  to demonstrate the benefit of changing the controller in the GCFL from the standard consensus controller to the accelerated one.

In the experiments, we consider the non-convex regularized logistic regression problem:
 \[f_i(\xx;(\aa_i, b_i)) = \log(1+\exp(-b_i\xx^T\aa_i)) + \sum^{d_x}_{d=1}\frac{\beta\alpha(\xx[d])^2}{1+\alpha(\xx[d])^2},\]
 where $\aa_i$ denotes the features and $b_i$ denotes the labels of the dataset on the $i^{\mathrm{th}}$ agent. We set the number of agent $N =20$ and each agent has local dataset of size $500$. We use an Erd{\H o}s--R{\'e}nyi random graph with density 0.5 for the network and optimize the weight matrix $W$ to achieve the optimal $C_g$. We set $c = 1$ for gradient tracking algorithm.

We first compare CT-DGT ($\tau_g = \tau_\ell = 0$) and D-FedGT ($\tau_g = 0.1, \tau_\ell = 0.005, Q = 20$), the result of CT-DGT and D-FedGT is showed in \figref{fig:GT_exp_1}. We can see that by discretizing each loop, the system converges slower as compared with the continuous time system. \figref{fig:GT_exp_2} shows the convergence behavior of the D-FedGT algorithm with different $\tau_g$. We observe that by increasing the sampling interval for GCFL, the convergence of the system slows down and it eventually diverges. \figref{fig:GT_exp_3} and \figref{fig:GT_exp_4} show the convergence results of D-AGT compared with DGT in both continuous time and in Case III. We observe that by changing the GCFL, D-AGT converges faster than DGT.

\begin{figure}[tb!]
    \begin{subfigure}[t]{0.45\textwidth}
    \centering
    \includegraphics[width = 0.9\linewidth]{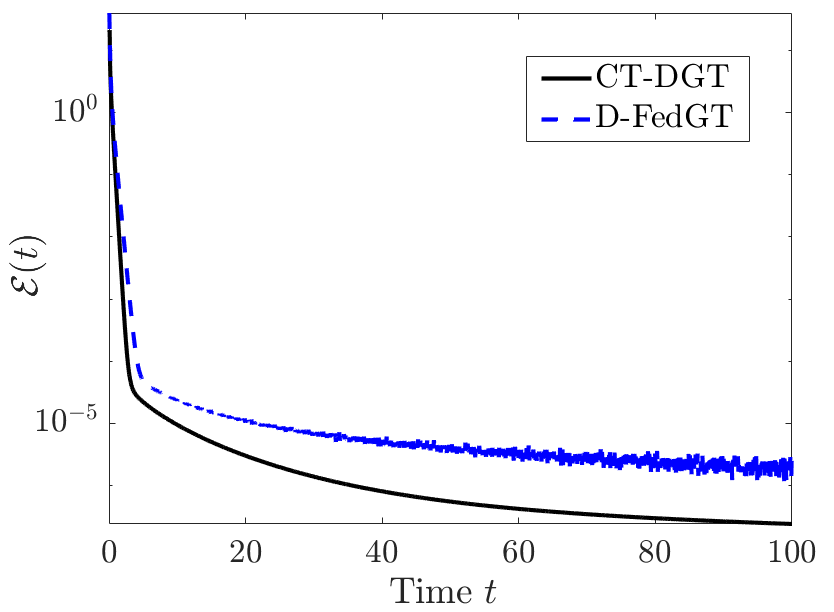}
    \caption{The evoluation of the Energy function $\cE(t)$ of CT-CGT, D-FedGT. }
    \label{fig:GT_exp_1}
    \end{subfigure}
    \hfill
    \begin{subfigure}[t]{0.45\textwidth}
    \centering
    \includegraphics[width = 0.9\linewidth]{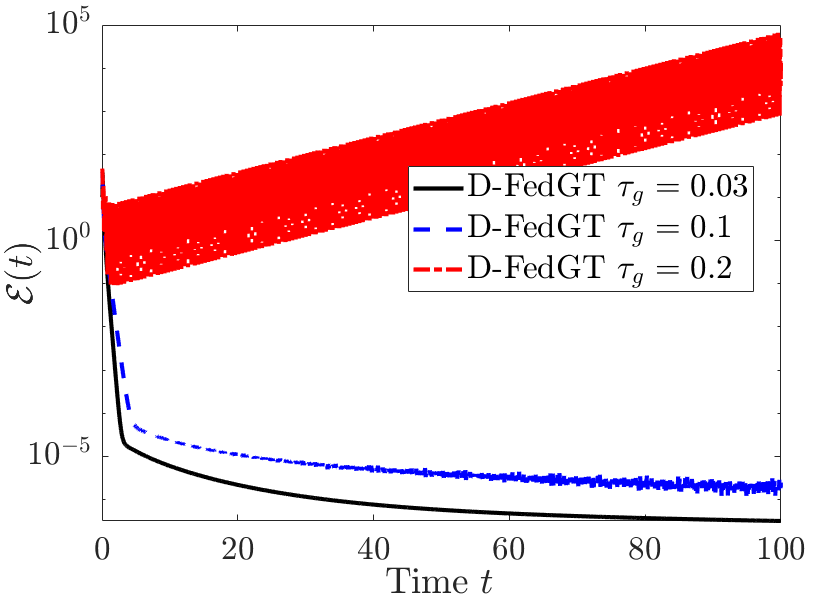}
    \caption{Energy function $\cE(t)$ of D-FedGT with different intervals $\tau_g$.}
    \label{fig:GT_exp_2}
    \end{subfigure}
    \begin{subfigure}[t]{0.45\textwidth}
    \centering
    \includegraphics[width = 0.9\linewidth]{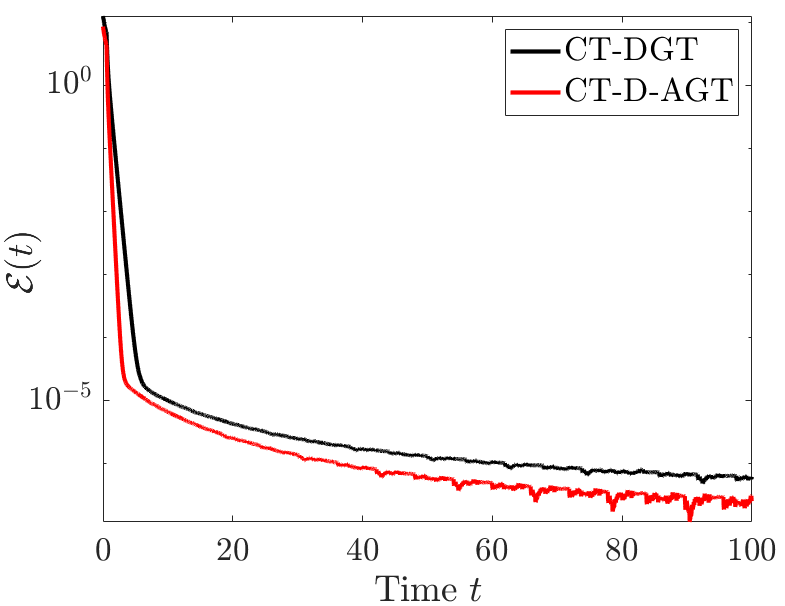}
    \caption{The evolution of the Energy function $\cE(t)$ of CT-DGT and CT-D-AGT.}
    \label{fig:GT_exp_3}
    \end{subfigure}
    \hfill
    \begin{subfigure}[t]{0.45\textwidth}
    \centering
    \includegraphics[width = 0.9\linewidth]{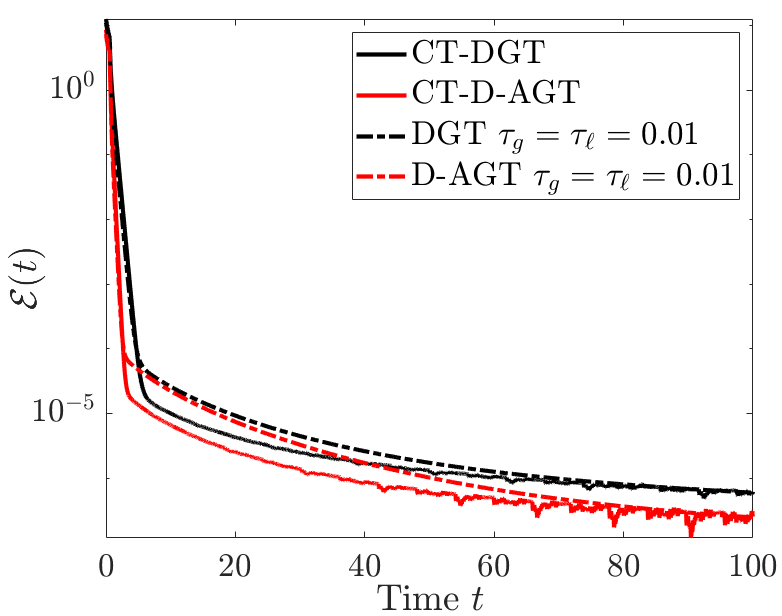}
    \caption{The evolution of the Energy function $\cE(t)$ of DGT and D-AGT.}
    \label{fig:GT_exp_4}
    \end{subfigure}
    \caption{The performance of Continuous-GT, D-FedGT, D-MGT and AGT.}
\end{figure}

\section{Conclusion}
In this work, we have designed a framework to understand distributed optimization algorithms from a control perspective. We have shown that a multi-rate double-feedback control system can represent a wide range of deterministic distributed optimization algorithms. We use a few examples to demonstrate how the proposed framework can help understand the connection between algorithms, as well as facilitate new algorithm design. In the future, we plan to extend the framework to model distributed stochastic algorithms.

\appendix

\section{Proofs of \secref{SEC:DISCRETIZATION}}\label{app:A}
Let $t_{\ell}$ (resp. $t_g$) denote the time at which the local (resp. global) controller samples, that is: $t_\ell := t - t \mbox{ mod }\tau_\ell$ and $t_g := t - t \mbox{ mod }\tau_g.$ To simplify the analysis, we treat the stepsizes $\eta_\ell(t), \eta_g(t)$ as constants in each sampling intervals. Also recall that $\yy(t) =[\xx(t); \vv(t)]$. The following relations will be useful:  \begin{align}
    \lin{a,b} &= \frac{1}{2\alpha}\norm{a}^2+ \frac{\alpha}{2} \norm{b}^2 - \frac{1}{2}\norm{\frac{1}{\sqrt{\alpha}}a + \sqrt{\alpha}b}^2 \leq \frac{1}{2\alpha}\norm{a}^2+ \frac{\alpha}{2} \norm{b}^2\label{eq:ab}\\
    (I-R)^2 &= I - 2R + R^2 = I - R, \quad  {\|R\|\le 1, \quad \|I-R\|\le 1}\label{eq:IR}.
\end{align}

The proofs of \leref{le:case1} - \leref{le:case4} adopt the similar concept in robust control theory. The time derivative of the energy function of the discretized system is given by:
{\small
\begin{align}
    \dot{\cE}(t) & = \underbrace{-\lin{\nabla f(\bar{\xx}(t)), \frac{1}{N}\bone^T\eta_\ell(t) u_{\ell,x}(t)} -\lin{(I - R)\cdot\yy(t), \eta_{\ell}(t)\cdot u_{\ell,y}(t)+\eta_g(t)\cdot u_{g}(t)}}_{\mbox{\rm term I}}\nonumber\\
        &\qquad + \hat{\cE}(t),\label{eq:hat_e}
\end{align}
}%
where ``term I" is the derivative of the continuous-time energy function given in \eqref{eq:E:derivative}; $\hat{\cE}(t)$ is the error caused by discretization. Integrate \eqref{eq:hat_e} and apply \pref{as:energy}, we have:
{\small
\begin{align}
    \int^{t}_0\dot{\cE}(t) & \leq -\int^t_{0}\gamma_1(\tau)\norm{\nabla f(\bar{\xx}(\tau))}^2 + \gamma_2(\tau)\norm{(I-R)\cdot \yy(\tau)}^2\md \tau + \int^t_0\hat{\cE}(\tau)\md \tau.\label{eq:hat:int}
\end{align}
}%
The key idea of proofs is to bound $\int_{0}^{t}\hat{\cE}(\tau) d\tau$ by the first two terms.

\subsection{Proof of \leref{le:case1}}\label{app:proof:case1}
In this case $\hat{u}_{g}(t) = G_g(\xx(t_g), \vv(t_g);A)$. By taking derivative of $\cE(t)$, and by comparing with \eqref{eq:hat_e}, we can obtain
{\small
    \begin{align}
        \hat{\cE}(t) =
        {\eta_g(t)\lin{(I - R)\cdot\yy(t), u_g(t)-\hat{u}_{g}(t)}}.\label{eq:hat:E:3.1} 
    \end{align}}%
Next, we bound $\int^t_0\hat{\cE}(\tau) d \tau$. Towards this end, we  first observe that: {\small
    \begin{align*}
        \lin{(I - R)\cdot\yy(t), u_g(t)-\hat{u}_{g}(t)} &\stackrel{(i)}{=} \lin{(I - R)\cdot\yy(t), G_g(\yy(t)-\yy(t_g);A)}\\
        &= \lin{(I - R)\cdot\yy(t), G_{g}\bigg(\int^t_{t_g}\dot{\yy}(s)ds;A\bigg)}\\
        &\stackrel{\eqref{eq:ab}}\leq \frac{\gamma_2(t)}{2} \norm{(I - R)\cdot\yy(t)}^2 + \frac{1}{2\gamma_2(t)} \norm{G_{g}\bigg(\int^t_{t_g}\dot{\yy}(s)ds;A\bigg)}^2,
    \end{align*}}%
where $(i)$ is due to the linearity property \pref{as:gg_linear}. {Next, we bound the last term above by  $\norm{\nabla f(\bar{\xx}(t))}^2$ and $\norm{(I-R)\cdot \yy(t)}^2$}. To proceed, let us define
\begin{align}\label{def:q}
\begin{split}
    \tilde{\yy}(t) &:=G_{g}\bigg(\int^t_{t_g}\dot{\yy}(s)ds;A\bigg) = u_g(t) -\hat{u}_g(t),\quad \ww(t):= [(I - R)\cdot\yy(t); \nabla f(\bar{\xx}(t))],\\
    q(t) & := \norm{G_{g}\bigg(\int^t_{t_g}\dot{\yy}(s)ds;A\bigg)}/\norm{[(I - R)\cdot\yy(t); \nabla f(\bar{\xx}(t)]]} = \|\tilde{\yy}(t)\|/\|\ww(t)\|.
    \end{split}
\end{align}  
{Using the above definition, we have:
{{\small
\begin{equation}\label{eq:case1:2}
    \bigg\|G_{g}\big(\int^t_{t_g}\dot{\yy}(s)ds;A\big)\bigg\|^2 =   \norm{\tilde{\yy}(t)}^2 = q^2(t)\norm{\ww(t)}^2.
\end{equation}}}%
It then suffices to bound $q(t)$.} Towards this end, let us first bound $\|\dot{\ww}(t)\|$ by: 
\begingroup
\allowdisplaybreaks
\begin{align}
      \norm{\dot{\ww}(t)} & \stackrel{(i)}= \norm{\left[(I-R)\cdot(\eta_g(t)\hat{u}_g(t) + \eta_\ell(t)u_{\ell,y}(t)); \lin{\partial^2 f(\bar{\xx}(t)), \eta_{\ell}(t)\frac{\bone^T}{N}u_{\ell,x}(t)}\right]}\nonumber\\
        & \leq \eta_g(t)\norm{(I-R)\cdot \hat{u}_g(t)} + \min\bigg\{\eta_\ell(t), \frac{\eta_\ell(t)\norm{\partial^2 f(\bar{\xx}(t))}}{N}\bigg\} \norm{u_{\ell,y}(t)}\nonumber\\
        & \stackrel{(ii)} \leq \eta_g(t)\left(\norm{(I-R)\cdot(u_g(t) - \hat{u}_g(t))} + \norm{(I-R)\cdot u_g(t)}\right) \nonumber\\
        &\qquad + \sqrt{C_x^2+C_v^2}\cdot\eta_\ell(t)\cdot(1+\frac{L_f}{N}) \cdot \norm{\nabla f(\xx(t))}\nonumber\\
        & \stackrel{(iii)} \leq \eta_g(t)\left(\norm{\tilde{\yy}(t)} + \norm{(I-R)\cdot \yy(t)}\right) + \sqrt{C_x^2+C_v^2}\cdot\eta_\ell(t)\cdot(1+\frac{L_f}{N}) \cdot \norm{\nabla f(\xx(t))}\nonumber\\
        & \stackrel{(iv)} \leq \eta_g(t)\cdot q(t)\cdot \norm{\ww(t)} + \eta_g(t)\cdot\norm{(I-R)\cdot \yy(t)} \nonumber\\
        &\qquad + \sqrt{C_x^2+C_v^2}\cdot\eta_\ell(t) \cdot(1+\frac{L_f}{N})\cdot\left(\norm{\nabla f(\bar{\xx}(t))} + \frac{L_f}{N}\norm{(I-R)\cdot \xx(t)}\right)\nonumber\\
        & \stackrel{(v)} \leq \sqrt{2}\left(\eta_g(t) q(t) +\eta_g(t)+ \sqrt{C_x^2+C_v^2}\cdot\eta_\ell(t) \cdot\left(1+\frac{L_f}{N}\right)^2\right)\cdot\norm{\ww(t)}, \label{eq:case_1:1_1}
\end{align}
\endgroup
where $(i)$ can be derived similarly as in \eqref{eq:E:derivative}; in $(ii)$ we add and subtract $u_g(t)$ to the first term, 
apply \pref{as:gl_decrease} to the last term, 
used the following definition
of sub-Hessian:  \[\lim_{\delta \rightarrow 0} \frac{\norm{f(x+\delta) - f(x) - \lin{\nabla f(x), \delta} - \frac{1}{2}\delta^T\partial^2 f(x)\delta}}{\norm{\delta}^2} = 0,\]
and the fact that that under the smoothness  \asref{as:smooth}, it holds that $\norm{\partial^2 f(\xx)} \leq L$ \cite[Theorem 3.1]{poliquin1996generalized}; in $(iii)$ we combine $\|I-R\|\le 1$ and  \eqref{eq:gg_bound} to the second term, use the definition of $\tilde{\yy}(t)$ in \eqref{def:q}; in $(iv)$ we use the definition of $q(t)$ in \eqref{def:q}, add and subtract $\nabla f(\bar{\xx}(t))$ to the last term and apply \asref{as:smooth}; in $(v)$ we use the fact that $\norm{a} +\norm{b} \leq \sqrt{2(\|a\|^2+\|b\|^2)}$, and $\xx$ is a subvector of $\yy$. Then we can bound $\dot{q}(t)$ by:
{\small
\begin{equation*}
    \begin{aligned}
        &\dot{q}(t) = \frac{\dot{\tilde{\yy}}(t)^T\tilde{\yy}(t)}{\norm{\ww(t)}\norm{\tilde{\yy}(t)}} - \frac{\norm{\tilde{\yy}(t)}\ww(t)^T\dot{\ww}(t)}{\norm{\ww(t)}^3}\\
        & \stackrel{(i)}\leq \frac{\norm{\dot{\tilde{\yy}}(t)}\norm{\tilde{\yy}(t)}}{\norm{\ww(t)}\norm{\tilde{\yy}(t)}} + \frac{\norm{\tilde{\yy}(t)}\norm{\ww(t)}\norm{\dot{\ww}(t)}}{\norm{\ww(t)}^3} \stackrel{(ii)}\leq (1+q(t))\frac{\norm{\dot{\ww}(t)}}{\norm{\ww(t)}}\\
        & \stackrel{\eqref{eq:case_1:1_1}}\leq (1+q(t))\cdot\sqrt{2}\left(q(t)\eta_g(t) +\eta_g(t)+ \sqrt{C_x^2+C_v^2}\eta_\ell(t) \cdot\left(1+\frac{L_f}{N}\right)^2\right),
    \end{aligned}
\end{equation*}}%
where in $(i)$ we apply the Cauchy–Schwarz inequality; $(ii)$ is due to the definition of $q(t)$ in \eqref{def:q}, and the relations below (where equality comes from the linearity property \pref{as:gg_linear}): \[\norm{\dot{\tilde{\yy}}(t)} = \norm {G_{g}(\dot{\yy}(t);A)}\stackrel{\eqref{eq:gg_bound}}\leq \norm{(I-R)\cdot \dot{\yy}(t)} \leq {\norm{\dot{\ww}(t)}}.\]
Note that $q(t_g) = 0$, solve the above inequality of $\dot{q}(t)$ by using Grownwall's inequality, we obtain $q(t) \leq q_{\max} := \exp\bigg\{\sqrt{2}\tau_g\cdot\left(\sqrt{C_x^2+C_v^2}\cdot \eta_\ell(t) \cdot\big(1+{L_f}/{N}\right)^2\big)\bigg\}-1$.
Plug in this estimate to \eqref{eq:case1:2}, and further to \eqref{eq:hat:E:3.1} and \eqref{eq:hat:int}, we obtain:
{\small
\begin{align*}
      &\int^t_0\dot{\cE}(\tau)\md \tau\leq \int^t_0\left(-\gamma_1(\tau)\norm{\nabla f(\bar{\xx}(\tau))}^2-\gamma_2(\tau)\norm{(I-R)\cdot \yy(\tau)}^2\right)\md \tau\\
        &\qquad+ \int^t_{0}\left(\frac{\gamma_2(\tau)}{2} \norm{(I - R)\cdot\yy(\tau)}^2 + \frac{1}{2\gamma_2(\tau)}q^2_{\max}\norm{\ww(\tau)}^2 \right)\md \tau\\
        & = \int^t_{0}-\left(\gamma_1(\tau) - \frac{q_{\max}^2}{2\gamma_2(\tau)}\right)\cdot\norm{\nabla f(\bar{\xx}(\tau))}^2-\left(\frac{\gamma_2(\tau)}{2} - \frac{q^2_{\max}}{2\gamma_2(\tau)}\right) \cdot\norm{(I - R)\cdot \yy(\tau)}^2\md \tau.
\end{align*}}%

\subsection{Proof of \leref{le:case2}}\label{app:proof:case2}
For notation simplicity, let us define the discrete time controller output as  $\hat{u}_{i,\ell}(t) = G_{i,\ell}(x_i(t_\ell), v_i(t_\ell), z_i(t_\ell);f_i)$. Then we can write $\dot{\cE}(t)$ similarly as in \eqref{eq:hat_e}, and the error term $\hat{\cE}(t)$ in this case can be expressed, and bounded as below: {\small
\begin{align}
      \hspace{-0.5cm}\hat{\cE}(t)= &\lin{\nabla f(\bar{\xx}(t)), \hspace{-0.1cm} \frac{\eta_\ell(t)}{N}\bone^T\hspace{-0.1cm}(u_{\ell,x}(t)-\hat{u}_{\ell,x}(t))} + \lin{(I-R)\yy(t), \eta_\ell(t) (I-R)\cdot(u_{\ell,y}(t)-\hat{u}_{\ell,y}(t))}\nonumber\\
        &\stackrel{\eqref{eq:ab}}\leq \frac{\gamma_1(t)}{2}\norm{\nabla f(\bar{\xx}(t))}^2 + \frac{\gamma_2(t)}{2}\norm{(I-R)\cdot\yy(t)}^2\nonumber\\
        &\quad + \frac{\eta_\ell^2(t) }{2N\gamma_1(t)}\norm{R\cdot(u_{\ell,y}(t)-\hat{u}_{\ell,y}(t))}^2 + {\frac{\eta_\ell^2(t) }{2\gamma_2(t)}}\norm{(I-R)\cdot(u_{\ell,y}(t)-\hat{u}_{\ell,y}(t))}^2\nonumber\\
        & \leq \frac{\gamma_1(t)}{2}\norm{\nabla f(\bar{\xx}(t))}^2 + \frac{\gamma_2(t)}{2}\norm{(I-R)\cdot\yy(t)}^2\nonumber\\
        & \quad \quad + \frac{\eta_\ell^2(t)L^2 }{2\min\{N\gamma_1(t), \gamma_2(t) \}}\left(\norm{\yy(t) -\yy(t_\ell)}^2 +\norm{\zz(t) -\zz(t_\ell)}^2\right). \label{eq:inner:case2}
    \end{align}}%
 where the last inequality combines \eqref{eq:IR} and the Lipschitz gradient property \pref{as:gl_smooth}, which gives:
\begin{align*}
    \norm{u_{\ell,y}(t)-\hat{u}_{\ell,y}(t)}^2  & = \sum_{i=1}^{N}\|G_{\ell}(\xx_i(t),\vv_i(t),\zz_i(t)) -G_{\ell}(\xx_i(t_{\ell}),\vv_i(t_{\ell}),\zz_i(t_{\ell}))\|^2 \\
    & \leq L^2(\norm{\yy(t) -\yy(t_\ell)}^2 +\norm{\zz(t) -\zz(t_\ell)}^2).
\end{align*}%
The key step is to bound the last term in \eqref{eq:inner:case2}. Towards this end, first note that we have the following relations from \eqref{eq:discrete_case_2} and \pref{as:gg_linear}:
\begin{align*}
    (I-R)\cdot \dot{\yy}(t) &= -\eta_g(t) \cdot (I-R) \cdot u_{g,y}(t) - \eta_\ell(t) \cdot (I-R) \cdot \hat{u}_{\ell,y}(t).\\
    &= -\eta_g(t) \cdot (I-R) \cdot W_A\yy(t) - \eta_\ell(t) \cdot (I-R) \cdot \hat{u}_{\ell,y}(t).
\end{align*}
Solving this differential equation with initial condition $\yy(t_\ell)$, we obtain: 
\begin{align}
    (I-R)\cdot \yy(t) & = \e^{-(I-R)\cdot W_A \int^t_{t_\ell}\eta_g(s)\md s}\left(\yy(t_\ell) - \int^t_{t_\ell}\eta_\ell(s)\e^{(I-R)\cdot W_A \int^s_{t_\ell}\eta_g(s_1)\md s_1}\md s \cdot \hat{u}_{\ell,y}(t)\right). \label{eq:y:inte}
\end{align}
This expression for $\yy(t_{\ell})$ can be used to further bound the following term: 
{\small
\begin{align}
    &\norm{(I-R)\cdot (\yy(t) - \yy(t_\ell))}^2\nonumber\\
    & \stackrel{\eqref{eq:y:inte}}= \Bigg\|(I-R) \cdot \bigg(\yy(t) - \left(\e^{-(I-R)\cdot W_A \int^t_{t_\ell}\eta_g(s)\md s}\right)^{-1}(I-R)\cdot\yy(t) \\
    & \quad - \int^t_{t_\ell}\eta_\ell(s)\e^{(I-R)\cdot W_A \int^s_{t_\ell}\eta_g(s_1)\md s_1}\md s \cdot \hat{u}_{\ell,y}(t)\bigg)\Bigg\|^2 \nonumber\\
    & \stackrel{(i)}\leq (1+ \beta)\norm{I-(I-R)\cdot\left(\e^{-(I-R)\cdot W_A \int^t_{t_\ell}\eta_g(s)\md s}\right)^{-1}}^2\norm{(I-R)\cdot \yy(t)}^2\nonumber\\
    &\qquad + (1+\frac{1}{\beta})\norm{\int^t_{t_\ell}\eta_\ell(s)\e^{(I-R)\cdot W_A \int^s_{t_\ell}\eta_g(s_1)\md s_1}\md s \cdot (I-R) \cdot \hat{u}_{\ell,y}(t))}^2 \nonumber\\
    & \stackrel{(ii)}\leq (1+ \beta)\cdot \left(\frac{1-C_y}{C_y}\right)^2\cdot \norm{(I-R)\cdot \yy(t)}^2 + (1+\frac{1}{\beta})\cdot \left(\frac{\tau_\ell\eta_\ell(t)}{C_y}\right)^2\cdot \norm{(I-R) \cdot\hat{u}_{\ell,y}(t))}^2\nonumber\\
    & \stackrel{(iii)} = \left(\frac{1-C_y}{C_y^2}\right)\cdot \norm{(I-R)\cdot \yy(t)}^2 + \left(\frac{\tau_\ell^2\eta_\ell^2(t)}{C_y}\right)\cdot \norm{(I-R) \cdot \hat{u}_{\ell,y}(t))}^2,\label{eq:case_2_1_1}
\end{align}}%
where in $(i)$ we use Cauchy–Schwarz inequality (with $\beta>0$ being an arbitrary constant); in $(ii)$ we bound the first norm with \pref{as:gg_decrease} so that $\|(I-R) W_A\| = \norm{W_A}\ge C_g$, which implies the following: \[\norm{I-(I-R)\cdot\left(\e^{-(I-R)\cdot W_A \int^t_{t_\ell}\eta_g(s)\md s}\right)^{-1}}^2 \leq \left(1 - (\e^{-C_g \int^t_{t_\ell}\eta_g(s)\md s})^{-1} \right)^2;\]then by using the fact that $t-t_\ell \leq \tau_\ell$, $\eta_g(s)$ can be treat as constant in the integration, and define $C_y := \e^{-C_g\tau_\ell\eta_g(t)}$, the bound can be further simplified as $\left(1 - (\e^{-C_g \int^t_{t_\ell}\eta_g(s)\md s})^{-1} \right)^2 \leq \left(1 - \frac{1}{C_y}\right)^2;$ in $(iii)$ we choose $\beta = \frac{C_y}{1-C_y}$.

Using the system dynamics \eqref{eq:discrete_case_2}, we have
\begin{align}
    R\cdot\yy(t) = R\cdot \yy(t_\ell) - \left(\int^t_{t_\ell} \eta_\ell(s)\md s\right)  R \hat{u}_{\ell,y}(t).  
    \label{eq:case_2_1_2}
\end{align}
Then we can bound the last term of \eqref{eq:inner:case2} by:
\begingroup
\small
\allowdisplaybreaks
\begin{align}
        &\norm{\yy(t) -\yy(t_\ell)}^2 +\norm{\zz(t) -\zz(t_\ell)}^2\nonumber\\
        & \stackrel{(i)}= \norm{(I-R)\cdot (\yy(t)-\yy(t_\ell))}^2+ \norm{R\cdot(\yy(t)-\yy(t_\ell))}^2 + \norm{\int^t_{t_\ell}\eta_\ell(s)\md s}^2\cdot \norm{\hat{u}_{\ell,z}(t)}^2\nonumber\\
        & \stackrel{\eqref{eq:case_2_1_1},\eqref{eq:case_2_1_2}}\leq \left(\frac{1-C_y}{C_y^2}\right)\cdot\norm{(I-R)\cdot\yy(t)}^2+ \left(\frac{\tau_\ell^2\eta_\ell^2(t)}{C_y}\right)\cdot \norm{(I-R)\cdot\hat{u}_{\ell,y}(t)}^2\nonumber\\
        &\qquad + \norm{\int^t_{t_\ell} \eta_\ell(s)\md s}^2\norm{R\hat{u}_{\ell,y}(t)}^2+\norm{\int^t_{t_\ell}\eta_\ell(s)\md s}^2\cdot \norm{\hat{u}_{\ell,z}(t)}^2\nonumber\\
        &\stackrel{(ii)}\leq \left(\frac{1-C_y}{C_y^2}\right)\cdot\norm{(I-R)\cdot\yy(t)}^2 +\frac{\left(\tau_\ell \eta_\ell(t)\right)^2}{\min\{C_y,1\}}\left(\norm{\hat{u}_{\ell,y}(t)}^2+\norm{\hat{u}_{\ell,z}(t)}^2\right)\nonumber\\
        &\stackrel{(iii)}\leq \left(\frac{1-C_y}{C_y^2}\right)\cdot\norm{(I-R)\cdot\yy(t)}^2+2C^2_\ell\left(\norm{u_{\ell}(t)-\hat{u}_{\ell}(t)}^2+ \norm{u_{\ell}(t)}^2\right)\nonumber\\
        &\stackrel{(iv)} \leq \left(\frac{1-C_y}{C_y^2}\right)\cdot\norm{(I-R)\cdot\yy(t)}^2+2L^2C^2_\ell\left(\norm{\yy(t) -\yy(t_\ell)}^2 +\norm{\zz(t) -\zz(t_\ell)}^2\right)\nonumber\\
        &\qquad + 4C^2_\ell\cdot(C_x^2+C_v^2+C_z^2)\cdot(\norm{\nabla f(\bar{\xx}(t))}^2+\norm{\nabla f(\xx(t)) - \nabla f(\bar{\xx}(t))}^2)\nonumber\\
        & \stackrel{(v)}\leq \frac{\left(\frac{1-C_y}{C_y^2}\right)+ 4L_f^2C^2_\ell C_f}{1-2L^2C_\ell^2}\norm{(I-R)\cdot\yy(t)}^2 + \frac{4C^2_\ell C_f}{1-2L^2C_\ell^2}\norm{\nabla f(\bar{\xx}(t))}^2,\label{eq:case_2_2}
\end{align}
\endgroup%
where in $(i)$ we separate $\yy(t)-\yy(t_\ell)$ into $R\cdot(\yy(t)-\yy(t_\ell)) + (I-R)\cdot (\yy(t)-\yy(t_\ell))$, expand the square, and use the fact that   $R\cdot(I-R) = 0$;
{in $(ii)$ we bound the integration interval in the last two terms with $t-t_\ell\leq \tau_\ell$, using the fact that $\eta_\ell(s)$ is treated as constant in the integration, and combine the last three terms; in $(iii)$ we add and subtract $u_{\ell}(t)$ to the last term and apply the Cauchy–Schwarz inequality and further define $C_\ell:= \frac{\tau_\ell \eta_\ell(t)}{\min\{C_y,1\}}$; in $(iv)$ we apply \pref{as:gl_smooth} and \pref{as:gl_decrease} to the last two terms and define \begin{equation}\label{eq:def:C}
   C_f := C_x^2+C_v^2+C_z^2;
\end{equation}
in $(v)$ }we apply \asref{as:smooth} to the last term {and move $\norm{\yy(t) -\yy(t_\ell)}^2 +\norm{\zz(t) -\zz(t_\ell)}^2$ to the left and divide both sides by $1-2L^2C_\ell^2$ {(note that this operation is legitimate since we have chosen {$\tau_\ell \leq \frac{1+2C_g\eta_g(t)}{2L\eta_\ell(t)}$} such that $2L^2C_\ell^2<1$})}.
Substitute to $\hat{\cE}$ in \eqref{eq:hat:int}, we have:
{\small
\begin{equation}
    \begin{aligned}
       \int^t_{0}\dot{\cE}(\tau)\md \tau &\leq \int^t_{0}\hspace{-0.1cm}\left(-\left(\frac{\gamma_1(\tau)}{2}-C_{21}\right)\hspace{-0.1cm}\norm{\nabla f(\bar{\xx}(\tau))}^2- \left(\frac{\gamma_2(\tau)}{2} -C_{22} \right) \hspace{-0.1cm} \norm{(I - R)\cdot\yy(\tau)}^2\right)\md \tau, \nonumber
    \end{aligned}
\end{equation}}%
where $C_{21} := \frac{4L^2C^2_\ell\eta_\ell^2(\tau)\cdot C_f}{2(1-2L^2C_\ell^2)\cdot\min\{N\gamma_1(\tau), \gamma_2(\tau)\}}$ and $C_{22} := \frac{L^2\eta_\ell^2(\tau)\cdot\left(\left(\frac{1-C_y}{C_y^2}\right)+ 4L_f^2C^2_\ell C_f\right)}{2(1-2L^2C_\ell^2)\cdot\min\{N\gamma_1(\tau), \gamma_2(\tau)\}}$.

\subsection{Proof for \leref{le:case4}}\label{app:proof:case345}
In Case III-IV, we have $\tau_g= Q \tau_{\ell}$. Also note that  $t_g$, $t_\ell$ were defined at the beginning of Appendix \ref{app:A}. The update of the states can be written as:
{\small
\begin{align}
\begin{split}\label{eq:case_4_1}
\yy(t_g+ (q+1)\tau_\ell) &= \yy(t_g+ q\tau_\ell) - \int^{t_g+(q+1)\tau_\ell}_{t_g+q\tau_\ell}\eta_g(s)\hat{u}_{g}(s) + \eta_\ell(s)\hat{u}_{\ell,y}(s)\md s,\\
    \zz(t_g+ (q+1)\tau_\ell) &= \zz(t_g+ q\tau_\ell) - \int^{t_g+(q+1)\tau_\ell}_{t_g+q\tau_\ell}\eta_\ell(s) \hat{u}_{\ell,z}(s)\md s.
    \end{split}
\end{align}}%

Using the decomposition $\cE(t)=\mbox{\rm term I} + \hat{\cE}(t)$, one can express, and subsequently bound the sampling error as:
\begingroup
\small
\allowdisplaybreaks
\begin{align}
    {\hat{\cE}(t)}&=\lin{\nabla f(\bar{\xx}(t)), \frac{\eta_\ell(t)}{N}\bone^T\cdot(u_{\ell,x}(t)-\hat{u}_{\ell,x}(t))} + \lin{(I - R)\cdot\yy(t), \eta_g(t)\cdot(u_g(t)-\hat{u}_{g}(t))}\nonumber\\
    & \qquad + \lin{(I - R)\cdot\yy(t), \eta_\ell(t)\cdot(u_{\ell,y}(t)-\hat{u}_{\ell,y}(t))}\nonumber\\
    &\stackrel{\eqref{eq:ab}}\leq \frac{\gamma_1(t)}{2}\norm{\nabla f(\bar{\xx}(t))}^2+\frac{\gamma_2(t)}{2}\norm{(I - R)\cdot\yy(t)}^2 \nonumber\\
    &\quad + \frac{\eta_g^2(t)}{2\gamma_2(t)} \norm{(I-R)\cdot(u_g(t)-\hat{u}_{g}(t))}^2+\frac{\eta_\ell^2(t) }{2\min\{N\gamma_1(t), \gamma_2(t)\}}\norm{u_{\ell,y}(t)-\hat{u}_{\ell,y}(t)}^2\nonumber\\
    &\stackrel{(i)}\leq \frac{\gamma_1(t)}{2}\norm{\nabla f(\bar{\xx}(t))}^2+ \frac{\gamma_2(t)}{2}\norm{(I - R)\cdot\yy(t)}^2 + \frac{\eta_g^2(t)}{2\gamma_2(t)} \norm{(I-R)\cdot(\yy(t)-\yy(t_g))}^2\nonumber\\
    &\quad + \frac{L^2\eta_\ell^2(t)}{2\min\{N\gamma_1(t), \gamma_2(t)\}}\left(\norm{\yy(t)-\yy(t_\ell)}^2+\norm{\zz(t)-\zz(t_\ell )}^2\right),\label{eq:hat_e_345}
\end{align}
\endgroup%
where in $(i)$ we {apply \pref{as:gg_linear} and \eqref{eq:gg_bound} to the third term, such that $\norm{(I-R)\cdot(u_g(t)-\hat{u}_{g}(t))}^2 = \norm{(I-R)\cdot W_A(\yy(t)-\yy(t_g))}^2 \leq \norm{(I-R)\cdot (\yy(t)-\yy(t_g))}^2$, and we have used \pref{as:gl_smooth} to the last term}. The key is to bound the last three terms of \eqref{eq:hat_e_345}. We divide it into three steps.

{\bf Step 1)} We  bound the {third term involving $\norm{(I-R)\cdot(\yy(t)-\yy(t_g))}^2$. With \eqref{eq:IR}, we have $\norm{(I-R)\cdot(\yy(t)-\yy(t_g))}^2 \leq \norm{\yy(t)-\yy(t_g)}^2$, then we bound} the RHS by: 
\begingroup
\allowdisplaybreaks
{\small
\begin{align}
    &\norm{\yy(t)-\yy(t_g)}^2  \stackrel{(i)}= \norm{(I-R)\cdot \int^t_{\tau_g} \eta_g(s) \hat{u}_g(s) \md s + \int^t_{t_g} \eta_\ell(s) \hat{u}_{\ell,y}(s)\md s}^2 \nonumber\\
    & \stackrel{(ii)}\leq 2\tau_g^2\eta_g^2(t)\norm{\hat{u}_g(t)}^2 + 2\norm{\int^t_{t_g}\eta_\ell(s)\hat{u}_{\ell,y}(s)\md s}^2\nonumber\\
    & \stackrel{(iii)} \leq 4\tau_g^2\eta_g^2(t)\left(\norm{\hat{u}_g(t) - u_g(t)}^2 + \norm{u_g(t)}^2\right) + 2\tau_\ell^2\sum^{t_\ell}_{\tau = t_g}\eta_\ell^2(\tau)\norm{\hat{u}_{\ell,y}(\tau)}^2\nonumber\\
    & \stackrel{(iv)} \leq 4\tau_g^2\eta_g^2(t)\left(\norm{\yy(t) - \yy(t_g)}^2 + \norm{(I-R)\cdot \yy(t)}^2\right) + 2\tau_\ell^2\sum^{t_\ell}_{\tau = t_g}\eta_\ell^2(\tau)\norm{\hat{u}_{\ell,y}(\tau)}^2\nonumber\\
    & \stackrel{(v)}\leq \frac{4\tau_g^2\eta_g^2(t)}{1-4\tau_g^2\eta_g^2(t)}\norm{(I-R)\cdot \yy(t)}^2 + \frac{2\tau_\ell^2}{1-4\tau_g^2\eta_g^2(t)}\sum^{t_\ell}_{\tau = t_g}\eta_\ell^2(\tau)\norm{\hat{u}_{\ell,y}(\tau)}^2,\label{eq:case_4_2}
\end{align}}%
\endgroup%
{where $(i)$ uses the first relation in \eqref{eq:case_4_1}, and $R\cdot \hat{u}_g(t) = 0$ (see \pref{as:gg_decrease});} 
in $(ii)$ we apply Cauchy-Schwarz inequality {and use the fact that $t-t_\ell \leq \tau_g$ and $\hat{u}_g(s), \eta_g(s)$ remain constants in the integration}; in $(iii)$ we add and subtract $u_g(t)$ in the first term and applied Cauchy-Schwarz inequality, and \eqref{eq:IR}; in $(iv)$ we apply \pref{as:gg_linear} to the first term and get $\hat{u}_g(t) - u_g(t) = G_g(\yy(t) - \yy(t_g); A)$, and apply the second inequality in \eqref{eq:gg_bound}, and the last inequality in \eqref{eq:IR}; $(v)$ holds because we moved $\norm{\yy(t) - \yy(t_g)}^2$ to the left and divide both sides by $1-4\tau_g^2\eta_g^2(t)$, and choose {$\tau_g < \frac{1}{2\eta_g(t)}$} such that $4\tau^2_g \eta^2_g(t)<1$.
To bound the last term of \eqref{eq:case_4_2}, we note that following series of relations:
\begingroup
\allowdisplaybreaks
{\small
\begin{align}
    &\norm{\hat{u}_{\ell,y}(\tau)}^2 \leq \norm{\hat{u}_\ell(\tau)}^2 \leq 2\norm{\hat{u}_{\ell}(\tau) - u_{\ell}(\tau)}^2 + 2\norm{u_{\ell}(\tau)}^2 \label{eq:case_4_2_1}\\
    & \stackrel{(\pref{as:gl_smooth})}\leq 2L^2\cdot\left(\norm{\yy(\tau) - \yy(t_\ell)}^2 + \norm{\zz(\tau) - \zz(t_\ell)}^2\right) + 2\norm{u_{\ell}(\tau)}^2\nonumber\\
    & \stackrel{(\pref{as:gl_decrease})} \leq 2L^2\cdot\left(\norm{\yy(\tau) - \yy(t_\ell)}^2 + \norm{\zz(\tau) - \zz(t_\ell)}^2\right) + 2C_f\norm{\nabla f(\xx(\tau))}^2\nonumber \\
    & 
    \leq 2L^2\cdot\left(\norm{\yy(\tau) - \yy(t_\ell)}^2 + \norm{\zz(\tau) - \zz(t_\ell)}^2\right)\nonumber\\
    & \quad+ 4C_f\left(\norm{\nabla f(\xx(\tau)) - \nabla f(\bar{\xx}(\tau))}^2 + \norm{\nabla f(\bar{\xx}(\tau))}^2\right)\nonumber\\
    & \stackrel{(\asref{as:smooth})} \leq 2L^2\cdot\left(\norm{\yy(\tau) - \yy(t_\ell)}^2 + \norm{\zz(\tau) - \zz(t_\ell)}^2\right) \nonumber\\
    & \quad+ 4C_f\left(L_f^2\norm{(I-R)\cdot\xx(\tau))}^2 + \norm{\nabla f(\bar{\xx}(\tau))}^2\right),\nonumber
\end{align}}%
\endgroup%
where $C_f$ is defined in \eqref{eq:def:C}. Note that we need to further bound $\norm{\yy(\tau)-\yy(t_\ell)}^2+\norm{\zz(\tau)-\zz(t_\ell )}^2$, which is the same to the last two terms in \eqref{eq:case_4_1}.

{\bf Step 2.} We then bound $\norm{\yy(t)-\yy(t_\ell)}^2+\norm{\zz(t)-\zz(t_\ell )}^2$. By \eqref{eq:case_4_1}, we have:
\begingroup\small
\allowdisplaybreaks
    \begin{align}\label{eq:case_4_3}
        &\norm{\yy(t)-\yy(t_\ell )}^2 +\norm{\zz(t)-\zz(t_\ell )}^2\stackrel{\eqref{eq:case_4_1}} = \norm{\int^t_{t_\ell} \eta_g(s)\hat{u}_g(s) + \eta_\ell(s)\cdot \hat{u}_{\ell}(s)\md s}^2\\
        &\stackrel{(i)}\leq 2\tau_\ell^2\eta_g^2(t)\norm{\hat{u}_g(t)}^2 + 2\tau_\ell^2 \eta_\ell^2(t)\cdot\norm{\hat{u}_\ell(t)}^2\nonumber\\
        &\stackrel{\eqref{eq:case_4_2_1}}\leq 2\tau_\ell^2\eta_g^2(t)\norm{\hat{u}_g(t)}^2 + 4L^2\tau_\ell^2\eta_\ell^2(t)\cdot\left(\norm{\yy(t)-\yy(t_\ell)}^2 + \norm{\zz(t) - \zz(t_\ell)}^2\right)\nonumber\\
        & \qquad + 8L^2C_f\tau_\ell^2\eta_\ell^2(t)\cdot\left(\norm{\nabla f(\bar{\xx}(t))}^2 + L_f^2\norm{(I-R)\cdot \xx(t)}^2\right)\nonumber\\
        & \stackrel{(ii)}\leq  \frac{4\tau_\ell^2\eta_g^2(t)}{1- 4L^2\tau_\ell^2\eta_\ell^2(t)}\left(\norm{u_g(t) - \hat{u}_g(t)}^2 + \norm{u_g(t)}^2\right)\nonumber\\
        &\qquad + \frac{8L^2C_f\tau_\ell^2\eta_\ell^2(t)}{1-4L^2\tau_\ell^2\eta_\ell^2(t)}\cdot\left(\norm{\nabla f(\bar{\xx}(t))}^2 + L_f^2\norm{(I-R)\cdot \xx(t)}^2\right)\nonumber\\
        &\stackrel{(iii)}\leq  \frac{4\tau_\ell^2\eta_g^2(t)}{1- 4L^2\tau_\ell^2\eta_\ell^2(t)}\left(\norm{\yy(t) - \yy(t_g)}^2 + \norm{(I-R)\cdot \yy(t)}^2\right)\nonumber\\
        &\qquad + \frac{8L^2C_f\tau_\ell^2\eta_\ell^2(t)}{1-4L^2\tau_\ell^2\eta_\ell^2(t)}\cdot\left(\norm{\nabla f(\bar{\xx}(t))}^2 + L_f^2\norm{(I-R)\cdot \xx(t)}^2\right),\nonumber
    \end{align}
\endgroup%
where in $(i)$ we apply Cauchy-Schwarz inequality; in $(ii)$ add and subtract $u_g(t)$ to the first term and move $\norm{\yy(t)-\yy(t_\ell)}^2 + \norm{\zz(t) - \zz(t_\ell)}^2$ to the left and divide both sides by $1-4L^2\tau_\ell^2\eta_\ell^2(t)$, and choose {$\tau_\ell< \frac{1}{2L\eta_\ell(t)}$} such that $4L^2\tau_\ell^2\eta_\ell^2(t)<1$; in $(iii)$ we apply the second inequality in \eqref{eq:gg_bound}, as well as the fact that $\|I-R\|\le 1$.

To proceed, let us define $C_{43} := \frac{4\tau_g^2\eta_g^2(t)}{1-4\tau_g^2\eta_g^2(t)}$, $C_{44} := \frac{2\tau_\ell^2\eta_\ell^2(t)}{1-4\tau_g^2\eta_g^2(t)}$, $C_{45} := \frac{4\tau_\ell^2\eta_g^2(t)}{1- 4L^2\tau_\ell^2\eta_\ell^2(t)}$, $C_{46} := \frac{8L^2C_f\tau_\ell^2\eta_\ell^2(t)}{1-4L^2\tau_\ell^2\eta_\ell^2(t)}$. Then by plug  \eqref{eq:case_4_2} into \eqref{eq:case_4_3}, 
we have:
\begingroup
\allowdisplaybreaks
{\small
\begin{align}
    &\norm{\yy(t)-\yy(t_\ell )}^2 +\norm{\zz(t)-\zz(t_\ell )}^2 \stackrel{(i)}\leq \left(C_{45} + C_{43}C_{45} + C_{46}L_f^2\right)\cdot\norm{(I-R)\cdot \yy(t)}^2  \label{eq:case_4_3_1}\\
    & \qquad + C_{46}\norm{\nabla f(\bar{\xx}(t))}^2+ QC_{44}C_{45}\cdot\sum^{t_\ell}_{\tau = t_g}\norm{\hat{u}_{\ell,y}(\tau)}^2 \nonumber\\
    & \stackrel{(ii)}\leq \left(C_{45} + C_{43}C_{45} + C_{46}L_f^2\right)\cdot\norm{(I-R)\cdot \yy(t)}^2  + C_{46}\norm{\nabla f(\bar{\xx}(t))}^2\nonumber\\
    & \qquad + QC_{44}C_{45}\cdot\sum^{t_\ell}_{\tau = t_g}(C_x^2+C_v^2)\cdot\left(\norm{\nabla f(\xx(\tau)) - \nabla f(\bar{\xx}(\tau))}^2 + \norm{\nabla f(\bar{\xx}(\tau))}^2\right)\nonumber\\
    &\stackrel{(\asref{as:smooth})}\leq \left(C_{45} + C_{43}C_{45} + C_{46}L_f^2\right)\cdot\norm{(I-R)\cdot \yy(t)}^2  + C_{46}\norm{\nabla f(\bar{\xx}(t))}^2\nonumber \\
    & \qquad+ QC_{44}C_{45}\cdot\sum^{t_\ell}_{\tau = t_g}(C_x^2+C_v^2)\cdot\left(L_f^2\norm{(I-R)\cdot \xx(\tau)}^2 + \norm{\nabla f(\bar{\xx}(\tau))}^2\right),\nonumber
\end{align}}%
\endgroup%
where in $(i)$ we use the fact that $t - t_g \leq Q\tau_\ell;$ in $(ii)$ we first apply \pref{as:gl_decrease} to the last term, then subtract $\nabla f(\bar{\xx}(\tau))$, and finally used Cauchy-Schwartz inequality. This completes Part II of the proof.

{\bf Step 3.} Finally, we substitute \eqref{eq:case_4_3_1} into Part I \eqref{eq:case_4_2_1} then to \eqref{eq:case_4_2}, we obtain:
\begingroup
\allowdisplaybreaks
{\small
\begin{align*}
     &\norm{\yy(t)-\yy(t_g)}^2 \stackrel{\eqref{eq:case_4_2_1}}\leq 4C_fC_{44}\sum^t_{\tau = t_g}\left(L_f^2\norm{(I-R)\cdot\xx(\tau))}^2 + \norm{\nabla f(\bar{\xx}(\tau))}^2\right)\\
     &\qquad + C_{43}\norm{(I-R)\cdot \yy(t)}^2 + 2L^2C_{44}\sum^t_{\tau = t_g}\left(\norm{\yy(\tau) - \yy(t_\ell)}^2 + \norm{\zz(\tau) - \zz(t_\ell)}^2\right)\\
     &\stackrel{\eqref{eq:case_4_3_1}}\leq C_{43}\norm{(I-R)\cdot \yy(t)}^2 + 4C_fC_{44}\sum^t_{\tau = t_g}\left(L_f^2\norm{(I-R)\cdot\xx(\tau))}^2 + \norm{\nabla f(\bar{\xx}(\tau))}^2\right)\\
     &\qquad+ 2L^2C_{44}\sum^t_{\tau = t_g} C_{46}\norm{\nabla f(\bar{\xx}(\tau))}^2\\
     &\qquad + 2L^2C_{44}^2C_{45}\cdot\sum^t_{\tau = t_g} \sum^{\tau}_{\tau_1 = t_g}(C_x^2+C_v^2)\cdot\left(L_f^2\norm{(I-R)\cdot \xx(\tau_1)}^2 + \norm{\nabla f(\bar{\xx}(\tau_1))}^2\right).
\end{align*}
}%
\endgroup%
 Then we substitute \eqref{eq:case_4_2} and \eqref{eq:case_4_3_1} to \eqref{eq:hat_e_345} then to \eqref{eq:hat:int}, we obtain:
\begingroup\allowdisplaybreaks
{\small
\begin{align*}
        \int^t_0\dot{\cE}(\tau)\md \tau &\leq -\int^t_0(\frac{\gamma_1(\tau)}{2}-C_{41}(\tau))\cdot\norm{\nabla f(\bar{\xx}(\tau))}^2\md \tau\\
        &\quad  - \int^t_0 (\frac{\gamma_2(\tau)}{2} - C_{42}(\tau))\cdot\norm{(I - R)\cdot\yy(\tau)}^2\md \tau,
    \end{align*}}%
where we have defined {\small
\begin{align*}
    C_{41} &:= \frac{L^2\eta_\ell^2(\tau)\cdot \left(C_{45}\cdot(1+L_f^2C_{47}+C_{45})+C_{46}L_f^2\right)}{2\min\{N\gamma_1(\tau),\gamma_2(\tau)\}} + \frac{C_g\eta_g^2(\tau)\cdot(C_{43} + L_f^2C_{47})}{2\gamma_2(\tau)},\\
    C_{42} &:= \frac{L^2\eta_\ell^2(\tau)\cdot \left(C_{46} + C_{45}C_{47}\right)}{2\min\{N\gamma_1(\tau),\gamma_2(\tau)\}} + \frac{C_g\eta_g^2(\tau)C_{47}}{2\gamma_2(\tau)}, \; \mbox{ and }\; C_{47}: = Q^2C_{44}^2\cdot(C_x^2+C_v^2).
\end{align*}}%
\endgroup%

\section{Distributed Algorithms as Discretized Multi-Rate Systems}\label{sup:algorithms}

In this section, we provide additional discussions on how to map the distributed algorithms to the discretized multi-rate systems. First, let us discuss decentralized algorithms. 

{\noindent\bf DGD~\cite{nedic2009distributed}:} The updates are given by (where $c>0$ is the stepsize):
\[\xx(k+1) = W \xx(k) - c \nabla f(\xx(k)) = \xx(k) - ((I-W) \xx(k)+ c )\cdot \nabla f(\xx(k)). \]
It uses the discretization Case III, with the following continuous-time controllers:
\begin{align*}
    &u_{g,x} = (I-W)\cdot\xx,\quad u_{\ell, x} = \nabla f(\xx).
\end{align*}

{\noindent\bf DLM~\cite{ling2015dlm}:} The updates are given by:
\begin{align*}
    &\xx(k+1) = \xx(k) - \eta\cdot \left(\nabla f(\xx(k)) + c\cdot  (I-W)\cdot\xx(k) + \vv(k)\right), \\
    &\vv(k+1) = \vv(k) + c \cdot (I-W)\cdot\xx(k+1).
\end{align*}
It corresponds to Case III, with the following continuous-time controllers:
\begin{align*}
    u_{g,x} & = c\cdot (I-W)\cdot\xx + \vv,\quad 
    u_{g,v} = (I-W)\cdot\xx,\quad u_{\ell,x}  = \nabla f(\xx),\quad 
    u_{\ell,v} = 0.
\end{align*}

Next, we discuss some popular federated learning algorithms. {For this class of algorithms, the agents are connected with a central server which performs averaging. The corresponding communication graph is a fully connected graph, with the weight matrix being the averaging matrix, i.e., $W = R, W_A = I-R$.}

{\noindent\bf FedProx~\cite{li2018federated}:} The updates are given by (where GD is used to solve local problems):
{\small
\begin{align*}
    \xx(k+1) = \begin{cases}
        \xx(k) - \eta_1 \nabla f(\xx(k)) - \eta_2 (\xx(k) - \xx(k_0)), \; k\;\mbox{\rm mod}\;Q \neq 0, \; k_0 = k - (k\;\mbox{\rm mod}\;Q),\\
        R\xx(k) - \eta_1 \nabla f(\xx(k)) - \eta_2\cdot (\xx(k) - \xx(k_0)), k\;\mbox{\rm mod}\;Q = 0.
    \end{cases}
\end{align*}
}%
It uses the discretization Case I, with the following continuous-time controllers:
\begin{align*}
    u_{g,x} &= (I-R)\cdot\xx ,\quad 
    u_{\ell,x} = \nabla f(\xx).
\end{align*}

{\noindent\bf FedPD~\cite{zhang2020fedpd}:} The updates are given by (where GD is used to solve local problems):
\begin{align*}
    \xx(k+1) &= \xx(k) - \eta_1\cdot (\nabla f(\xx(k)) + \vv(k) + \eta_2\cdot (\xx(k_0) - R\xx(k_0))), \; k_0 = k - (k\;\mbox{\rm mod}\; Q),\\
    \ww(k+1) &= \begin{cases}
        R \xx(k), k\;\mbox{\rm mod}\; Q = 0\\
        \ww(k), k\;\mbox{\rm mod}\; Q \neq 0,
    \end{cases}\\
    \vv(k+1) &= \begin{cases}
        \vv(k) + \frac{1}{\eta_2}\cdot(\xx(k) - \ww(k)), k\;\mbox{\rm mod}\; Q = 0\\
        \vv(k), k\;\mbox{\rm mod}\;Q \neq 0.
    \end{cases}
\end{align*}
It uses the discretization Case I or IV. Observe that $\ww$ tracks $R\xx$. 
Replace $\ww$ with $R\xx$, we can obtain the following controller:
\begin{align*}
    u_{g,x} &= (I-R)\cdot\xx+ \vv, \quad u_{g,v} = -(I-R)\cdot\xx, \quad u_{\ell,x} = \nabla f(\xx), \quad u_{\ell, v} = 0.
\end{align*}

Finally, we discuss one more rate optimal algorithm:

{\noindent\bf D-GPDA~\cite{sun2019distributed}:} The update step of Distributed Gradient Primal-Dual Algorithm (D-GPDA) is given by:
\begin{align*}
    \xx(k+1) &= \argmin_{\xx} \lin{\nabla f(\xx(k)) + A^T\vv(k), \xx - \xx(k)}\\
    &\quad+ \frac{1}{2}\norm{\eta_1A\xx}^2 + \norm{\eta_1\abs{A}\cdot(\xx -\xx(k))}^2 +\norm{\eta_2\cdot(\xx-\xx(k))}^2\\
    \vv(k+1) &= \vv(k) + \eta_1^2A\xx(k+1),
\end{align*}
where $\vv$ is the dual variable for the linear consensus constraint. By assuming the minimization is solved with gradient flow or $K$-step gradient descent, this algorithm is using the discretization Case II, with the following continuous-time controllers: 
\begin{align*}
    u_{g,x} &= \eta_1W\xx +\eta_2\cdot(\xx-\vv_2) - \eta_1\abs{A^TA}\vv_2+ A^T\vv_1,\quad  u_{g,v} = [-\eta_1^2A\xx; \;0], \quad \\
    u_{\ell,x} &= \nabla f(\xx), \quad u_{\ell,v} = [0; \;-(\xx-\vv_2)].
\end{align*}
\bibliographystyle{IEEEtran}
\bibliography{references}

\begin{thebibliography}{10}
\providecommand{\url}[1]{#1}
\csname url@samestyle\endcsname
\providecommand{\newblock}{\relax}
\providecommand{\bibinfo}[2]{#2}
\providecommand{\BIBentrySTDinterwordspacing}{\spaceskip=0pt\relax}
\providecommand{\BIBentryALTinterwordstretchfactor}{4}
\providecommand{\BIBentryALTinterwordspacing}{\spaceskip=\fontdimen2\font plus
\BIBentryALTinterwordstretchfactor\fontdimen3\font minus
  \fontdimen4\font\relax}
\providecommand{\BIBforeignlanguage}[2]{{%
\expandafter\ifx\csname l@#1\endcsname\relax
\typeout{** WARNING: IEEEtran.bst: No hyphenation pattern has been}%
\typeout{** loaded for the language `#1'. Using the pattern for}%
\typeout{** the default language instead.}%
\else
\language=\csname l@#1\endcsname
\fi
#2}}
\providecommand{\BIBdecl}{\relax}
\BIBdecl

\bibitem{wang2011control}
J.~Wang and N.~Elia, ``A control perspective for centralized and distributed
  convex optimization,'' in \emph{2011 50th IEEE conference on decision and
  control and European control conference}.\hskip 1em plus 0.5em minus
  0.4em\relax IEEE, 2011, pp. 3800--3805.

\bibitem{chang2020distributed}
T.-H. Chang, M.~Hong, H.-T. Wai, X.~Zhang, and S.~Lu, ``Distributed learning in
  the nonconvex world: From batch data to streaming and beyond,'' \emph{IEEE
  Signal Processing Magazine}, vol.~37, no.~3, pp. 26--38, 2020.

\bibitem{li2020federated}
T.~Li, A.~K. Sahu, A.~Talwalkar, and V.~Smith, ``Federated learning:
  Challenges, methods, and future directions,'' \emph{IEEE Signal Processing
  Magazine}, vol.~37, no.~3, pp. 50--60, 2020.

\bibitem{nedic2009distributed}
A.~Nedic and A.~Ozdaglar, ``Distributed subgradient methods for multi-agent
  optimization,'' \emph{IEEE Transactions on Automatic Control}, vol.~54,
  no.~1, pp. 48--61, 2009.

\bibitem{yuan2016convergence}
K.~Yuan, Q.~Ling, and W.~Yin, ``On the convergence of decentralized gradient
  descent,'' \emph{SIAM Journal on Optimization}, vol.~26, no.~3, pp.
  1835--1854, 2016.

\bibitem{ling2015dlm}
Q.~Ling, W.~Shi, G.~Wu, and A.~Ribeiro, ``Dlm: Decentralized linearized
  alternating direction method of multipliers,'' \emph{IEEE Transactions on
  Signal Processing}, vol.~63, no.~15, pp. 4051--4064, 2015.

\bibitem{yuan2020can}
K.~Yuan, W.~Xu, and Q.~Ling, ``Can primal methods outperform primal-dual
  methods in decentralized dynamic optimization?'' \emph{arXiv preprint
  arXiv:2003.00816}, 2020.

\bibitem{di2016next}
P.~Di~Lorenzo and G.~Scutari, ``Next: In-network nonconvex optimization,''
  \emph{IEEE Transactions on Signal and Information Processing over Networks},
  vol.~2, no.~2, pp. 120--136, 2016.

\bibitem{bonawitz2019towards}
K.~Bonawitz, H.~Eichner, W.~Grieskamp, D.~Huba, A.~Ingerman, V.~Ivanov,
  C.~Kiddon, J.~Kone{\v{c}}n{\`y}, S.~Mazzocchi, H.~B. McMahan \emph{et~al.},
  ``Towards federated learning at scale: System design,'' \emph{arXiv preprint
  arXiv:1902.01046}, 2019.

\bibitem{khaled2019first}
A.~Khaled, K.~Mishchenko, and P.~Richt{\'a}rik, ``First analysis of local {GD}
  on heterogeneous data,'' \emph{arXiv preprint arXiv:1909.04715}, 2019.

\bibitem{li2019convergence}
X.~Li, K.~Huang, W.~Yang, S.~Wang, and Z.~Zhang, ``On the convergence of fedavg
  on non-iid data,'' in \emph{International Conference on Learning
  Representations}, 2019.

\bibitem{li2018federated}
T.~Li, A.~K. Sahu, M.~Zaheer, M.~Sanjabi, A.~Talwalkar, and V.~Smith,
  ``Federated optimization in heterogeneous networks,'' \emph{arXiv preprint
  arXiv:1812.06127}, 2018.

\bibitem{karimireddy2020scaffold}
S.~P. Karimireddy, S.~Kale, M.~Mohri, S.~Reddi, S.~Stich, and A.~T. Suresh,
  ``Scaffold: Stochastic controlled averaging for federated learning,'' in
  \emph{International Conference on Machine Learning}.\hskip 1em plus 0.5em
  minus 0.4em\relax PMLR, 2020, pp. 5132--5143.

\bibitem{zhang2020fedpd}
X.~Zhang, M.~Hong, S.~Dhople, W.~Yin, and Y.~Liu, ``Fedpd: A federated learning
  framework with optimal rates and adaptivity to non-iid data,'' \emph{arXiv
  preprint arXiv:2005.11418}, 2020.

\bibitem{scaman2017optimal}
K.~Scaman, F.~Bach, S.~Bubeck, Y.~T. Lee, and L.~Massouli{\'e}, ``Optimal
  algorithms for smooth and strongly convex distributed optimization in
  networks,'' in \emph{international conference on machine learning}.\hskip 1em
  plus 0.5em minus 0.4em\relax PMLR, 2017, pp. 3027--3036.

\bibitem{sun2019distributed}
H.~Sun and M.~Hong, ``Distributed non-convex first-order optimization and
  information processing: Lower complexity bounds and rate optimal
  algorithms,'' \emph{IEEE Transactions on Signal processing}, vol.~67, no.~22,
  pp. 5912--5928, 2019.

\bibitem{rossi2006gradient}
R.~Rossi and G.~Savar{\'e}, ``Gradient flows of non convex functionals in
  hilbert spaces and applications,'' \emph{ESAIM: Control, Optimisation and
  Calculus of Variations}, vol.~12, no.~3, pp. 564--614, 2006.

\bibitem{sundararajan2021analysis}
A.~Sundararajan, \emph{Analysis and Design of Distributed Optimization
  Algorithms}.\hskip 1em plus 0.5em minus 0.4em\relax The University of
  Wisconsin-Madison, 2021.

\bibitem{lessard2016analysis}
L.~Lessard, B.~Recht, and A.~Packard, ``Analysis and design of optimization
  algorithms via integral quadratic constraints,'' \emph{SIAM Journal on
  Optimization}, vol.~26, no.~1, pp. 57--95, 2016.

\bibitem{hu2017control}
B.~Hu and L.~Lessard, ``Control interpretations for first-order optimization
  methods,'' in \emph{2017 American Control Conference (ACC)}.\hskip 1em plus
  0.5em minus 0.4em\relax IEEE, 2017, pp. 3114--3119.

\bibitem{muehlebach2019dynamical}
M.~Muehlebach and M.~Jordan, ``A dynamical systems perspective on nesterov
  acceleration,'' in \emph{International Conference on Machine Learning}, 2019,
  pp. 4656--4662.

\bibitem{swenson2021distributed}
B.~Swenson, R.~Murray, H.~V. Poor, and S.~Kar, ``Distributed gradient flow:
  Nonsmoothness, nonconvexity, and saddle point evasion,'' \emph{IEEE
  Transactions on Automatic Control}, 2021.

\bibitem{francca2018dynamical}
G.~Fran{\c{c}}a, D.~P. Robinson, and R.~Vidal, ``A dynamical systems
  perspective on nonsmooth constrained optimization,'' \emph{arXiv preprint
  arXiv:1808.04048}, 2018.

\bibitem{swenson2019distributed}
B.~Swenson, R.~Murray, H.~V. Poor, and S.~Kar, ``Distributed gradient descent:
  Nonconvergence to saddle points and the stable-manifold theorem,'' in
  \emph{2019 57th Annual Allerton Conference on Communication, Control, and
  Computing (Allerton)}.\hskip 1em plus 0.5em minus 0.4em\relax IEEE, 2019, pp.
  595--601.

\bibitem{droge2014continuous}
G.~Droge, H.~Kawashima, and M.~B. Egerstedt, ``Continuous-time
  proportional-integral distributed optimisation for networked systems,''
  \emph{Journal of Control and Decision}, vol.~1, no.~3, pp. 191--213, 2014.

\bibitem{ghadimi2011accelerated}
E.~Ghadimi, M.~Johansson, and I.~Shames, ``Accelerated gradient methods for
  networked optimization,'' in \emph{Proceedings of the 2011 American Control
  Conference}.\hskip 1em plus 0.5em minus 0.4em\relax IEEE, 2011, pp.
  1668--1673.

\bibitem{olshevsky2009convergence}
A.~Olshevsky and J.~N. Tsitsiklis, ``Convergence speed in distributed consensus
  and averaging,'' \emph{SIAM journal on control and optimization}, vol.~48,
  no.~1, pp. 33--55, 2009.

\bibitem{bubeck2015geometric}
S.~Bubeck, Y.~T. Lee, and M.~Singh, ``A geometric alternative to nesterov's
  accelerated gradient descent,'' \emph{arXiv preprint arXiv:1506.08187}, 2015.

\bibitem{ye2020multi}
H.~Ye, L.~Luo, Z.~Zhou, and T.~Zhang, ``Multi-consensus decentralized
  accelerated gradient descent,'' \emph{arXiv preprint arXiv:2005.00797}, 2020.

\bibitem{orvieto2019continuous}
A.~Orvieto and A.~Lucchi, ``Continuous-time models for stochastic optimization
  algorithms,'' \emph{Advances in Neural Information Processing Systems},
  vol.~32, 2019.

\bibitem{reddi2020adaptive}
S.~Reddi, Z.~Charles, M.~Zaheer, Z.~Garrett, K.~Rush, J.~Kone{\v{c}}n{\`y},
  S.~Kumar, and H.~B. McMahan, ``Adaptive federated optimization,'' \emph{arXiv
  preprint arXiv:2003.00295}, 2020.

\bibitem{kuo1980digital}
B.~C. Kuo, ``Digital control systems,'' 1980.

\bibitem{li2019decentralized}
Z.~Li, W.~Shi, and M.~Yan, ``A decentralized proximal-gradient method with
  network independent step-sizes and separated convergence rates,'' \emph{IEEE
  Transactions on Signal Processing}, vol.~67, no.~17, pp. 4494--4506, 2019.

\bibitem{lu2019gnsd}
S.~Lu, X.~Zhang, H.~Sun, and M.~Hong, ``Gnsd: A gradient-tracking based
  nonconvex stochastic algorithm for decentralized optimization,'' in
  \emph{2019 IEEE Data Science Workshop (DSW)}.\hskip 1em plus 0.5em minus
  0.4em\relax IEEE, 2019, pp. 315--321.

\bibitem{sun2020improving}
H.~Sun, S.~Lu, and M.~Hong, ``Improving the sample and communication complexity
  for decentralized non-convex optimization: Joint gradient estimation and
  tracking,'' in \emph{International Conference on Machine Learning}.\hskip 1em
  plus 0.5em minus 0.4em\relax PMLR, 2020, pp. 9217--9228.

\bibitem{poliquin1996generalized}
R.~A. Poliquin and R.~T. Rockafellar, ``Generalized hessian properties of
  regularized nonsmooth functions,'' \emph{SIAM Journal on Optimization},
  vol.~6, no.~4, pp. 1121--1137, 1996.

\end{thebibliography}
\newpage

\begin{center}
    \Large{\bf Supplemental Materials}
\end{center}

\section{Proofs for \secref{SEC:CONTINUOUS}}

In this section, we provide the proofs for \eqref{eq:gg_convergence}, \eqref{eq:gl_convergence} and \coref{cor:continuous} in Section \ref{SEC:CONTINUOUS}.

\subsection{Proof of \eqref{eq:gg_convergence}}\label{app:proof:gg}
From \pref{as:gg_decrease}, we show that the time derivative of the consensus error is strictly negative:
\begin{align*}
    \frac{\partial}{\partial t}\norm{(I-R)\cdot\yy(t)}^2 &=2\lin{(I-R)\cdot\yy(t), \dot{\yy}(t)} \stackrel{(i)}=- 2\lin{(I-R)\cdot\yy(t), u_g(t)}\\
    &\stackrel{(ii)}\leq -2C_g \norm{(I-R)\cdot\yy(t)}^2,
\end{align*}
where in $(i)$ we apply \eqref{eq:continuous-system} and substitute $\eta_g(t) = 1, \eta_\ell(t) = 0$ and in $(ii)$ we apply \pref{as:gg_decrease}.

By applying Gronwall's inequality, we have 
\begin{align*}
    \norm{(I-R)\cdot\yy(t+\tau)}^2 &\leq \exp\bigg\{\int^{t+\tau}_{t}-2C_g\md \tau_1\bigg\}\norm{(I-R)\cdot\yy(t)}^2 \\
    & = \exp\big\{-2C_g\tau\big\}\norm{(I-R)\cdot\yy(t)}^2,
\end{align*}
which completes the proof of \eqref{eq:gg_convergence}.



\subsection{Proof of \eqref{eq:gl_convergence}}\label{app:proof:gl}

From \pref{as:gl_decrease}, we show that the time derivative of the local functions are strictly negative:
\begin{align*}
    \frac{\partial}{\partial t}f_i(x_i(t)) &= \lin{\nabla f_i(x_i(t)), \dot{x}_i(t)} \stackrel{(i)}= -\lin{\nabla f_i(x_i(t)), u_{i,\ell,x}(t)}\\
    &\stackrel{(ii)}\leq -\alpha(t) \cdot\norm{\nabla f_i(x_i(t))}^2.
\end{align*}
where in $(i)$ we apply \eqref{eq:continuous-system} and substitute $\eta_g(t) = 0, \eta_\ell(t) = 1$; in $(ii)$ we apply \pref{as:gl_decrease}.
Integrate it over time we have:
\begin{align}
    \int^t_{0} \beta(\tau,t)\cdot\norm{\nabla f_i(x_i(\tau))}^2 \md \tau &\leq \frac{1}{\int^t_{0}\alpha(\tau)\md \tau}\left(f_i(x_i(0)) - f_i(x_i(t))\right),\label{eq:gl:1}\\
    \min_{\tau\in[0,t]}\norm{\nabla f_i(x_i(\tau))}^2 \md \tau &\leq \frac{1}{\int^t_{0}\alpha(\tau)\md \tau}\left(f_i(x_i(0)) - f_i(x_i(t))\right),\label{eq:gl:2}
\end{align}
where in \eqref{eq:gl:1}$, \beta(\tau,t) = \frac{\alpha(\tau)}{\int^t_{0}\alpha(\tau)\md \tau}$ defines a distribution over time $[0,t]$ and the LHS is the expected value of $\norm{\nabla f_i(x_i(\tau))}^2$; in \eqref{eq:gl:2} we use the fact that $\E_t[X(t)] \geq \min_t\{X(t)\}$ for an arbitrary random variable $X(t)$. This completes the proof of \eqref{eq:gl_convergence}.

\subsection{Proof of Corollary \ref{cor:continuous}}\label{app:proof:continuous}

In this part, we prove the convergence of the system under \pref{as:gg_decrease}, \pref{as:gl_smooth}, \pref{as:gl_decrease}. First, we compute the derivative of $\cE$, then we break it down into three terms. By bounding each term, we obtain \pref{as:energy}. From \thref{theorm:energy:continuous}, we perform integration over time, then we have the final convergence result.

The time derivative of $\cE$ can be bounded by
\begingroup
\allowdisplaybreaks
{\begin{align}\label{eq:ct:1}
    &\dot{\cE}(t) = \lin{\nabla f(\bar{\xx}(t)), \frac{1}{N}\sum^N_{i=1}\dot{x}_i(t)} + \lin{(I-R)\cdot\yy(t), \dot{\yy}(t)}\nonumber\\
    & \stackrel{(i)}= -\lin{\nabla f(\bar{\xx}(t)), \frac{1}{N}\sum^N_{i=1}\eta_\ell (t)\cdot u_{i,\ell,x}(t) + \eta_g(t)\cdot\frac{\bone^T}{N}u_{g,x}(t)} \nonumber\\
    &\quad- \lin{(I-R)\cdot\yy(t), \eta_g(t)\cdot u_{g}(t) + \eta_\ell (t)\cdot u_{\ell,y}(t)}\nonumber\\
    & \stackrel{(\pref{as:gg_decrease})}\leq -C_g\eta_g(t)\norm{(I-R)\cdot\yy(t)}^2 - \eta_\ell (t)\lin{\nabla f(\bar{\xx}(t)), \frac{1}{N}\sum^N_{i=1}u_{i,\ell,x}(t)}\nonumber\\
    & \quad - \eta_\ell (t)\lin{(I-R)\cdot\yy(t), u_{\ell,y}(t)}\nonumber\\
    & \stackrel{\eqref{eq:IR}}= -C_g\eta_g(t)\norm{(I-R)\cdot\yy(t)}^2 - \eta_\ell (t)\lin{(I-R)\cdot\yy(t), (I-R)\cdot u_{\ell,y}(t)}\nonumber\\
    & \quad - \eta_\ell (t)\lin{\nabla f(\bar{\xx}(t)), \frac{1}{N}\sum^N_{i=1}u_{i,\ell,x}(t) + c\nabla f(\bar{\xx}(t)) - c\nabla f(\bar{\xx}(t))}\nonumber\\
    & \stackrel{(ii)}{\leq} -C_g\eta_g(t)\norm{(I-R)\cdot\yy(t)}^2 - \eta_\ell (t)\cdot c\norm{\nabla f(\bar{\xx}(t))}^2+ \frac{\beta_1(t)}{2}\norm{(I-R)\cdot\yy(t)}^2  \nonumber\\
    & \quad + \frac{\eta_\ell ^2(t)}{2\beta_1(t)}\norm{ (I-R)\cdot u_{\ell,y}(t)}^2+ \frac{\beta_2 (t)}{2}\norm{\nabla f(\bar{\xx}(t))}^2 + \frac{\eta_\ell ^2(t)}{2\beta_2(t)}\norm{\frac{1}{N}\sum^N_{i=1}u_{i,\ell,x}(t) - c\nabla f(\bar{\xx}(t))}^2\nonumber\\
    & =  -\left(C_g\eta_g(t)-\frac{\beta_1(t)}{2}\right)\cdot\norm{(I-R)\cdot\yy(t)}^2 - (c\eta_\ell (t)-\beta_2(t)/2)\cdot\norm{\nabla f(\bar{\xx}(t))}^2\nonumber\\
    &  \quad +\frac{\eta_\ell ^2(t)}{2\beta_1(t)}\norm{(I-R)\cdot u_{\ell,y}(t)}^2+ \frac{\eta_\ell ^2(t)}{2\beta_2(t)}\norm{\frac{1}{N}\sum^N_{i=1}(u_{i,\ell,x}(t) - c\nabla f_i(\bar{\xx}(t)))}^2,
\end{align}}%
\endgroup%
where in $(i)$ we substitute the system dynamics \eqref{eq:continuous-system}, and $u_g(t):=[u_{g,x}(t); u_{g,v}(t)]$; in $(ii)$ we apply \eqref{eq:ab}.
Then, we bound the last two terms of \eqref{eq:ct:1} separately. We have:
\begin{equation*}
    \begin{aligned}
        \norm{(I-R)\cdot u_{\ell,y}(t)}^2 & = \sum^N_{i=1}\norm{u_{i,\ell,y}(t) - \frac{1}{N}\sum^N_{j=1}u_{j,\ell,y}(t)}^2\leq \frac{N-1}{N}\sum^N_{i\neq j}\norm{u_{i,\ell,y}(t)-u_{j,\ell,y}(t)}^2\\
        & \leq \frac{4(N-1)}{N}\sum^{N}_{i=1}\norm{u_{i,\ell,y}(t)}^2\stackrel{(\pref{as:gl_decrease})}\leq \frac{4(N-1)\cdot(C_x^2+C_v^2)}{N}\norm{\nabla f(\xx(t))}^2.
    \end{aligned}
\end{equation*}
Also we have:
\begingroup
\allowdisplaybreaks
\begin{align*}
       &\norm{\frac{1}{N}\sum^N_{i=1}(u_{i,\ell,x}(t) - c\nabla f_i(\bar{\xx}(t)))}^2\\
        & = \norm{\frac{1}{N}\sum^N_{i=1}(u_{i,\ell,x}(t)- c\nabla f_i(x_i(t)) + c\nabla f_i(x_i(t))- c\nabla f_i(\bar{\xx}(t)))}^2\\
        & \leq \frac{2}{N}\sum^N_{i=1}\left(\norm{u_{i,\ell,x}(t)- c\nabla f_i(x_i(t))}^2+ c^2\norm{\nabla f_i(x_i(t))-\nabla f_i(\bar{\xx}(t))}^2\right)\\
        & \stackrel{(i)}{\leq} \frac{2}{N}\sum^N_{i=1}(\norm{u_{i,\ell,x}(t)}^2 +c^2\norm{\nabla f_i(x_i(t))}^2 - 2c\lin{u_{i,\ell,x}(t), \nabla f_i(x_i(t))}+ c^2L_f^2\norm{x_i(t)-\bar{\xx}(t)}^2)\\
        & \stackrel{(ii)}{\leq} \frac{2(C_x^2+c^2 - 2c\alpha(t))}{N}\norm{\nabla f(\xx(t))}^2 + \frac{2c^2L_f^2}{N}\norm{(I-R)\cdot\xx(t)}^2,
\end{align*}
\endgroup
where $(i)$ we expand the first term and apply \asref{as:smooth} to the second term; in $(ii)$ we use \pref{as:gl_decrease} for the first three terms and plug the definition of $I-R$ into the last term.
Further, we have:{
\begin{align*}
    \norm{\nabla f(\xx(t))}^2 &= \sum^N_{i=1}\norm{\nabla f_i(x_i(t)) - \nabla f_i(\bar{\xx}(t)) + \nabla f_i(\bar{\xx}(t))}^2 \\
    &\stackrel{\eqref{eq:ab}}\leq 2\sum^N_{i=1}\left(\norm{\nabla f_i(x_i(t)) - \nabla f_i(\bar{\xx}(t))}^2 + \norm{\nabla f_i(\bar{\xx}(t))}^2\right)\\
    &\stackrel{(\pref{as:smooth})} \leq 2L_f\norm{(I-R)\cdot \xx(t)}^2 + 2\sum^N_{i=1}\norm{\nabla f_i(\bar{\xx}(t))}^2.
\end{align*}}
Substitute back to \eqref{eq:ct:1}, we have
\begin{equation}\label{eq:ct:2}
    \begin{aligned}
    \dot{{\cal E}}(t)&\leq -\left(C_g\eta_g(t)-\frac{\beta_1(t)}{2}-\eta_\ell^2(t)\cdot\left(\frac{c^2L_f^2}{N\beta_2(t)} + C_{df}L_f\right)\right)\cdot\norm{(I-R)\cdot\yy(t)}^2 \\
    &\quad- \frac{2c\eta_\ell(t)-\beta_2(t))}{2}\norm{\nabla f(\bar{\xx}(t))}^2+\eta_\ell^2(t)\cdot C_{df}\sum^N_{i=1}\norm{\nabla f_i(\bar{\xx}(t))}^2,
    \end{aligned}
\end{equation}
where $C_{df} := \left(\frac{4(N-1)\cdot(C_x^2+C_v^2)}{N\beta_1(t)}+\frac{2(C_x^2+c^2 - 2c\alpha(t))}{N\beta_2(t)}\right)$. We analyze the convergence rate in two cases: i) $C_{df}\leq 0$, and ii) $C_{df}>0$.

\noindent{\bf Case i:} If $C_{df}\leq 0$, which implies $\alpha(t)>C_x$. Then, by choosing $\beta_1(t)\leq \frac{C_g\eta_g(t)}{4}$, $\beta_2(t)\leq c\eta_\ell(t)$, $\eta_\ell (t)\leq\frac{NC_g\eta_g(t)}{4cL_f^2},$ we have:
$$\dot{{\cal E}}(t)\leq -\frac{C_g\eta_g(t)}{2}\cdot\norm{(I-R)\cdot\yy(t)}^2 - \frac{c\eta_\ell(t)}{2}\cdot\norm{\nabla f(\bar{\xx}(t))}^2.$$ 
In this case, by choosing $\eta_g(t) = 1, \eta_\ell(t) = \frac{NC_g\eta_g(t)}{4cL_f^2}, c = \alpha(t)>C_x$, then \pref{as:energy} satisfies with $\gamma_1(t) = \frac{NC_g}{4L_f^2}, \gamma_2(t) = \frac{C_g}{2},$ and 
\[\min_\tau\{\norm{(I-R)\cdot \yy(\tau)}^2 + \norm{\nabla f(\bar{\xx}(\tau))}^2\} = \cO(1/t).\]

\noindent{\bf Case ii:} If  $C_{df}>0$, 
we show that by choosing $\eta_\ell(t) = \Theta(\norm{(I-R)\cdot \yy(\tau)}^2 + \norm{\nabla f(\bar{\xx}(\tau))}^2)$, $\eta_g(t) = \cO(1)$, $\min_\tau \{\norm{(I-R)\cdot \yy(\tau)}^2 + \norm{\nabla f(\bar{\xx}(\tau))}^2]\} = \cO(1/\sqrt{t})$ is satisfied. We proceed by bounding $\sum^N_{i=1}\norm{\nabla f_i(\bar{\xx}(t))}^2$ in \eqref{eq:ct:2}. First, we define the level set $\mS(t):= \{x \mid f(x)\leq \cE(t) + f^\star\}$. By \asref{as:coercive}, we can define the upper bound of $\sum^N_{i=1}\norm{\nabla f_i(\bar{\xx}(t))}^2$ as \[D(t):= \sup_{x\in \mS(t)}\left\{\sum^N_{i=1}\norm{\nabla f_i(x)}^2\right\}.\]
{Then, to guarantee that
\begin{align}
    D(\tau) \leq \frac{C_g\eta_g(\tau)}{4C_{df}\eta^2_\ell(\tau)}\cdot\norm{(I-R)\cdot\yy(\tau)}^2+ \frac{c}{4C_{df}\eta_\ell(\tau)}\norm{\nabla f(\bar{\xx}(\tau))}^2,\; \forall \tau \in[0,t],\nonumber
\end{align}
we can solve for $\beta_1(\tau)$, $\beta_2(\tau)$ and $\eta_\ell(\tau)$, which result in the following three relations:
\begin{align*}
  \beta_1(\tau) & \leq \frac{C_g\eta_g(\tau)}{2},\quad  \beta_2(\tau)\leq c\cdot\eta_\ell(\tau),\\
  \eta_\ell(\tau) &\leq \max\left\{\frac{\sqrt{C_g\eta_g(\tau)C_{df}L_f}\norm{(I-R)\cdot \yy(\tau)}^2}{4C_{df}D(\tau) + 2C_{df}L_f\norm{(I-R)\cdot \yy(\tau)}^2}, \frac{c\norm{\nabla f(\bar{\xx}(\tau))}^2}{4C_{df} D(\tau)} \right\}.
\end{align*}
These choices of parameters guarantee that 
\begin{align}
    &\eta_\ell^2(\tau)\cdot C_{df}\sum^N_{i=1}\norm{\nabla f_i(\bar{\xx}(\tau))}^2 \leq \eta_\ell^2(\tau)\cdot C_{df} D(\tau)\nonumber\\
    &\leq \frac{C_g\eta_g(\tau)}{4}\cdot\norm{(I-R)\cdot\yy(\tau)}^2+ \frac{c\eta_\ell(\tau)}{4}\norm{\nabla f(\bar{\xx}(\tau))}^2,\; \forall \tau \in[0,t]\label{eq:ct:bound_g}.
\end{align}
Substituting \eqref{eq:ct:bound_g} to \eqref{eq:ct:2}, we have: 
\[\begin{aligned}
    \dot{{\cE}}(\tau)&\leq -\frac{C_g\eta_g(\tau)}{4}\cdot\norm{(I-R)\cdot\yy(\tau)}^2- \frac{c\eta_\ell(\tau)}{4}\norm{\nabla f(\bar{\xx}(\tau))}^2<0, \; \forall \tau \in [0,t].
\end{aligned}
\]
Integrating over time, it gives $\cE(t) = \cE(0) + \int^t_{0} \dot{\cE}(s) \md s \leq \cE(0)$.  Therefore, $\mS(\tau):= \{\xx \mid f(\xx)\leq \cE(\tau) + f^\star\} \subseteq \mS(0)$, $D(\tau)\leq D(0),~ \forall \tau \in [0,t]$. So we can choose the parameters as:
\begin{align*}
    \eta_g(\tau) &= 1, \;c = \frac{1}{2}, \;\beta_1(\tau) = \frac{C_g}{4},\; \beta_2(\tau) = \frac{\eta_\ell(\tau)}{2}\\
    \eta_\ell(\tau) & = \max\left\{\frac{\sqrt{C_gC_{df}L_f}\norm{(I-R)\cdot \yy(\tau)}^2}{4C_{df}D(0) + 2C_{df}L_f\norm{(I-R)\cdot \yy(\tau)}^2}, \frac{\norm{\nabla f(\bar{\xx}(\tau))}^2}{8C_{df} D(0)} \right\}\\
    & = \Theta\left(\norm{(I-R)\cdot \yy(\tau)}^2 + \norm{\nabla f(\bar{\xx}(\tau))}^2\right), \; \forall \tau \in [0,t]. 
\end{align*}
Based on the above choices of parameters, we will show below that the convergence rate of the system is $\cO(1/\sqrt{t})$.
If $\min_{\tau \in [0,t]} \norm{(I-R)\cdot \yy(\tau)}^2 + \norm{\nabla f(\bar{\xx}(\tau))}^2 = \cO(\frac{1}{\sqrt{t}})$, then the result is achieved. 
Otherwise we have:
\begin{equation}\label{eq:max_gap}
    \norm{(I-R)\cdot \yy(\tau)}^2 + \norm{\nabla f(\bar{\xx}(\tau))}^2 = \Omega\left(\frac{1}{\sqrt{t}}\right), \; \forall \tau \in [0,t].
\end{equation}
This will guarantee that $\eta_\ell(\tau)= \Theta(\frac{1}{\sqrt{t}}),\; \forall \tau \in [0,t]$ and $\gamma_1(\tau) = \frac{\eta_\ell(\tau)}{4} = \Theta(\frac{1}{\sqrt{t}})$, $\gamma_2(\tau) = \frac{C_g}{4} = \cO(1),\;\forall \tau \in [0,t]$ for \pref{as:energy}. Then we apply \thref{theorm:energy:continuous} and obtain that \[\min_\tau\{\norm{(I-R)\cdot \yy(\tau)}^2 + \norm{\nabla f(\bar{\xx}(\tau))}^2\} = \max\{\cO(1/t) ,\cO(1/\sqrt{t})\} = \cO(1/\sqrt{t}).\]}

Summarizing the above two cases, we have the worst convergence rate for the algorithm as: $\max\{\cO(1/t) ,\cO(1/\sqrt{t})\} = \cO(1/\sqrt{t}).$
This completes the proof for \coref{cor:continuous}.

\comment{
\section{Convergence Guarantee for Case V}\label{sup:case_5}
In this section, we present the convergence result for Case V where $\tau_\ell > \tau_g> 0$. 

First of all, let us state the guarantees we have for  Case V.

\begin{lemma}[Dynamics of $\cE$ in Case V]\label{le:case5}
     Suppose the GCFL and LCFL satisfy properties \pref{as:gg_decrease}-\pref{as:energy}, and consider the  discretized system with $\tau_\ell = K\cdot \tau_g$. Then we have: {
    \begin{equation}
        \begin{aligned}
            \int^t_{0}\dot{\cE}(\tau)\md \tau &\leq \int^t_{0}\bigg(-\underbrace{\left(\frac{\gamma_1(\tau)}{2}-C_{51}(\tau)\right)}_{\rm := \hat{\gamma}_1(\tau)}\cdot\norm{\nabla f(\bar{\xx}(\tau))}^2 \\
            & \quad - \underbrace{\left(\frac{\gamma_2(\tau)}{2} - C_{52}(\tau)\right)}_{\rm := \hat{\gamma}_2(\tau)}\cdot \norm{(I - R)\cdot\yy(\tau)}^2\bigg)\md\tau,
        \end{aligned}
    \end{equation}}
    where we have defined: 
    {
    \begin{align*}
    C_{51} & := C_{55}C_{54}, \quad C_{52} := \left(\frac{C_g\tau_g\eta_g(\tau)}{1-C_g\tau_g\eta_g(\tau)}\right)^2\cdot\frac{C_g\eta_g^2(\tau)}{2\gamma_2(\tau)} +C_{55}C_{53},\\
        C_{55}& := \frac{L^2\eta_\ell^2(\tau)\cdot(1+2C_g\tau^2_g\eta_g^2(\tau))}{2\min\{N\gamma_1(\tau), \gamma_2(\tau)\}}, \; C_{53} :=\frac{\left(\frac{1-C'_y}{C'^2_y}\right)+ 4L^2L_f^2C^2_\ell C_f}{1-2L^2C'_\ell},\\
        C_{54} &:= \frac{4L^2C'_\ell C_f}{1-2L^2C_\ell^2}, \quad C'_\ell:=  C'_\ell:=  \frac{\tau_\ell^2 \eta_\ell^2(\tau)}{C'_yC_g\tau_g\eta_g(\tau)}, \mbox{ and}\quad C'_y := (1-C_g\tau_g\eta_g(\tau))^K.
    \end{align*}}%
\end{lemma}

We can check that when $\tau_g = \tau_\ell = 0$, $C_{51}(\tau)$ and $C_{52}(\tau)$ are both zero.

From \leref{le:case5}, by solving $\gamma_1(t) \geq 2C_{51}, \gamma_2(t) \geq 2C_{52}(t)$, we have:
{\begin{equation}\label{eq:step:5}
    \begin{aligned}
        \tau_g &\leq \frac{1}{C_g\eta_g(t)}\cdot\min\left\{\frac{\gamma_2(t)}{\gamma_2(t)+\sqrt{2C_g}\eta_g(t)}, 1-\left( 1+ \frac{\tilde{\gamma}_2(t)\sqrt{1-2L^2C_\ell^2}}{\sqrt{6}L^2\eta_g(t)}\right)^{-1/K}\right\},\\
        \tau_\ell &\leq \frac{C_g\eta_\ell(t)\tilde{\gamma}_1(t)}{\sqrt{12L^2(C_f+2\tilde{\gamma}_1^2(t))}L\eta_\ell(t)},
    \end{aligned}
\end{equation}
}%
where we have defined ${\tilde{\gamma}^2_1(t)} := \min\{N\gamma_1^2(t), \gamma_1(t)\cdot\gamma_2(t)\}, \; \tilde{\gamma}_2^2(t) := \min\{\gamma_2^2(t), N\gamma_1(t)\cdot\gamma_2(t)\}.$ The above result says that, the maximum sampling intervals are determined by the constants related to the continuous-time system. Note that the local sampling interval $\tau_\ell$ can be chosen independent of $K$, while the global sampling interval $\tau_g$ is {\it inversely proportional} to $K$. In addition,  the number of communications between any two local updates (i.e., $K$) should not be too large, and such an upper bound is also determined by the continuous-time system. 

\subsection{Proof of \leref{le:case5}}
Following  Case IV, the sampling error {$\hat{\cE}(t)$} in \eqref{eq:hat_e} can be bounded exactly the same as \eqref{eq:hat_e_345}, that is, we have:
\begingroup
\allowdisplaybreaks
\begin{align}
\begin{split}
\label{eq:E:hat:case5}
    \hat{\cE}(t)&
    \leq \frac{\gamma_1(t)}{2}\norm{\nabla f(\bar{\xx})}^2+ \frac{\gamma_2(t)}{2}\norm{(I - R)\cdot\yy(t)}^2 + \frac{\eta_g^2(t)}{2\gamma_2(t)} \norm{(I-R)\cdot(\yy(t)-\yy(t_g))}^2\\
    &\quad + \frac{L^2\eta_\ell^2(t)}{2\min\{N\gamma_1(t), \gamma_2(t)\}}\left(\norm{\yy(t)-\yy(t_\ell)}^2+\norm{\zz(t)-\zz(t_\ell )}^2\right).
\end{split}
\end{align}
\endgroup%

We need to bound the last three terms. Before we start, 
recall that in Case V we have $\tau_\ell  = K\tau_g$. The update of the states is given by:
{
\begin{align}
\begin{split}\label{eq:case_5_1}
 \yy(t_\ell+ (k+1)\tau_g) &= \yy(t_\ell+ k\tau_g) - \int^{t_\ell+(k+1)\tau_g}_{t_\ell+k\tau_g}\eta_g(s)\hat{u}_{g}(s) + \eta_\ell(s)\hat{u}_{\ell,y}(s)\md s \\
    \zz(t_\ell+ K\tau_g) &= \zz(t_\ell) - \int^{t_\ell+K\tau_g}_{t_\ell}\eta_\ell(s)\cdot \hat{u}_{\ell,z}(s)\md s.
    \end{split}
\end{align}}%
It follows that we have the following bound for the successive different of $\zz$:
\begin{align}\label{eq:z:norm}
      \|\zz(t_\ell+ K\tau_g) - \zz(t_\ell)\| & \le  \int^{t_\ell+K\tau_g}_{t_\ell}\eta_\ell(s)\md s \cdot \| \hat{u}_{\ell,z}(t)\|\nonumber\\
      & \le K \tau_g \cdot  \eta_\ell(t)\cdot \|\hat{u}_{\ell,z}(t)\|
\end{align}
where the first inequality holds because $\hat{u}_{\ell,z}(t) = u_{\ell,z}(t_{\ell}), \forall~t\in [t_{\ell} + \tau_{\ell}]$; the second inequality holds because the assumption that $\eta_{\ell}(s)$ is constant during the sampling interval $s\in [t_{\ell}, t_{\ell} + \tau_{\ell}]$. Next, let us proceed to bound the last three terms of \eqref{eq:E:hat:case5}.

First, we bound the last two terms of $\hat{\cE}(t)$. Apply \pref{as:gg_linear} to the first relation in \eqref{eq:case_5_1}, {and use the fact that $\hat{u}_{\ell,y}(t) = u_{\ell,y}(t_\ell), \; \forall~t \in [t_{\ell}, t_{\ell}+ \tau_\ell]$}, we obtain:
\begingroup\allowdisplaybreaks
{\begin{align}
    &(I-R)\cdot\yy(t_\ell  + (k+1)\tau_g) = (I-R)\cdot\yy(t_\ell+ k\tau_g) \nonumber\\
    &\quad - \int^{t_\ell+(k+1)\tau_g}_{t_\ell+k\tau_g}\eta_g(s)(I-R)\cdot W_A \yy(t_\ell+ k\tau_g) + \eta_\ell(s)(I-R)\cdot\hat{u}_{\ell,y}(s)\md s\nonumber\\
    & = \left(I - \int^{t_\ell+(k+1)\tau_g}_{t_\ell+k\tau_g}\eta_g(s)\md s)\cdot (I-R)\cdot W_A\right)\cdot \yy(t_\ell+ k\tau_g)\nonumber\\ 
    &\quad - \int^{t_\ell+(k+1)\tau_g}_{t_\ell+k\tau_g}\eta_\ell(s)\md s \cdot (I-R)\cdot u_{\ell,y}(t_\ell)\nonumber\\
    & \stackrel{(i)} = \prod^{k}_{s=0} (I - \tau_g\eta_g(t_\ell+s\tau_g))\cdot (I-R)\cdot W_A) \cdot (I-R)\cdot \yy(t_\ell) \label{eq:case_5_1_1}\\
    &\quad - \sum^k_{r=0}\prod^k_{s=r} (I - \tau_g\eta_g(t_\ell+s\tau_g))\cdot (I-R)\cdot W_A) \cdot \tau_g\eta_\ell(t_\ell+r\tau_g)\cdot (I-R)\cdot u_{\ell,y}(t_\ell),\nonumber
\end{align}}%
\endgroup
where in $(i)$ we use the fact that $\eta_g(s), \eta_\ell(s)$ are constants within the sampling intervals,
and recursively expand the updates, and finally use the fact that $(I-R)\cdot W_A = W_A$. Then we can derive the following:
\begingroup
\allowdisplaybreaks
\small\begin{align}
  &\norm{\yy(t_\ell+ k\tau_g) - \yy(t_\ell)}^2 = \norm{(I-R)\cdot (\yy(t_\ell+ k\tau_g) - \yy(t_\ell))}^2 + \norm{R\cdot(\yy(t_\ell+ k\tau_g) - \yy(t_\ell))}^2\nonumber\\
  & = \left\|\left(I - \left(\prod^{k}_{s=0} (I - \tau_g\eta_g(t_\ell+s\tau_g))\cdot W_A)\right)^{-1}\right)\cdot (I-R)\cdot \yy(t_\ell+k\tau_g)\right.\nonumber\\
  &\qquad\left.+ \left(\prod^{k}_{s=0} (I - \tau_g\eta_g(t_\ell+s\tau_g))\cdot W_A)\right)^{-1}\sum^k_{r=0}\prod^k_{s=r} (I - \tau_g\eta_g(t_\ell+s\tau_g))\cdot W_A) \right.\nonumber\\
  & \qquad \cdot \tau_g\eta_\ell(t_\ell+r\tau_g)\cdot (I-R)\cdot u_{\ell,y}(t_\ell)\Bigg\|^2+ \norm{\sum^k_{r=0}\tau_g\eta_\ell(t_\ell+r\tau_g)\cdot R u_{\ell,y}(t_\ell)}^2\nonumber\\
  & \stackrel{(i)}\leq (1+\beta)\norm{I - \left(\prod^{k}_{s=0} (I - \tau_g\eta_g(t_\ell+s\tau_g))\cdot W_A)\right)^{-1}}^2\norm{(I-R)\cdot \yy(t_\ell+k\tau_g)}^2 \nonumber\\
  & + (1+\frac{1}{\beta})\norm{\left(\prod^{k}_{s=0} (I - \tau_g\eta_g(t_\ell+s\tau_g))\cdot W_A)\right)^{-1}\hspace{-0.2cm}\sum^k_{r=0}\prod^k_{s=r} (I - \tau_g\eta_g(t_\ell+s\tau_g))\cdot W_A)\tau_g\eta_\ell(t_\ell+r\tau_g)}^2\nonumber\\
  &\qquad \cdot \norm{(I-R)\cdot u_{\ell,y}(t_\ell)}^2 + (\sum^k_{r=0}\tau_g\eta_\ell(t_\ell+r\tau_g))^2\norm{R u_{\ell,y}(t_\ell)}^2\nonumber\\
  & \stackrel{(ii)}\leq (1+\beta)\cdot(1-(1-C_g\tau_g\eta_g(t_{\ell}+k\tau_g))^{-K})^2\norm{(I-R)\cdot \yy(t_\ell+k\tau_g)}^2\nonumber\\
  & \quad + \max\left\{(1+\frac{1}{\beta}) \frac{((1-C_g\tau_g\eta_g(t_{\ell}+k\tau_g))^{-K})^2}{C_g\tau_g\eta_g(t_{\ell}+k\tau_g)}, 1\right\}\cdot\left(\sum^k_{r=0}\eta'_\ell(t_\ell+r\tau_g)\right)^2\cdot \norm{u_{\ell,y}(t_\ell)}^2,\nonumber\\
  & \stackrel{(iii)}\leq \left(\frac{1-C'_y}{C'^2_y}\right)\cdot\norm{(I-R)\cdot \yy(t_\ell+k\tau_g)}^2 + \frac{\left(\sum^k_{s=0}\eta'_\ell(t_\ell+s\tau_g)\right)^2}{C'_yC_g\tau_g\eta_g(t_\ell)}\cdot\norm{u_{\ell,y}(t_\ell)}^2,\label{eq:case_5_1_2}
\end{align}%
\endgroup
where in $(i)$ we apply Cauchy-Schwarz inequality; in $(ii)$ we first bound $\norm{(I - \tau_g\eta_g(t_\ell+k\tau_g)\cdot W_A)^{-1}}$ by $1 - C_g \tau_g \eta_g(t_\ell+k\tau_g)$ applying \pref{as:gg_linear}; in $(iii)$ we define $C'_y := (1-C_g\tau_g\eta_g(t))^K$ to simplify the first term, choose $\beta = \frac{C'_y}{1-C'_y}$, and in the last term we bound the summation by the fact that \[\sum^{K-1}_{k=0} (1 - C_g\tau_g\eta_g(t_\ell))^k = \frac{1 - (1 - C_g\tau_g\eta_g(t_\ell))^K}{1 - (1 - C_g\tau_g\eta_g(t_\ell))}\leq \frac{1}{C_g\tau_g\eta_g(t_\ell)}.\]

Then, we can bound $\norm{\yy(t)-\yy(t_\ell )}^2 +\norm{\zz(t)-\zz(t_\ell )}^2$ by:
\begingroup\allowdisplaybreaks
{\small
    \begin{align}
      &\norm{\yy(t)-\yy(t_\ell )}^2 +\norm{\zz(t)-\zz(t_\ell )}^2\nonumber\\
      &\stackrel{(i)}\leq \left(\frac{1-C'_y}{C'^2_y}\right) \cdot \norm{(I-R)\cdot \yy(t)}^2 + \frac{\left(\sum^k_{r=0}\eta'_\ell(t_\ell+r\tau_g)\right)^2}{C'_y}\norm{\hat{u}_{\ell,y}(t)}^2 + \tau^2_\ell\eta_\ell^2(t)\cdot \norm{\hat{u}_{\ell,z}(t)}^2 \nonumber\\
      &\stackrel{(ii)}\leq \left(\frac{1-C'_y}{C'^2_y}\right) \cdot \norm{(I-R)\cdot \yy(t)}^2 + C'_\ell\norm{\hat{u}_{\ell}(t)}^2 \nonumber\\
      &\stackrel{(iii)}\leq \left(\frac{1-C'_y}{C'^2_y}\right) \cdot \norm{(I-R)\cdot \yy(t)}^2 + 2C'_\ell\left(\norm{\hat{u}_{\ell}(t) - u_{\ell}(t)}^2 + \norm{u_\ell(t)}^2\right) \nonumber\\
      &\stackrel{(iv)}\leq \left(\frac{1-C'_y}{C'^2_y}\right) \cdot \norm{(I-R)\cdot \yy(t)}^2 + 2C'_\ell L^2\left(\norm{\yy(t) - \yy(t_\ell)}^2 + \norm{\zz(t) - \zz(t_\ell)}^2)\right)\nonumber\\
      &\qquad +4C'_\ell C_f\left(\norm{\nabla f(\bar{\xx}(t))}^2 + \frac{L_f^2}{N}\norm{(I-R)\cdot \xx(t)}^2\right) \nonumber\\
      &\stackrel{(v)}\leq \frac{\left(\frac{1-C'_y}{C'^2_y}\right)+ 4L^2L_f^2C^2_\ell C_f}{1-2L^2C'_\ell}\norm{(I-R)\cdot\yy(t)}^2 + \frac{4L^2C'_\ell C_f}{1-2L^2C_\ell^2}\norm{\nabla f(\bar{\xx}(t))}^2,\label{eq:case_5_2}
    \end{align}}%
    \endgroup
where in $(i)$ we  apply \eqref{eq:case_5_1_2} to the first term and  \eqref{eq:z:norm} to the second term; in $(ii)$ we define $C'_\ell:=  \frac{\tau_\ell^2 \eta_\ell^2(t)}{C'_yC_g\tau_g\eta_g(t_\ell)}$, and merge the last two terms; in $(iii)$ we add and subtract $u_{\ell}(t)$ to the last term and apply Cauchy-Schwarz inequality; in $(iv)$ we apply \pref{as:gl_smooth} to the second term and \pref{as:gl_decrease} to the last term, add and subtract $\nabla f(\bar{\xx}(t))$ and use \asref{as:smooth}; in $(v)$, we move $\norm{\yy(t) - \yy(t_\ell)}^2 + \norm{\zz(t) - \zz(t_\ell)}^2$ to the left, by choosing $\tau_\ell < \sqrt{2}LC'_y$, such that $1-2L^2C'_\ell>0$, we divide both sides by $1-2L^2C'_\ell$ and merge the terms.

Next, we bound the term involving $\norm{(I-R)\cdot(\yy(t)-\yy(t_g))}^2$ in \eqref{eq:E:hat:case5}. Since $\|I-R\|\le 1$, it is sufficient to bound $\norm{\yy(t)-\yy(t_g)}^2$. We have:
{\footnotesize
\begingroup
\allowdisplaybreaks
\begin{align*}
    &\norm{\yy(t)-\yy(t_g)}^2 \stackrel{\eqref{eq:case_5_1_2}}\leq \left(\frac{C_g\tau_g\eta_g(t)}{1-C_g\tau_g\eta_g(t)}\right)^2\cdot\norm{(I-R)\cdot\yy(t)}^2 + \tau^2_g\eta_\ell^2(t) \norm{\hat{u}_{\ell,y}(t)}^2\\
    &\stackrel{(i)}\leq \left(\frac{C_g\tau_g\eta_g(t)}{1-C_g\tau_g\eta_g(t)}\right)^2\cdot\norm{(I-R)\cdot\yy(t)}^2 + 2\tau^2_g\eta_\ell^2(t)\left( \norm{\hat{u}_{\ell}(t) - u_{\ell}(t)}^2 + \norm{u_\ell(t)}^2\right)\\
    & \stackrel{(ii)}\leq \left(\frac{C_g\tau_g\eta_g(t)}{1-C_g\tau_g\eta_g(t)}\right)^2\cdot\norm{(I-R)\cdot\yy(t)}^2 + 2L^2\tau^2_g\eta_\ell^2(t)\left( \norm{\yy(t)-\yy(t_\ell )}^2 +\norm{\zz(t)-\zz(t_\ell )}^2 \right)\\
    & + 2C_fL^2\tau^2_g\eta_\ell^2(t)\norm{\nabla f(\xx(t))}^2\\
    & \stackrel{(iii)}\leq \left(\frac{C_g\tau_g\eta_g(t)}{1-C_g\tau_g\eta_g(t)}\right)^2\cdot\norm{(I-R)\cdot\yy(t)}^2 + 2L^2\tau^2_g\eta_\ell^2(t)\left( \norm{\yy(t)-\yy(t_\ell )}^2 +\norm{\zz(t)-\zz(t_\ell )}^2 \right)\\
    & +  4C_fL^2\tau^2_g\eta_\ell^2(t)\left(\norm{\nabla f(\bar{\xx}(t))}^2 + \frac{L_f^2}{N}\norm{(I-R)\cdot \xx(t)}^2\right)\\
     & \stackrel{(iv)}\leq \left(\left(\frac{C_g\tau_g\eta_g(t)}{1-C_g\tau_g\eta_g(t)}\right)^2 + 2L^2\tau^2_g\eta_\ell^2(t)\cdot C_{53}\right) \cdot\norm{(I-R)\cdot \yy(t)}^2 +  2L^2\tau^2_g\eta_\ell^2(t)\cdot C_{54}\norm{\nabla f(\bar{\xx}(t))}^2.
    \end{align*}
\endgroup
}
where in $(i)$ we add and subtract $\norm{u_\ell(t)}^2$; in $(ii)$ we apply \pref{as:gl_smooth} to the second term and \pref{as:gl_decrease} to the last term; in $(iii)$ we add and subtract $\nabla f(\bar{\xx}(t))$ and use \asref{as:smooth}; in $(iv)$ we plug in \eqref{eq:case_5_2} and define 
{\footnotesize\[C_{53} :=\frac{\left(\frac{1-C'_y}{C'^2_y}\right)+ 4L^2L_f^2C^2_\ell C_f}{1-2L^2C'_\ell},\quad  C_{54} := \frac{4L^2C'_\ell C_f}{1-2L^2C_\ell^2}.\]}

Therefore we have:
\begin{align*}
    \hat{\cE}(t) &\leq  \left(\frac{\gamma_2(t)}{2}+\left(\frac{C_g\tau_g\eta_g(t)}{1-C_g\tau_g\eta_g(t)}\right)^2\cdot\frac{C_g\eta_g^2(t)}{2\gamma_2(t)} +C_{55}C_{53}\right)\cdot \norm{(I-R)\cdot \yy(t)}^2\\
    &\quad + \left(\frac{\gamma_1(t)}{2}+C_{55}C_{54}\right) \cdot\norm{\nabla f(\bar{\xx}(t))}^2,
\end{align*}
where we define $C_{55} := \frac{L^2\eta_\ell^2(t)\cdot(1+2C_g\tau^2_g\eta_g^2(t))}{2\min\{N\gamma_1(t), \gamma_2(t)\}}$.

Substitute the above result in \eqref{eq:hat:int}, $\cE(t)$ can be bounded as{\footnotesize
\begin{align*}
        \cE(t) - \cE(0) \leq -\int^t_0\left((\frac{\gamma_1(\tau)}{2}-C_{51}(\tau))\cdot\norm{\nabla f(\bar{\xx}(\tau))}^2 - (\frac{\gamma_2(\tau)}{2} - C_{52}(\tau))\cdot\norm{(I - R)\cdot\yy(\tau)}^2\right)\md \tau,
    \end{align*}}%
where $C_{51} := C_{55}C_{54}$, $C_{52} := \left(\frac{C_g\tau_g\eta_g(\tau)}{1-C_g\tau_g\eta_g(\tau)}\right)^2\cdot\frac{C_g\eta_g^2(\tau)}{2\gamma_2(\tau)} +C_{55}C_{53}$. The proof is completed.
}

\section{Verify Property P5 for DGT Algorithm}\label{sec:P5:DGT}
Recall that the derivative of the energy function is given by:
\small{
\begin{align}
    \dot{\cE}(t) & = -\lin{\nabla f(\bar{\xx}(t)), \frac{1}{N}\sum^N_{i=1}u_{\ell,x}(t)} -\lin{(I-R)\cdot\yy(t), u_{g,y}(t) + u_{\ell,y}(t)}\nonumber\\
    & \stackrel{\eqref{eq:GT:controller}}= -\lin{\nabla f(\bar{\xx}(t)), c\bar{\vv}(t)} -\lin{(I-R)\cdot\yy(t), (I-W)\cdot\yy(t)}\label{eq:E:dynamics:supp}\\
    & \qquad- \lin{(I-R)\cdot\xx(t), c \vv(t)} + \lin{(I-R)\cdot\vv(t), \nabla f(\xx(t)) - \nabla f(\zz(t))}.\nonumber
\end{align}
}
Then we bound each term on the RHS above separtately.

To bound the first term, note that:
\begingroup
{\allowdisplaybreaks}
{\small
\begin{align*}
    \frac{c}{2}\norm{\nabla f(\bar{\xx}(t)) - \bar{\vv}(t)}^2 & = \frac{c}{2}\norm{\nabla f(\bar{\xx}(t))- \frac{1}{N}\sum^N_{i=1}\nabla f(x_i(t)) + \frac{1}{N}\sum^N_{i=1}\nabla f(x_i(t)) - \bar{\vv}(t)}^2\\
    & \stackrel{(i)}{\leq} c\left(\frac{1}{N}\sum^N_{i=1}\norm{\nabla f(\bar{\xx}(t)) - \nabla f(x_i(t))}^2 + \norm{\frac{1}{N}\sum^N_{i=1}\nabla f(x_i(t)) - \bar{\vv}(t)}^2\right)\\
    & \stackrel{(ii)}{\leq} c\left(\frac{L_f}{N}\sum^N_{i=1}\norm{\bar{\xx}(t) - x_i(t)}^2 + \norm{\frac{1}{N}\sum^N_{i=1}\nabla f(x_i(t)) - \bar{\vv}(t)}^2\right)\\
    & \stackrel{(iii)}{\leq} c\left(\frac{L_f}{N}\norm{(I-R)\cdot\xx(t)}^2 +  \norm{\frac{\bone^T}{N}\nabla f(\xx(t)) - \bar{\vv}(t)}^2\right),
\end{align*}}%
\endgroup%
where in $(i)$ we apply \eqref{eq:ab} and Jensen's inequality; in $(ii)$ we apply \asref{as:smooth}; in $(iii)$ we substitute the definition of $R$. From \eqref{eq:gt:state_v},
$ \bar{\vv}(t) = \frac{\bone^T}{N}\nabla f(\xx(t)),$ and we have{\small
\[\frac{c}{2}\norm{\nabla f(\bar{\xx}(t)) - \bar{\vv}(t)}^2 \leq c\frac{L_f}{N}\norm{(I-R)\cdot\xx(t)}^2.\]}%
{So the first term in \eqref{eq:E:dynamics} can be bounded as
\begin{align}
    -\lin{\nabla f(\bar{\xx}(t)), c\bar{\vv}(t)} & = -\frac{c}{2}\left(\norm{\nabla f(\bar{\xx}(t))}^2 +\norm{\bar{\vv}(t)}^2-\norm{\nabla f(\bar{\xx}(t)) - \bar{\vv}(t)}^2\right)\\
    & \le -\frac{c}{2}\left(\norm{\nabla f(\bar{\xx}(t))}^2 +\norm{\bar{\vv}(t)}^2-\frac{2L_f}{N}\norm{(I-R)\cdot\xx(t)}^2\right) \nonumber.
\end{align}

}

The second term in \eqref{eq:E:dynamics}  can be bounded  by directly applying \pref{as:gg_decrease}. That is, we have:
\begin{align*}
     -\lin{(I-R)\cdot\yy(t), (I-W)\cdot\yy(t)} \leq -C_g \norm{(I-R)\cdot\yy(t)}^2.
\end{align*}
Next, the third term in \eqref{eq:E:dynamics} can be bounded as: 
\begin{align*}
    & -c  \lin{(I-R)\cdot\xx(t), \vv(t)}  \stackrel{\eqref{eq:IR}}= -c \lin{(I-R)\cdot\xx(t), (I-R)\cdot\vv(t)} \\
    & \stackrel{\eqref{eq:ab}}\leq \frac{c}{2}\cdot\left(\norm{(I-R)\cdot\xx(t)}^2 + \norm{(I-R)\cdot\vv(t)}^2\right) = \frac{c}{2}\norm{(I-R)\cdot\yy(t)}^2.
\end{align*}

Finally, we bound the last term in \eqref{eq:E:dynamics} by:
{\small
\begin{align*}
    &\lin{(I-R)\cdot\vv(t), \nabla f(\xx(t)) - \nabla f(\zz(t))} \stackrel{\eqref{eq:IR}}=  \lin{(I-R)\cdot\vv(t), (I-R)\cdot(\nabla f(\xx(t)) - \nabla f(\zz(t)))}\\
    &\stackrel{\eqref{eq:ab}}\leq \frac{\beta}{2}\norm{(I-R)\cdot\vv(t)}^2 + \frac{1}{2\beta}\norm{(I-R)\cdot(\nabla f(\xx(t)) - \nabla f(\zz(t)))}^2\\
    &\stackrel{(i)}= \frac{\beta}{2}\norm{(I-R)\cdot\vv(t)}^2 + \frac{1}{2\beta N}\sum^N_{i=1}\norm{\frac{\bone^T}{N}\nabla f(\xx(t)) -\nabla f_i(x_i(t)) - \frac{\bone^T}{N}\nabla f(\zz(t)) +\nabla f_i(z_i(t))}^2,
\end{align*}
}%
where $(i)$ is due to $R := \frac{1}{N}\bone \bone^T$. The last term above can be further bounded by:
\begingroup
\allowdisplaybreaks
{\footnotesize
\begin{align*}
    &\frac{1}{2\beta N}\sum^N_{i=1}\norm{\frac{\bone^T}{N}\nabla f(\xx(t)) -\nabla f_i(x_i(t)) - \frac{\bone^T}{N}\nabla f(\zz(t)) +\nabla f_i(z_i(t))}^2\\
    & \stackrel{(i)}= \frac{1}{2\beta N}\sum^N_{i=1}\norm{\left(\frac{\bone^T}{N}\nabla f(\xx(t))- \nabla f(\bar{\xx}(t))\right) + \left(\nabla f(\bar{\xx}(t)) - \frac{\bone^T}{N}\nabla f(\zz(t))\right) - (\nabla f_i(x_i(t))-\nabla f_i(z_i(t)))}^2\\
    & \leq \frac{2}{\beta N}\sum^N_{i=1}\left(\norm{\frac{\bone^T}{N}\nabla f(\xx(t))- \nabla f(\bar{\xx}(t))}^2+ \norm{\nabla f(\bar{\xx}(t))- \frac{\bone^T}{N}\nabla f(\zz(t))}^2 + \norm{\nabla f_i(x_i(t))- \nabla f_i(z_i(t))}^2\right)\\
    &\stackrel{(ii)}\leq \frac{2L_f}{\beta}(\norm{(I-R)\cdot\xx(t)}^2 + \norm{\bar{\xx}(t) - \zz(t)}^2 + \norm{\xx(t) - \zz(t)}^2)\\
    &= \frac{2L_f}{\beta}(\norm{(I-R)\cdot\xx(t)}^2 + \norm{\bar{\xx}(t) - \bar{\zz}(t) + \bar{\zz}(t)-\zz(t)}^2 +  \norm{\xx(t) -\bar{\xx}(t) + \bar{\xx}(t) -\bar{\zz}(t) + \bar{\zz}(t)- \zz(t)}^2)\\
    & \leq \frac{8L_f}{\beta}(\norm{(I-R)\cdot\xx(t)}^2 + \norm{\bar{\xx}(t) - \bar{\zz}(t)}^2 + \norm{(I-R)\cdot\zz(t)}^2),
\end{align*}}
\endgroup%
where in $(i)$ we add and substracts $\nabla f(\bar{\xx}(t))$; in $(ii)$ we apply \asref{as:smooth}. 

Finally, we analyze $\norm{\bar{\xx}(t) - \bar{\zz}(t)}^2$:
\begin{align*}
    \norm{\bar{\xx}(t)- \bar{\zz}(t)}^2 &\stackrel{\eqref{eq:gt:dynamics}} = \norm{\frac{\bone^T}{N}\int^t_{0}\left((I-W)\cdot\xx(\tau) - c\vv(\tau)\right)\e^{-(t-\tau)}\md \tau}^2\\
    &\stackrel{(i)} = c^2\norm{\int^t_{0} \bar{\vv}(\tau)\e^{-(t-\tau)}\md \tau}^2 \stackrel{(ii)} \leq c^2\int^t_{0}e^{-(t-\tau)}\md\tau\cdot\int^t_{0} \norm{\bar{\vv}(\tau)}^2\e^{-(t-\tau)}\md \tau\\
    &\leq c^2\int^t_{0} \norm{\bar{\vv}(\tau)}^2\e^{-(t-\tau)}\md \tau,
\end{align*}
where in $(i)$ we apply \eqref{eq:gg_bound}, that $\bone^T W_A =0$, and in this case $W_A = (I-W)$; in $(ii)$ we use Cauchy–Schwarz inequality to break the integration. 

Plugging in the above into \eqref{eq:GT:controller}, the final bound we have is:
\small{
\begin{align}
    \dot{\cE}(t) &\leq -\frac{c}{2}\norm{\nabla f(\bar{\xx}(t))}^2 -\frac{c}{2}\norm{\bar{\vv}(t)}^2 + \frac{8L_fc^2}{\beta}\int^t_{0} \norm{\bar{\vv}(\tau)}^2\e^{-(t-\tau)}\md \tau\\
    &\quad- \bigg(C_g - \frac{c+2cL_f/N+\beta + 16cL_f/\beta}{2}\bigg)\cdot\norm{(I-R)\cdot\yy(t)}^2.\nonumber
\end{align}
}
Integrating the above relation over time, we have:
\begingroup
\allowdisplaybreaks
\small{
\begin{align*}
    \int^t_{0}\dot{\cE}&(\tau)\md \tau \leq -\frac{c}{2}\int^t_{0}\norm{\nabla f(\bar{\xx}(\tau))}^2\md \tau  + \frac{8L_fc^2}{\beta}\int^t_{0}\int^\tau_{0}\norm{\bar{\vv}(\tau_1)}^2\e^{-(\tau-\tau_1)}\md \tau_1 \md \tau\\
    &\quad-\frac{c}{2}\int^t_{0}\norm{\bar{\vv}(\tau)}^2\md \tau- \left(C_g - \frac{c+2cL_f+\beta + 16cL_f/\beta}{2}\right)\cdot\int^t_{0}\norm{(I-R)\yy(\tau)}^2\md \tau \nonumber\\
     & \stackrel{(i)} = -\frac{c}{2}\int^t_{0}\norm{\nabla f(\bar{\xx}(\tau))}^2\md \tau  + \frac{8L_fc^2}{\beta}\int^t_{0} \left(\norm{\bar{\vv}({\tau_1})}^2\int^t_{\tau_1}\e^{-(\tau-\tau_1)}\md \tau \right)\md \tau_1\\
     &\quad-\frac{c}{2}\int^t_{0}\norm{\bar{\vv}(\tau)}^2\md \tau- \left(C_g - \frac{c+2cL_f+\beta + 16cL_f/\beta}{2}\right)\cdot\int^t_{0}\norm{(I-R)\cdot\yy(\tau)}^2\md \tau\nonumber\\
    &\stackrel{(ii)}\leq -\frac{c}{2}\int^t_{0}\norm{\nabla f(\bar{\xx}(\tau))}^2\md \tau -\frac{c-8L_fc^2/\beta}{2}\int^t_{0}\norm{\bar{\vv}(\tau)}^2\md \tau\\
    &\quad- (C_g - \frac{c+2cL_f+\beta + 16cL_f/\beta}{2})\cdot\int^t_{0}\norm{(I-R)\cdot\yy(\tau)}^2\md \tau,\nonumber
\end{align*}
}%
\endgroup
where in $(i)$ we switch the order of integration; 
in $(ii)$ we apply that
 {\small $\int^t_{\tau_1}\e^{-(t-\tau)}\md \tau \leq 1$.}
 

\end{document}